\documentclass{article}

\usepackage{microtype}
\usepackage{graphicx}
\usepackage{subcaption}  
\usepackage{booktabs} 

\usepackage{hyperref}

\usepackage{algorithm}
\usepackage{algorithmic}
\usepackage{algpseudocode}
\usepackage{amsmath,amsthm}

\usepackage[preprint]{icml2026}

\usepackage{amsmath}
\usepackage{amssymb}
\usepackage{mathtools}
\usepackage{amsthm}

\usepackage[capitalize,noabbrev]{cleveref}

\usepackage[utf8]{inputenc} 
\usepackage[T1]{fontenc} 
\usepackage{url} 
\usepackage{amsfonts} 
\usepackage{nicefrac} 
\usepackage{xcolor} 
\usepackage{mathrsfs}
\usepackage{enumerate}
\usepackage{wrapfig}
\usepackage{float}
\usepackage{bbding}
\usepackage{multirow}
\usepackage[misc]{ifsym}
\usepackage{placeins}
\usepackage{import}
\usepackage{tikz}
\usetikzlibrary{graphs}
\usetikzlibrary{positioning}

\theoremstyle{plain}

\newtheorem{proposition}{Proposition}

\theoremstyle{definition}
\newtheorem{definition}{Definition}
\theoremstyle{remark}

\numberwithin{theorem}{section}
\numberwithin{proposition}{section}
\numberwithin{lemma}{section}
\numberwithin{corollary}{section}
\numberwithin{definition}{section}
\numberwithin{remark}{section}

\newcommand{\CommentGray}[1]{\textcolor[gray]{0.6}{\Comment{#1}}}

\usepackage{amsmath}
\usepackage{amsfonts}
\usepackage{bm}
\usepackage{esint}

\def\eqref#1{equation~\ref{#1}}

\def\1{\bm{1}}

\def\rvb{{\mathbf{b}}}
\def\rvc{{\mathbf{c}}}

\def\rvh{{\mathbf{h}}}

\def\rvv{{\mathbf{v}}}
\def\rvw{{\mathbf{w}}}
\def\rvx{{\mathbf{x}}}
\def\rvy{{\mathbf{y}}}

\def\vtheta{{\bm{\theta}}}

\def\vz{{\bm{z}}}

\def\mA{{\bm{A}}}
\def\mB{{\bm{B}}}

\def\mH{{\bm{H}}}
\def\mI{{\bm{I}}}

\def\mK{{\bm{K}}}

\def\mQ{{\bm{Q}}}

\def\mV{{\bm{V}}}
\def\mW{{\bm{W}}}
\def\mX{{\bm{X}}}

\DeclareMathAlphabet{\mathsfit}{\encodingdefault}{\sfdefault}{m}{sl}
\SetMathAlphabet{\mathsfit}{bold}{\encodingdefault}{\sfdefault}{bx}{n}

\newcommand{\Eqn}{Eqn.~}

\usepackage{amsmath,amsfonts,bm}

\def\eqref#1{equation~\ref{#1}}

\def\1{\bm{1}}

\def\rvb{{\mathbf{b}}}
\def\rvc{{\mathbf{c}}}

\def\rvh{{\mathbf{h}}}

\def\rvv{{\mathbf{v}}}
\def\rvw{{\mathbf{w}}}
\def\rvx{{\mathbf{x}}}
\def\rvy{{\mathbf{y}}}

\def\vtheta{{\bm{\theta}}}

\def\vz{{\bm{z}}}

\def\mA{{\bm{A}}}
\def\mB{{\bm{B}}}

\def\mH{{\bm{H}}}
\def\mI{{\bm{I}}}

\def\mK{{\bm{K}}}

\def\mQ{{\bm{Q}}}

\def\mV{{\bm{V}}}
\def\mW{{\bm{W}}}
\def\mX{{\bm{X}}}

\DeclareMathAlphabet{\mathsfit}{\encodingdefault}{\sfdefault}{m}{sl}
\SetMathAlphabet{\mathsfit}{bold}{\encodingdefault}{\sfdefault}{bx}{n}

\newcommand{\R}{\mathbb{R}}

\icmltitlerunning{Enjoy Your Layer Normalization with the Computational Efficiency of RMS\-Norm}

\begin{document}
	
	\twocolumn[
	\icmltitle{Enjoy Your Layer Normalization \\ with the Computational Efficiency of RMS\-Norm}
	
	
	
	\icmlsetsymbol{equal}{*}
	
	\begin{icmlauthorlist}

        \icmlauthor{Yuxin Guo}{inst1}
		\icmlauthor{Yihao Yue}{inst1}
        \icmlauthor{Yunhao Ni}{inst1}
		\icmlauthor{Yizhou Ruan}{inst1}
        \icmlauthor{Jie Luo}{inst1}
		\icmlauthor{Wenjun Wu}{inst1,inst2}
		\icmlauthor{Lei Huang~$^\textrm{\Letter}$}{inst1,inst2}
	\end{icmlauthorlist}
	
	\icmlaffiliation{inst1}{State Key Laboratory of Complex and Critical Software Environment (SKLCCSE),  Beihang University, Beijing, China}
	\icmlaffiliation{inst2}{Hangzhou International Innovation Institute, Beihang University, Hangzhou, China}
	
	\icmlcorrespondingauthor{Lei Huang}{huangleiAI@buaa.edu.cn}
	
	\icmlkeywords{Machine Learning, ICML, Layer Normalization, Universal Approximation}
	\vskip 0.3in
	]
	


\printAffiliationsAndNotice{}  
	
\begin{abstract}
Layer normalization (LN) is a fundamental component in modern deep learning, but its per-sample centering and scaling introduce non-negligible inference overhead. RMSNorm improves efficiency by removing the centering operation, yet this may discard benefits associated with centering. This paper propose a framework to determine whether an LN in an arbitrary DNN can be replaced by RMSNorm without changing the model function. The key idea is to fold LN’s centering operation into upstream general linear layers by enforcing zero-mean outputs through the column-centered constraint (CCC) and column-based weight centering (CBWC). We extend the analysis to arbitrary DNNs, define such LNs as foldable LNs, and develop a graph-based detection algorithm. Our analysis shows that many LNs in widely used architectures are foldable, enabling exact inference-time conversion and end-to-end acceleration of 2\% to 12\% without changing model predictions. Experiments across multiple task families further show that, when exact equivalence is partially broken in practical training settings, our method remains competitive with vanilla LN while improving efficiency.
\end{abstract}

\section{Introduction}
Normalization techniques are widely used in deep neural networks (DNNs) to stabilize and accelerate training~\citep{2023_TPAMI_Huang}. As a seminal work, Batch Normalization (BN)~\citep{2015_ICML_Ioffe} improves the training stability and the optimization efficiency of DNN by standardizing (centering and scaling) the activations of intermediate DNN layers within a mini-batch of data during training. During inference, BN uses population statistics calculated in training for normalization, and this operation can be folded into upstream linear layers~\citep{2018_CVPR_Jacob}, avoiding additional computational cost. Despite many merits, BN suffers from the train-inference inconsistent problem, leading to significantly degenerate performance in the scenarios of small batch sizes training and domain shifted distributions~\citep{2023_TPAMI_Huang}. 

Layer Normalization (LN)~\citep{2016_LN_Ba} addresses BN's train-inference inconsistency by standardizing the inputs of layers across neurons for each individual sample. It has become a key component of Transformer~\citep{2017_NIPS_Vaswani} and its variants~\citep{2019_ACL_Dai,2020_ICML_Xiong,2021_ICLR_Dosovitskiy}, spreading from the Natural Language Processing (NLP)~\citep{Radford2018GPT1,2019_ACL_Devlin,2020_JMLR_Raffel} to Computer Vision (CV)~\citep{2021_ICLR_Dosovitskiy,2020_ECCV_Carion,2022_CVPR_Cheng} communities. LN plays a foundational role~\citep{2023_TPAMI_Huang} in the evolution of neural architectures and is a standard component in many foundation models~\citep{Brown2020GPT3,2022_NIPS_Flamingo,2023_ICCV_Kirillov}. However, during inference, unlike BN and other normalization-free counterparts, LN must perform per-sample normalization for each sample which introduces non-negligible additional computational cost.

To alleviate the computational burden of LN, RMS\-Norm \citep{2019_NIPS_Zhang} was proposed as a scaling-only normalization method. According to the experiments in~\citep{2019_NIPS_Zhang}, RMS\-Norm is reported to reduce the running time of LN by 7\%--64\% across different models. Owing to its great potential in practice for computational efficiency, RMS\-Norm has been widely adopted in various architectures~\citep{2024_Tinyllama_Zhang,2024_Gemma,2024_OpenELM}. However, by removing the centering operation, RMS\-Norm may lose some of the conditioning-related benefits commonly associated with centering in prior work~\citep{1990_NIPS_LeCun,1998_Schraudolph,2012_NN_Gregoire,2016_LN_Ba,2017_ICCV_Huang}. This raises the question of how we can retain the theoretical advantages of LN while enjoying the computational cost of RMS\-Norm with mathematical equivalence. 

A previous study~\citep{jiang2023pre} showed that LN can be reduced to RMS\-Norm in pre-norm Transformers by removing redundant mean information in the main branch, which arises from preprocessing and residual connections. However, their analysis and resulting modification are restricted to the specific structures of pre-norm Transformers, leaving open the question of whether a more general framework can be developed for arbitrary DNNs.

In this work, we propose a general framework for determining whether an LN layer can be replaced with RMS\-Norm without changing model predictions at inference time or optimization dynamics during training. We first study the case where LN follows a linear layer. In this setting, we formally characterize when the centering operation of an LN becomes redundant and can therefore be removed. To this end, we introduce the column-centered constraint (CCC) (Def.~\ref{def:CCC}) on the immediately preceding linear layer, enforced through column-based weight centering (CBWC) (Def.~\ref{def:cb}), a reparameterization of the weight matrix. We show that the CBWC+RMS\-Norm scheme is mathematically equivalent to the vanilla model in both inference and training.

We then extend the analysis to general neural networks by introducing the notion of a foldable LN (Def.~\ref{def:foldable}) and its associated zero-mean graph (Def.~\ref{def:zero-mean-graph}). These concepts characterize, from a graph perspective, the structural conditions under which the centering operation of LN can be folded into upstream general linear layers, thereby allowing LN to be replaced by RMSNorm. Specifically, an LN is foldable when all leaf nodes in its zero-mean graph correspond to general linear layers or layers that already guarantee zero-mean output, so that the zero-mean property required by LN can be enforced through CBWC on the corresponding upstream layers. This analysis also clarifies practical cases in which the centering operation of some LNs can be removed through auxiliary centering rather than through CBWC.

Finally, we present an algorithm for detecting foldable LNs and identifying the corresponding upstream layers requiring CBWC in arbitrary DNNs. Our analysis shows that most LNs in widely used architectures are foldable, enabling straightforward inference acceleration and yielding end-to-end inference acceleration of 2\% to 12\%. In addition, experiments across multiple task families show that even when exact folding conditions do not fully hold in practical training settings, CBWC+RMSNorm achieves performance comparable to LN while improving efficiency especially in long-sequence tasks. Our code is available at \url{https://github.com/BobYue-01/Enjoy-LN}.

\section{Notation and Preliminaries}
\label{sec:notation}
We use $x \in \mathbb{R}$, $\rvx \in \mathbb{R}^n$, and $\mX \in \mathbb{R}^{m \times n}$ to denote a scalar, a vector, and a matrix, respectively, where $\mathbb{R}$ is the set of real numbers and $m,n \in \mathbb{N}_{+}$. We use $\boldsymbol{1}_n$ to denote the $n$-dimensional all-one column vector.

\paragraph{\textbf{Neural Network.}} 
We model a neural network as a function $f(\rvx; \vtheta)$, where $\rvx$ is the input and $\vtheta \in \Theta$ denotes the set of all learnable parameters, partitioned layer-wise as $\vtheta = \{ \vtheta^{(k)} \}_{k=1}^L$. For an $L$-layer MLP, the forward pass consists of alternating linear transformations,\footnote{Following deep learning conventions, we do not distinguish between linear and affine transformations. The bias term is omitted here for simplicity; its inclusion is discussed in Appendix~\ref{apx:bias}.} normalization, and activation:
\begin{eqnarray}
 		\label{eqn:Linear}
 		\rvh^{(k)}&=&\mW(\vtheta^{(k)})\  \rvx^{(k-1)},   \\
        \widehat{\rvh}^{(k)}&=&\mathrm{Norm}(\rvh^{(k)}) \\
 		\label{eqn:Nonlinear}
 		\rvx^{(k)}&=&\phi(\widehat\rvh^{(k)}), \quad k=1,\dots, L,
\end{eqnarray}
where $\rvx^{(0)} = \rvx$ is the input, and $\mW(\vtheta^{(k)})\in \mathbb R^{m\times n}$ is the weight matrix of layer $k$.\footnote{For convenience of notation, we simplify $\vtheta^{(k)}$ to $\vtheta_k$ in this paper.}

\paragraph{\textbf{Layer Normalization.}} Layer Normalization (LN) is a fundamental component in modern deep neural networks. Given a layer input vector $\rvh = [h_1, h_2, \dots, h_n]^\top \in \mathbb{R}^n$, LN normalizes $\rvh$ across its $n$ elements (neurons) through centering and scaling: \footnote{In practice, LN typically includes a learnable affine transformation after normalization, which is omitted here for simplicity. The term $\epsilon$ prevents division by zero.}
{
\setlength{\abovedisplayskip}{1ex} 
\setlength{\belowdisplayskip}{0ex} 
\begin{align}
    \label{eqn:centering}
  \mathrm{Centering:} \quad &\widetilde{h}_j = h_j - \mu, \quad j = 1, 2, \dots, n,\\
  \label{eqn:scaling}
  \mathrm{Scaling:} \quad &\widehat{h}_j = \frac{\widetilde{h}_j}{\sqrt{\sigma^2 + \epsilon}}, \quad j = 1, 2, \dots, n,
\end{align}
}
where $\mu=\frac{1}{n}\sum_{j=1}^{n} h_j$ is the mean of $\rvh$ and $\sigma^2 = \frac{1}{n} \sum_{j=1}^{n} \widetilde{h}_j^2$ is the second moment of the centered vector $\widetilde{\rvh}=[\widetilde{h}_1,\widetilde{h}_2,\cdots,\widetilde{h}_n]^\top$. \textit{Centering} enforces zero mean across elements, and the subsequent \textit{scaling} step ensures unit second moment, corresponding to the normalization retained in RMS\-Norm.

By definition, RMS\-Norm is computationally simpler than LN because it removes the centering step. However, although RMS\-Norm enjoys faster speed, it does not explicitly constrain the sample mean, which can lead to a less stable output range and larger variation in activations across layers (see Section~\ref{exp:ln} and Appendix~\ref{apx:ablation} for details).

In this paper, we aim to reduce the computational overhead of LN by replacing it with RMS\-Norm. However, a direct substitution may disrupt training dynamics and degrade performance. In Section~\ref{sec:ln-after-linear}, we present a mathematically equivalent method for replacing LN with RMS\-Norm when LN directly follows a linear layer. In Section~\ref{sec:framework}, we extend this analysis to arbitrary DNNs and propose an algorithm for automatically detecting and replacing foldable LNs. Finally, in Section~\ref{sec:training}, we conduct comprehensive experiments to validate the effectiveness and efficiency of the proposed approach.

\section{Equivalent Replacement of LN after Linear Layers}
\label{sec:ln-after-linear}

\begin{figure}[t]
    \centering
    \includegraphics[height=30ex]{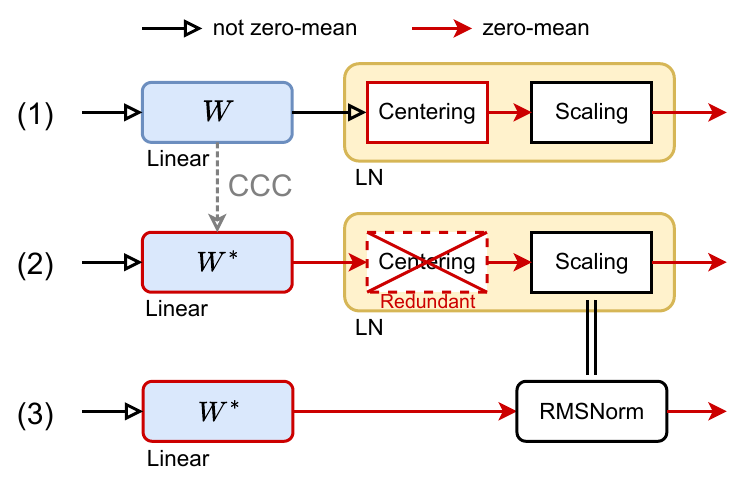}
    \caption{Overview of the method. $W$ denotes the weight matrix of a general linear layer, and $W^*$ is applied with CBWC which satisfies CCC.}
    \label{fig:overview}
    \vspace{-2ex}
\end{figure}

In this section, we present a theoretical analysis of the case where LN is applied directly after a linear layer, and show how it can be replaced with RMS\-Norm in a mathematically equivalent way. We introduce the \textit{column-centered constraint} (CCC), which enforces zero-mean output for the linear layer, as shown in Figure~\ref{fig:overview}. We further propose the \textit{column-based weight centering} (CBWC), a reparameterization that ensures the linear layer satisfies CCC during training. As a result, the CBWC+RMS\-Norm scheme is mathematically equivalent to the original LN-based model.

\subsection{Column-Centered Constraint}
\label{sec:ccc}
We begin with the common case in which LN is applied immediately after a linear layer. Intuitively, the centering step of LN becomes redundant if the output of the linear layer has zero mean across neurons. Since this zero-mean property must hold for all samples, it can be enforced directly through a suitable constraint on the parameters of the linear layer.

We write the linear layer as a linear transformation $\rvh(\rvx; \vtheta_k) = \mW(\vtheta_k)\; \rvx$, and denote the columns of $\mW(\vtheta_k)$ by $\rvw_i(\vtheta_k) \in \mathbb{R}^m$, i.e., $\mW(\vtheta_k) = \big[\rvw_1(\vtheta_k), \rvw_2(\vtheta_k), \dots, \rvw_n(\vtheta_k)\big]$. Under this notation, we define the \textit{column-centered constraint} as follows.

\begin{definition}[Column-Centered Constraint (CCC)]
    \label{def:CCC} 
    The parameter $\vtheta_k\in \Theta^{(k)}$ of a linear layer satisfies the \emph{column-centered constraint} if:
    {
    \setlength{\abovedisplayskip}{1ex}  
    \setlength{\belowdisplayskip}{1ex} 
    \begin{equation}
    \label{eqn:ccc-general}
        \vtheta_k \in\Theta^* = \left\{\vtheta : \rvw_i(\vtheta)^\top \boldsymbol{1}_m =0,\ i=1,2,\dots,n\ \right\}\subseteq \Theta^{(k)},
    \end{equation}
    }
    i.e., the elements in each column vector $\rvw_i(\vtheta_k)$ sum to zero. 
\end{definition}

\begin{proposition}[Zero-mean Output under CCC]
    \label{prop:zero-mean-ccc}
    If the parameters of a linear layer $\vtheta_k$ satisfy CCC, then for any input $\rvx \in \R^n$, the output 
    $\rvh = \mW(\vtheta_k)\; \rvx$
    has zero mean across its elements, i.e., 
    $\mu_h = \frac{1}{m} \boldsymbol{1}_m^\top \rvh = 0.$
\end{proposition}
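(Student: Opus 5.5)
The argument is a direct linear-algebra computation. We write the output as a combination of the columns of the weight matrix and then take the sum of its entries.

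First, using the column notation introduced before Definition~\ref{def:CCC}, we expand the matrix--vector product as
\begin{equation*}
\rvh = \mW(\vtheta_k)\,\rvx = \sum_{i=1}^{n} x_i\, \rvw_i(\vtheta_k).
\end{equation*}
Left-multiplying by $\boldsymbol{1}_m^\top$ and using linearity gives
\begin{equation*}
\boldsymbol{1}_m^\top \rvh = \sum_{i=1}^{n} x_i\, \boldsymbol{1}_m^\top \rvw_i(\vtheta_k) = \sum_{i=1}^{n} x_i\, \rvw_i(\vtheta_k)^\top \boldsymbol{1}_m .
\end{equation*}

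Next, we invoke CCC. Since $\vtheta_k \in \Theta^*$, each factor $\rvw_i(\vtheta_k)^\top \boldsymbol{1}_m$ equals $0$, so every term in the sum vanishes. The sum is therefore zero for every $\rvx \in \R^n$, because the conclusion does not depend on the values $x_i$. Dividing by $m$ gives $\mu_h = \frac{1}{m}\boldsymbol{1}_m^\top \rvh = 0$.

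An equivalent compact form is $\boldsymbol{1}_m^\top \mW(\vtheta_k) = \boldsymbol{0}^\top$, which gives the claim immediately by associativity.

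There is no real obstacle here. The only point to be careful about is that the bias term is excluded, as stated in the footnote; with a bias, one would additionally need $\boldsymbol{1}_m^\top \rvb = 0$, which is deferred to Appendix~\ref{apx:bias}.
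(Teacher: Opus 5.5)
Your proof is correct and follows the paper's argument in Appendix~\ref{apx:proof-ccc} essentially line for line: write $\rvh$ as a combination of the columns $\rvw_i(\vtheta_k)$ with coefficients $x_i$, move $\boldsymbol{1}_m^\top$ inside the sum, and use CCC to make each term vanish. Your remark on the bias also agrees with Appendix~\ref{apx:bias}, where the bias is absorbed as an extra column, so CCC on the augmented matrix enforces $\boldsymbol{1}_m^\top \rvb = 0$.
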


Prop.~\ref{prop:zero-mean-ccc} follows directly from Def.~\ref{def:CCC}; the full proof is given in Appendix~\ref{apx:proof-ccc}. Therefore, enforcing CCC on the linear layer folds the parameter space $\Theta^{(k)}$ into a subspace $\Theta^*$. Accordingly, the linear layer effectively performs the centering step in advance, making the subsequent use of LN or RMS\-Norm equivalent.

\paragraph{General Linear Layers.}
To go beyond standard linear layers, we regard any layer that applies a linear transformation to its input as a general linear layer. Typical examples include recurrent layers with shared weights in RNNs and convolution layers in CNNs. Their parameters can be written in the unified form of \Eqn\ref{eqn:Linear}, and the corresponding CCCs are derived in Appendix~\ref{apx:proof-of-ccc}.

\subsection{Column-Based Weight Centering}
\label{sec:cbwc}

To enforce CCC during training, we introduce a reparameterization technique \footnote{Reparameterization refers to transforming a model's parameter space to enforce desired constraints while preserving model capacity~\cite{Nowlan_Oja_Amari}.} called \textit{column-based weight centering}, illustrated in Figure~\ref{fig:cbwc-sketch} and formally defined below.

\begin{definition}[Column-Based Weight Centering (CBWC)]
\label{def:cb}
    \emph{Column-based weight centering} introduces a proxy parameter matrix $\mW$ by the effective weight matrix $\mV$ via the transformation
    {
    \setlength{\abovedisplayskip}{1ex}  
    \setlength{\belowdisplayskip}{1ex} 
    \begin{equation}
    \label{eqn:cbwc-fw}
        \mV = \varphi(\mW) = \left( \mI - \frac{1}{m} \boldsymbol{1}_m \boldsymbol{1}_m^\top \right) \mW,
    \end{equation}
    }
    where $m$ is the number of output neurons. During backpropagation, the gradient with respect to $W$ is given by
    {
    \setlength{\abovedisplayskip}{1ex}
    \setlength{\belowdisplayskip}{1ex} 
    \begin{equation}
        \frac{\partial \mathcal{L}}{\partial \mW} = \phi \left(\frac{\partial \mathcal{L}}{\partial \mV}\right) = \left(\mI - \frac{1}{m} \boldsymbol{1}_m \boldsymbol{1}_m^\top \right)^\top \frac{\partial \mathcal{L}}{\partial \mV}.
    \end{equation}}
\end{definition}

Note that CCC can be enforced in many ways. For example, the degenerate subspace $\Theta^* = \{\boldsymbol{0}\}$ satisfies Eqn.~\ref{eqn:ccc-general},  but it destroys the representation capacity of the layer. By contrast, CBWC is a structured reparameterization that enforces CCC while preserving mathematical equivalence with the original model.

\begin{proposition}[Equivalent Optimization Process]
    \label{prop:opt}
    Consider a model containing a general linear layer followed by LN. Replacing that general linear layer with its CBWC parameterization and replacing LN with RMS\-Norm yields an optimization process equivalent to that of the original model.
\end{proposition}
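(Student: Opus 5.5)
The plan is to establish equivalence at two levels: first the forward map, then the gradient dynamics. Write $\mathbf{P} = \mI - \frac{1}{m}\boldsymbol{1}_m\boldsymbol{1}_m^\top$ for the centering projection, which is symmetric and idempotent with $\mathbf{P}\boldsymbol{1}_m = \boldsymbol{0}$.

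\textbf{Step 1: the forward functions agree.} For any proxy $\mW$ and input $\rvx$, LN applied to $\mW\rvx$ first forms $\mathbf{P}\mW\rvx$ by \Eqn\ref{eqn:centering}. This equals $\mV\rvx$ with $\mV=\varphi(\mW)$. The columns of $\mV$ sum to zero, since $\boldsymbol{1}_m^\top\mathbf{P}=\boldsymbol{0}^\top$, so $\mV$ satisfies CCC. By Prop.~\ref{prop:zero-mean-ccc}, $\mV\rvx$ already has zero mean. Hence RMSNorm applied to $\mV\rvx$ equals the scaling step of \Eqn\ref{eqn:scaling} applied to $\mathbf{P}\mW\rvx$, including the $\epsilon$ term and the omitted affine part.

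It follows that the loss $\tilde{\mathcal L}(\mW,\text{rest})$ of the CBWC+RMSNorm model equals the loss $\mathcal L(\mW,\text{rest})$ of the original LN model for every parameter value, not only at a particular point. For general linear layers, the same argument uses the CCC derived in Appendix~\ref{apx:proof-of-ccc}, applied to the corresponding column structure.

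\textbf{Step 2: the gradients agree.} Because the two losses are identical as functions of all parameters, including $\mW$ and every other layer, their gradients with respect to all parameters coincide. I will verify this consistently with the stated backward rule. In the CBWC model, $\partial\tilde{\mathcal L}/\partial\mW = \mathbf{P}^\top \partial\mathcal L/\partial \mV$. In the LN model, the centering operation contributes a backward factor of $\mathbf{P}^\top$ on $\partial\mathcal L/\partial\rvh$. This yields $\mathbf{P}\,(\partial\mathcal L/\partial\widetilde{\rvh})\rvx^\top$, which is the same expression. Parameters of other layers see identical forward activations and identical downstream maps, so their gradients match as well.

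\textbf{Step 3: induction over optimization steps.} Starting both models from the same $(\mW,\text{rest})$, any first-order optimizer (SGD, momentum, Adam) receives identical gradients and hence performs identical updates. By induction on the step count, the parameter trajectories coincide, and the effective model functions coincide at every step.

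The main subtlety is pinning down what ``equivalent'' means. Equivalence holds in the proxy $\mW$ coordinates, with the original model's weight identified with the proxy; it is not an equivalence with a model trained directly in $\mV$. I will state this carefully. I will also note that the identity of losses as functions, rather than at a single point, is what makes the argument optimizer-agnostic, including for optimizers with state or weight decay applied to $\mW$.
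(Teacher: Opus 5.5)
Your proposal is correct and follows the same route as the paper. Both identify the proxy $\mW$ with the original weight, obtain the forward identity $\mathrm{LN}(\mW\rvx)=\mathrm{RMSNorm}(\mathbf{P}\mW\rvx)=\mathrm{RMSNorm}(\mV\rvx)$, and then match the backward pass, where your common gradient $\mathbf{P}\,(\partial\mathcal{L}/\partial\widetilde{\rvh})\,\rvx^\top$ is exactly the paper's.

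Your point that the two losses agree as functions of all parameters, not just at one point, is the right justification for the paper's remark that the gradients ``coincide as well.'' Together with your explicit induction over optimizer steps, it makes the argument optimizer-agnostic. It also covers $\partial\mathcal{L}/\partial\rvx$ without the transpose/ordering slips in the paper's explicit formula for that gradient.
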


\begin{figure}[t]
    \vspace{-0.05in}
    \centering
    \begin{subfigure}[t]{0.18\textwidth}
        \centering
        \includegraphics[height=24ex]{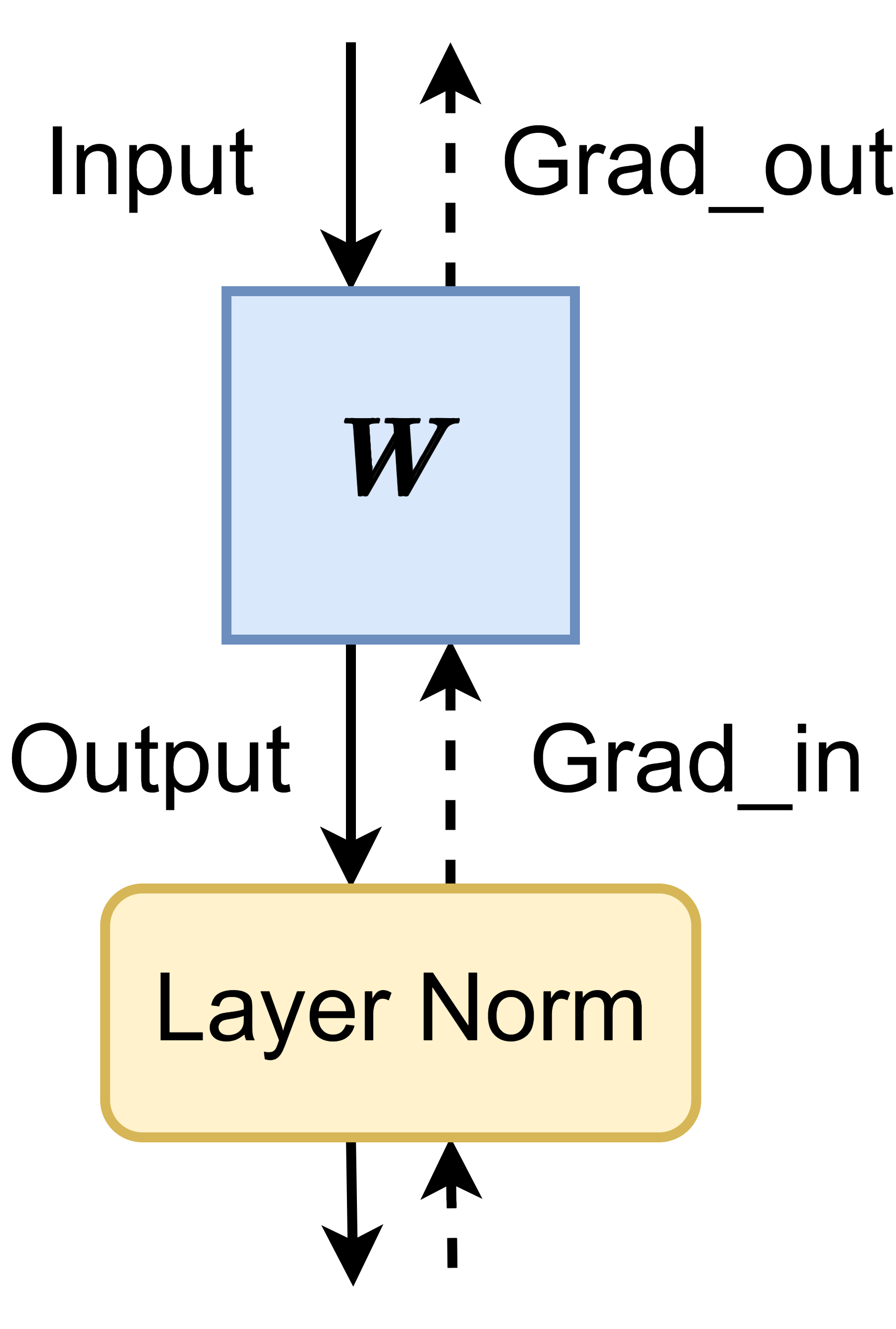}
        \caption{Origin.}
    \end{subfigure}
    \hspace{0.05in}	
    \begin{subfigure}[t]{0.23\textwidth}
        \centering
        \includegraphics[height=24ex]{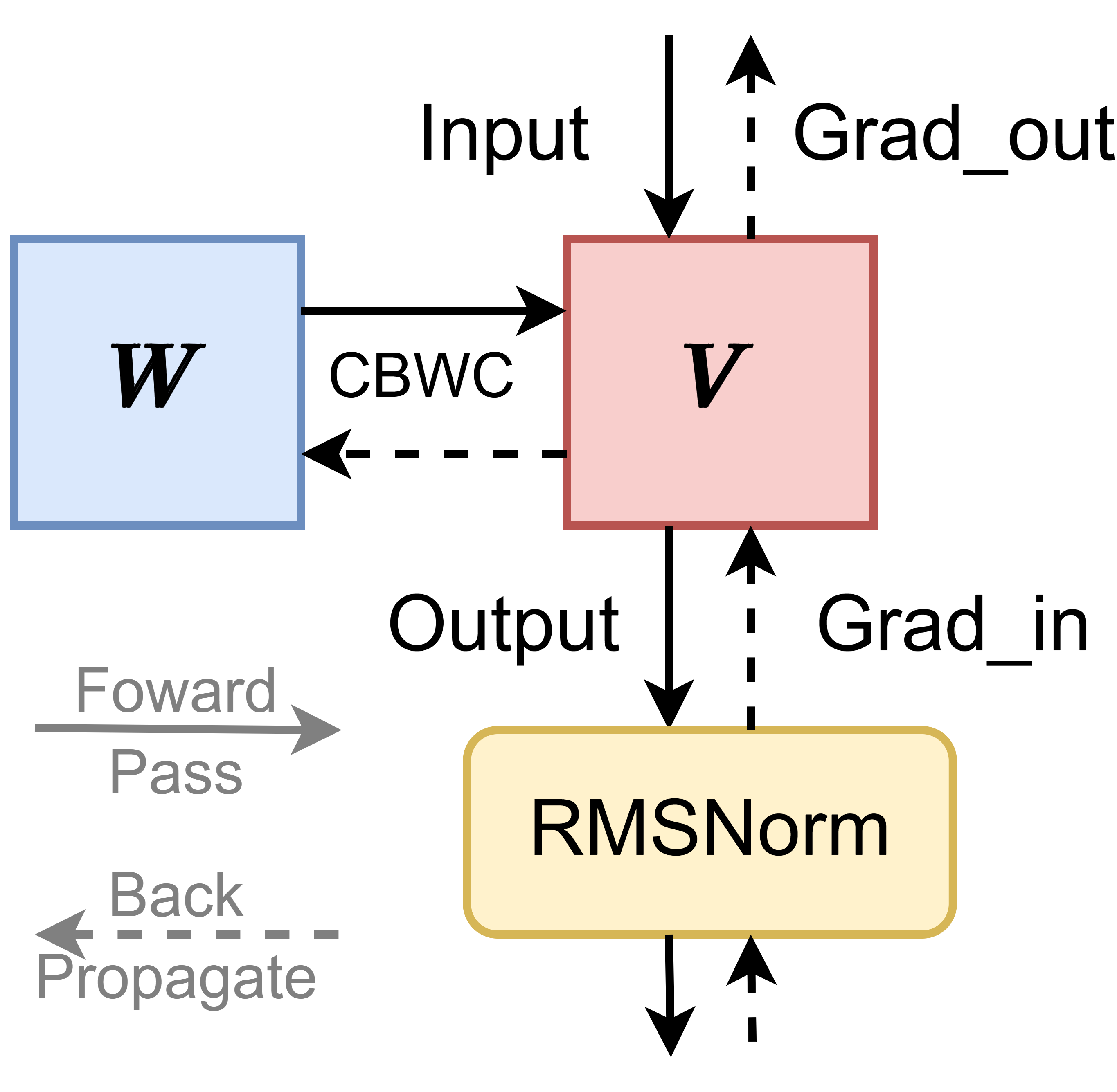}
        \caption{CBWC+RMS.}
    \end{subfigure}
    
    \caption{Sketch map of the two training scheme.}
            \label{fig:cbwc-sketch}
    \vspace{-0.2in}
\end{figure}

The proof of Prop.~\ref{prop:opt} is given in Appendix~\ref{apx:proof-equivalent-opt}. Once CBWC is applied to the upstream general linear layer, RMS\-Norm can replace the LN while producing identical outputs and gradients, yet at lower computational cost. The resulting CBWC+RMS\-Norm scheme therefore yields parameters that are mathematically equivalent to those of the original LN-based model, which we further discuss in Section~\ref{sec:fine-tune}.

\paragraph{Compatibility with Optimization Techniques.}
From the viewpoint of the optimizer, learning-rate scheduler, gradient clipping, and weight-decay implementation applied on the weight matrix, CBWC+RMS\-Norm is operationally indistinguishable from the original LN-based model.

\paragraph{Extension to Group Normalization.}
CCC and CBWC on general linear layers can be extended to Group Normalization (GN)~\citep{Wu_2018_ECCV}, which subsumes LN. The resulting grouped-CCC and grouped-CBWC are derived in Appendix~\ref{apx:gccc}.

\section{A Framework for Folding LN in Arbitrary DNNs}
\label{sec:framework}
In this section, we extend the analysis from Section~\ref{sec:ln-after-linear} to arbitrary DNNs and develop a general framework for determining whether an LN can be replaced with RMS\-Norm. We first introduce the notion of a foldable LN and analyze the structural conditions for LN folding in simple but common architectures, including sequential models and models with parallel connections. We then generalize this analysis to arbitrary neural networks from a graph perspective. Finally, we propose an algorithm for automatically detecting foldable LNs and identifying the corresponding upstream general linear layers requiring CBWC.

\subsection{Foldable LN in Simple but Common Cases}
\label{sec:foldable-ln}

We begin by studying when an LN in a DNN can be replaced with RMS\-Norm. Intuitively, an LN is foldable if replacing it with RMS\-Norm yields identical outputs for all inputs under an appropriate parameter constraint. We formalize this notion as follows.
\begin{definition}[Foldable LN]
\label{def:foldable}
Let $f(\cdot;\vtheta)$ denote a subnetwork that directly precedes an LN layer, and let $\Theta$ be the parameter space of this subnetwork. For a non-empty subset $\Theta^* \subseteq \Theta$, we say that this LN is \emph{foldable} with respect to $\Theta^*$ if, for all $\vtheta^* \in \Theta^*$ and all inputs $\rvx \in \R^n$, it holds that
\begin{equation}
    \mathrm{RMSNorm}(f(\rvx;\vtheta^*)) = \mathrm{LN}(f(\rvx;\vtheta^*)).
\end{equation}
\end{definition}
Although Def.~\ref{def:foldable} is stated for an arbitrary subset $\Theta^* \subseteq \Theta$, in the following analysis we focus on the case where $\Theta^*$ is induced by CBWC. This is because Prop.~\ref{prop:opt} establishes the mathematical equivalence between CBWC+RMS\-Norm and the original LN model in both inference and training. Therefore, restricting $\Theta^*$ allows us to keep the analysis concrete.

Extending the analysis from the simple setting in Section~\ref{sec:ln-after-linear}, whether an LN is foldable is determined by the structure of the model, namely, whether the input to the LN is produced through layers that preserve or enforce the zero-mean property. Therefore, to determine whether an LN is foldable, we backtrack the data flow path and analyze the layers that contribute to its input.

\paragraph{Sequential models.} 
Take MLPs as an example. In this case, each LN is foldable because it directly follows a linear layer, independently of previous layers or input activations. In more complex settings, additional layers may lie between the general linear layer and the target LN. Notably, layer involving only scalar operations (i.e., $\rvx\mapsto a\rvx$ for some $a\in\R$) preserve the zero-mean property of the sample. These include scale layers (e.g., dropout layer in inference mode) and constant multiplication layers (e.g., temperature scaling). Therefore, the centering operation in LN can still be folded into the parameters of the preceding linear transformation, even when the scalar operations are interspersed between the general linear layer and the LN. 

\paragraph{Parallel Connections.}
Modern neural networks often contain more complex connections, including parallel branches. Among them, the residual connection sums multiple branches. To eliminate the mean of the output of a residual connection, it is sufficient to eliminate the mean of each branch separately by linearity. Hence, the downstream LN can be folded by applying CBWC to the general linear layers on each branch. In contrast, other parallel connections, such as concatenation, break the required mathematical equivalence and therefore prevent the downstream LN from being foldable. A more detailed proof is provided in Appendix~\ref{proof:res}.

To systematically determine whether an LN is foldable in an arbitrary neural network, we next develop a graph-based analysis of how the original parameter space $\Theta$ can be effectively folded into a target parameter space $\Theta^*$ through CBWC.

\subsection{Analysis of Neural Networks from a Graph Perspective}
\label{sec:graph}

\begin{figure}[t]
    \centering
    \begin{subfigure}[t]{0.23\textwidth}
        \centering
        \includegraphics[height=7ex]{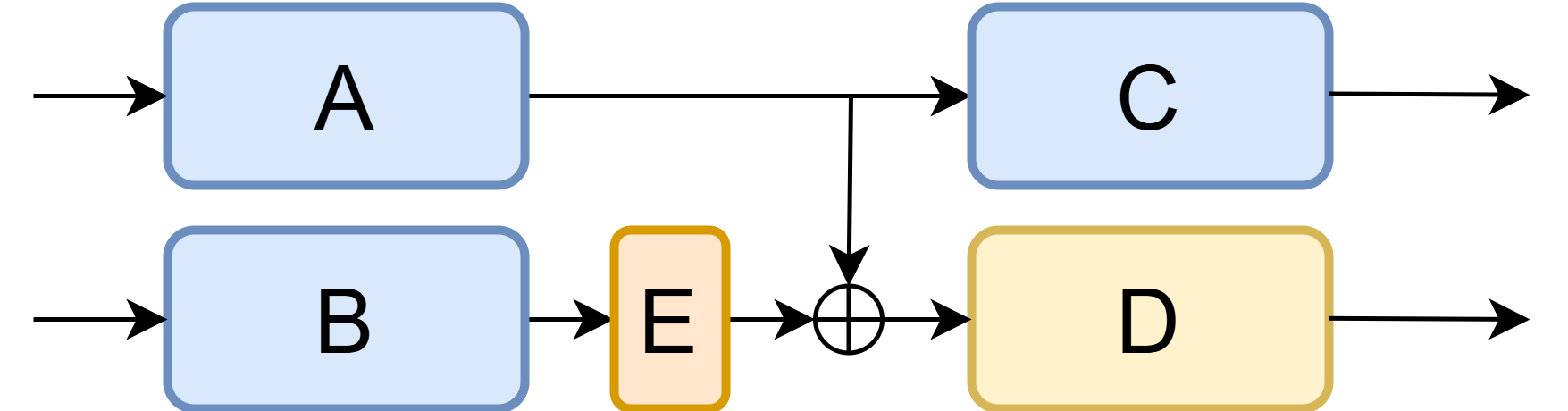}
        \caption{Original neural network.}
    \end{subfigure}
    \hspace{0.05in}	
    \begin{subfigure}[t]{0.23\textwidth}
        \centering
        \includegraphics[height=7ex]{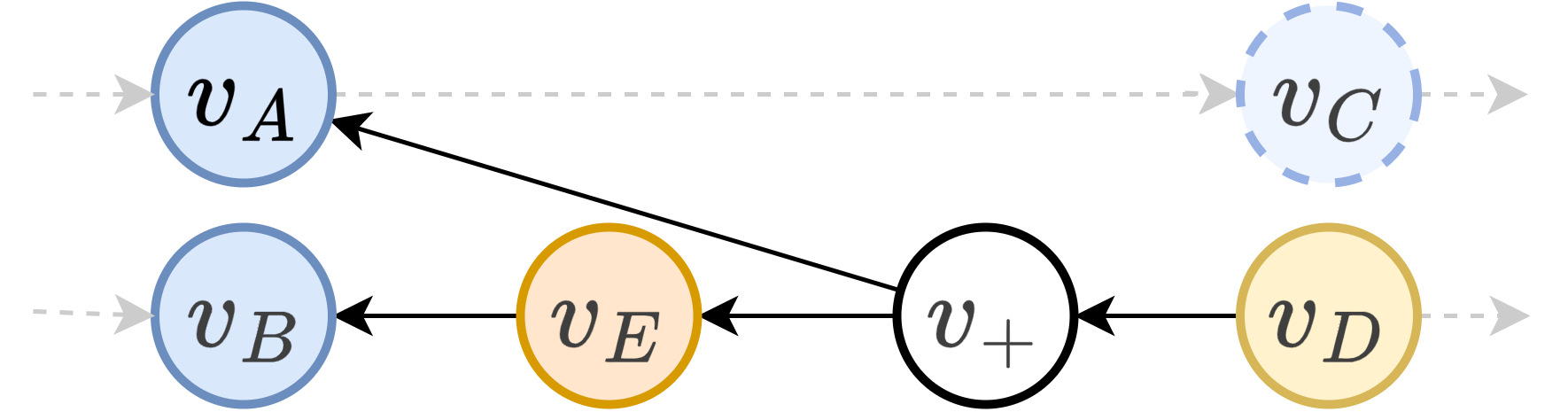}
        \caption{Zero-mean graph.}
    \end{subfigure}
    
    \caption{An example network and its associated zero-mean graph.  
    Layer D is an LN, layer E is a scalar operation, and $\oplus$ denotes residual addition. Layer D is foldable if and only if $v_A, v_B \in V_l$.}
    \label{fig:net-to-graph}
    \vspace{-0.16in}
\end{figure}

We model the network as a directed graph $G=(V,E)$. Each vertex $v\in V$ corresponds to a layer, represented as a function $f:\R^m\rightarrow\R^n$, for some $m,n \in \mathbb{N}_+$. Each directed edge $\langle u,v\rangle \in E \subseteq V\times V$ indicates that the output of layer $u$ is used as an input to layer $v$.

We denote by $V_l \subseteq V$ the subset of vertices corresponding to general linear layers. We further define $V_s := \left\{ f\mid\forall \rvx \in \mathbb{R}^m,\ f(\rvx) = a \cdot \rvx,\ \forall a \in \mathbb{R} \right\}$ as the set of scalar operations, and $V_r := \left\{ f \mid \forall \rvx_1, \rvx_2 \in \mathbb{R}^m,\ f(\rvx_1, \rvx_2) = \rvx_1 + \rvx_2,\ \right\}$ as the set of residual addition operations. For layers whose outputs are intrinsically zero-mean, we define $V_c:=\{f\mid\forall \rvx \in \mathbb{R}^m , f(\rvx)^\top \boldsymbol{1}_n=0\}$. All remaining operations are collected into $V_- := V \setminus (V_l \cup V_{s} \cup V_{r} \cup V_c)$. These typically include nonlinear operations such as ReLU or softmax, which in general disrupt the zero-mean property and hence prevent the downstream LN from being foldable.

To determine whether an LN is foldable, we verify whether its input maintains zero mean due to parameter constraints imposed on upstream layers. Starting from the vertex $v_{\mathrm{LN}}$ corresponding to the LN, we backtrack along the graph according to the following rules:
\begin{itemize}
    \item If the predecessor vertex belongs to $V_l$, zero-mean output can be ensured by applying CBWC.
    \item If it belongs to $V_s$, we continue backtracking through its predecessor.
    \item If it belongs to $V_r$, we backtrack through all of its input branches.
    \item If it belongs to $V_c$, the zero-mean property is already guaranteed.
    \item If it belongs to $V_-$, zero mean is not guaranteed, and thus LN folding fails.
\end{itemize}

\begin{algorithm}[ht]
\label{alg:detect-F-LN}
\caption{Detect foldable LNs and the corresponding upstream layers requiring CBWC.}
\begin{algorithmic}[1]
\State \textbf{Input:} Model $\mathcal{M}$ with input tensor $T_0^\mathrm{in}$
\State \textbf{Output:} Set $\mathcal{S}$ of foldable LNs and set $\mathcal{C}$ of corresponding upstream layers requiring CBWC
\State $T_0^\mathrm{in}.\textit{centered} \gets \texttt{False}$  \CommentGray{Initial tensor state.}
\State $\mathcal{S},\ \mathcal{C} \gets \emptyset$  \CommentGray{Initialize the output sets.} 

\For{each step $t$, layer $M_t \in \mathcal{M}$} 
 \Statex \CommentGray{Process the current layer.}
    \If{$T_t^\mathrm{in} \gets \sum_{i=1}^n T_i$} 
     \Statex\CommentGray{$T_t^\mathrm{in}$ corresponds to a residual connection.}
        \State $T_t^\mathrm{in}.\textit{centered} \gets \bigwedge_{i=1}^n T_i.\textit{centered}$ 
        \State $T_t^\mathrm{in}.\textit{corresponding} \gets \bigcup_{i=1}^n T_i.\textit{corresponding}$
        \Statex\CommentGray{Update the state and corresponding layer set.}
    \EndIf

    \If{$M_t = \text{LN} \text{ and } T_t^\mathrm{in}.\textit{centered} = \texttt{True}$}
        \State $\mathcal{S} \gets \mathcal{S} \cup \{M_t\}$
        \State $\mathcal{C} \gets \mathcal{C} \cup \{T_t^\mathrm{in}.\textit{corresponding}\}$
        \Statex\CommentGray{Record the foldable LN and its corresponding layers.}
    \EndIf
    
    \State $T_t^\mathrm{out} \gets M_t(T_t^\mathrm{in})$ \CommentGray{Compute the output tensor.}
    
    \If{$M_t \in V_s$}
        \State $T_t^\mathrm{out}.\textit{centered} \gets T_t^\mathrm{in}.\textit{centered}$ 
        \State $T_t^\mathrm{out}.\textit{corresponding} \gets T_t^\mathrm{in}.\textit{corresponding}$
        \Statex\CommentGray{Preserve the previous state and set.}
    \ElsIf{$M_t \in V_l \cup V_c $}
        \State $T_t^\mathrm{out}.\textit{centered} \gets \texttt{True}$
        \State $T_t^\mathrm{out}.\textit{corresponding} \gets M_t$
        \Statex\CommentGray{Update the state and corresponding layer.}
    \Else
        \State $T_t^\mathrm{out}.\textit{centered} \gets \texttt{False}$
        \State $T_t^\mathrm{out}.\textit{corresponding} \gets \emptyset$
        \Statex\CommentGray{Reset the state and corresponding layer set.}
    \EndIf

\EndFor
\State \textbf{Return:} $\mathcal{S}$ and $\mathcal{C}$
\end{algorithmic}
\end{algorithm}

\begin{definition}[Zero-Mean Graph]
\label{def:zero-mean-graph}
Given a neural network with computation graph $G=(V,E)$ and a target LN vertex $v_{\mathrm{LN}} \in V$, we recursively construct the corresponding \emph{zero-mean graph} $G_z=(V_z,E_z)$ as follows. Initialize $V_0=\{u\mid \langle u,v_{\mathrm{LN}}\rangle \in E\}$ and $E_0=\{\langle v_{\mathrm{LN}}, u\rangle \mid \langle u,v_{\mathrm{LN}}\rangle \in E\}$.
For each iteration $k=0,1,2,\dots$:
\vspace{-1ex}
\begin{enumerate}
    \item Let $A_k=\{v\in V_k\mid v\in(V_r\cup V_s)\}$, i.e., the subset of current leaf vertices that require further backtracking.
    \item Define $V_{k+1}=\bigcup_{v\in A_k}\{u\mid\langle u,v\rangle \in E\}$ by backtracking from every vertex in $A_k$. If $V_{k+1}=\emptyset$, the iteration terminates.
    \item Define $E_{k+1}=\bigcup_{v\in A_k}\{\langle v,u\rangle \mid\langle u,v\rangle \in E\}$ by adding the corresponding reversed edges to the graph.
\end{enumerate}
Define $V_z = \bigcup_{k=0}^\infty V_k, \ E_z = \bigcup_{k=0}^\infty E_k$. 
\end{definition}

The leaf nodes of a zero-mean graph belong to either $V_l$, $V_c$, or $V_-$. Therefore, the root LN is foldable if all leaf nodes correspond to general linear layers or layers that already guarantee zero-mean output. In that case, the LN can be replaced with RMS\-Norm by applying CBWC to the corresponding upstream general linear layers. 

For the remaining case in which a leaf node belongs to $V_-$, zero-mean output can still be enforced by inserting an explicit auxiliary centering operation after that layer, which we discuss as a practical extension. This makes the subsequent LN foldable, provided that the inserted centering does not violate the stricter criterion discussed in Appendix~\ref{sec:stricter-critieria}. Although such auxiliary centering introduces extra computational cost, it is still beneficial when the number of resulting foldable LNs exceeds the number of inserted centering operations.
Under this practical extension, all LNs in pre-norm and post-norm Transformers~\citep{2020_ICML_Xiong} can be folded (see Appendix~\ref{sec:gpt}).

\subsection{Algorithm for Detecting Foldable LNs}
\label{sec:alg}
Based on the above analysis, we propose an algorithm for detecting all foldable LNs in arbitrary neural networks and identifying the corresponding upstream layers requiring CBWC. Rather than explicitly backtracking from each LN to form its zero-mean graph, we track the state of each tensor together with its corresponding upstream layers requiring CBWC during a forward traversal. We present Algorithm~1.

Our algorithm automatically detects foldable LNs in arbitrary neural networks and identifies the corresponding upstream general linear layers requiring CBWC to enable mathematically equivalent replacement of LN with RMS\-Norm. We empirically evaluated Algorithm 1 on mainstream architectures including GPT-2~\citep{Radford2019GPT2}, BERT~\citep{2019_ACL_Devlin}, ViT~\citep{2021_ICLR_Dosovitskiy}, Phi~\citep{2023_Phi_Textbooks_Gunasekar}, BLOOM~\citep{2022_BLOOM_Scao}, and OPT~\citep{zhang2022opt}. Remarkably, all LNs in these models can be folded. We also fold most LNs in VMamba~\citep{liu2024vmamba}. Other widely used models such as LLaMA~\citep{touvron2023llama} and Mamba2~\citep{dao2024transformers} employ RMS\-Norm instead of LN by design. 
We note that the runtime and memory overhead of the folding procedure is minimal and can be amortized after only a few inference steps.
Further details are provided in Appendix~\ref{apx:f-ln-table}.

\section{Experiments}
\label{sec:training}

In this section, we empirically evaluate the effectiveness and efficiency of the proposed method. Since our goal is to retain the advantages of LN while reducing its computational cost, we first examine whether the centering operation in LN provides practical benefits over RMS\-Norm. We then study the performance of CBWC+RMS\-Norm in training and fine-tuning settings.

\begin{figure}[t]
\vspace{-1ex}
    \centering
    \begin{subfigure}[t]{0.23\textwidth}
        \centering
      \includegraphics[height=16ex]{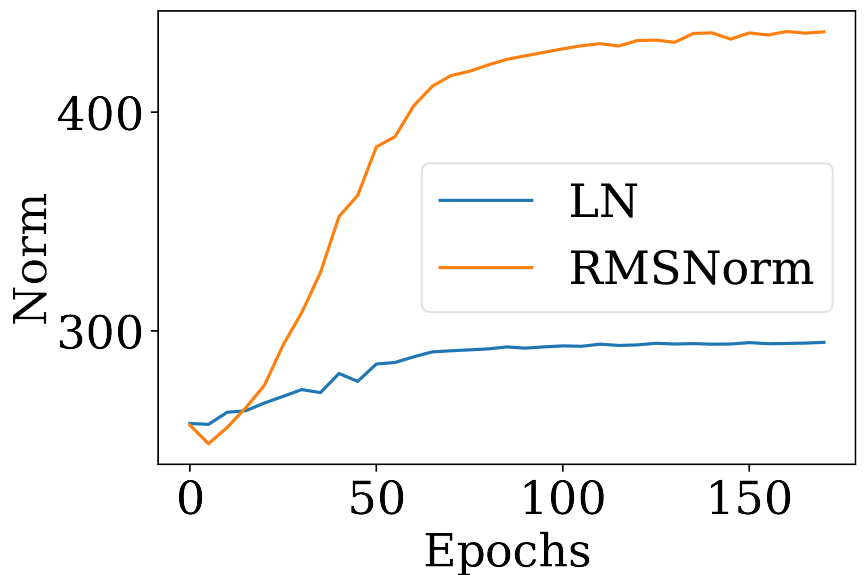}
        \caption{d=15 on CIFAR-10.}
    \end{subfigure}
    \hspace{0.05in}	
    \begin{subfigure}[t]{0.23\textwidth}
        \centering
      \includegraphics[height=16ex]{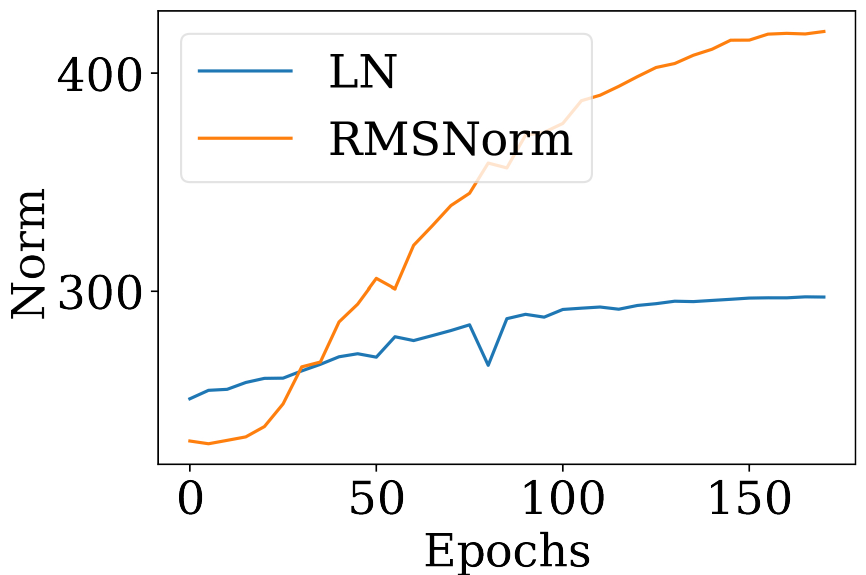}
        \caption{d=35 on CIFAR-10.}
    \end{subfigure}

    \begin{subfigure}[t]{0.23\textwidth}
        \centering
      \includegraphics[height=16ex]{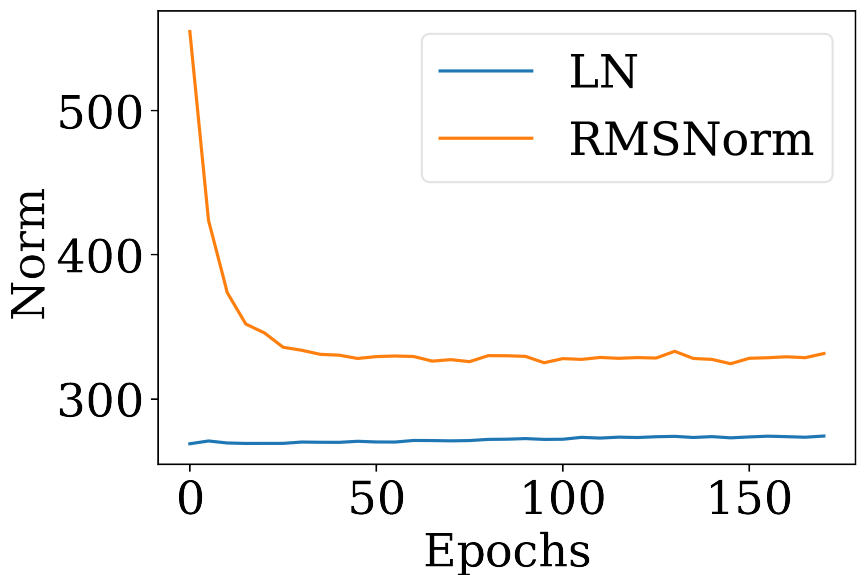}
        \caption{d=65 on MNIST.}
    \end{subfigure}
    \hspace{0.05in}	
    \begin{subfigure}[t]{0.23\textwidth}
        \centering
      \includegraphics[height=16ex]{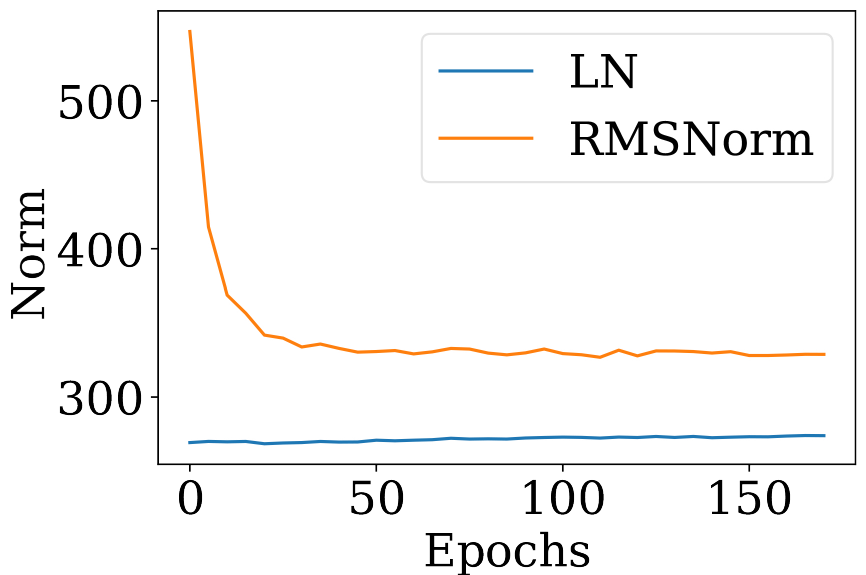}
        \caption{d=100 on MNIST.}
    \end{subfigure}
    
  \caption{Norm of the input to the final layer for MLPs of different depths ($d$). LN more effectively controls the sample norm before the final layer throughout training.}
  \label{fig:exp-centering}
  \vspace{-2ex}
\end{figure}

\subsection{Observation of Centering}
\label{exp:ln}

Although the benefits of centering in LN remain debated, we observe that centering helps stabilize the activation range of a network. To examine this effect, we conduct ablation experiments on MLPs of different depths using LN and RMS\-Norm on the CIFAR-10~\citep{Krizhevsky2009CIFAR10} and MNIST~\citep{lecun1998gradient} classification tasks. We monitor the input statistics of each layer throughout training. Detailed experimental settings and additional results are provided in Appendix~\ref{apx:ablation}. 

As shown in Figure~\ref{fig:exp-centering}, LN keeps the norm of the last layer’s input within a narrower range throughout training. In contrast, without the centering operation, the input mean and norm exhibit substantially larger variations. We also report the variation of input norm across all linear layers in Appendix~\ref{apx:ablation}, further illustrating the stabilizing role of LN’s centering operation, particularly in the presence of residual connections.

These observations suggest that LN retains practical benefits over RMS\-Norm that are worth preserving, thereby justifying a structured and theoretically grounded replacement of LN rather than a direct switch to RMS\-Norm.

\subsection{Inference-Time Acceleration}
\label{sec:acceleration}

\begin{figure}[t]
    \centering
    \begin{subfigure}[t]{0.23\textwidth}
        \centering
        \includegraphics[height=17ex]{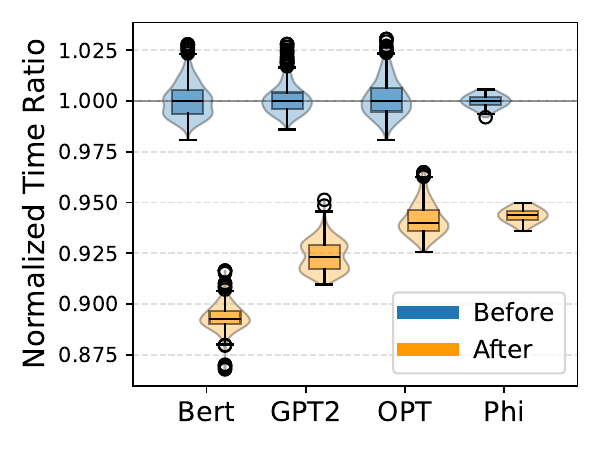}
        \caption{Normalized time.}
    \end{subfigure}  
    \begin{subfigure}[t]{0.23\textwidth}
        \centering
        \includegraphics[height=17ex]{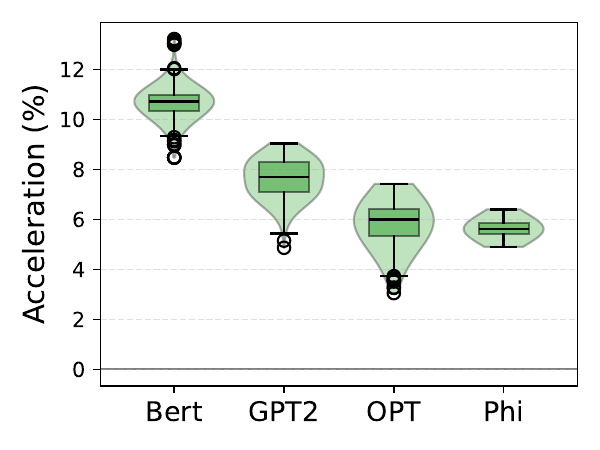}
        \caption{Acceleration ratio.}
    \end{subfigure}   
    \caption{Inference latency comparison across four representative models (GPT-2, BERT, OPT, and Phi-3). Both the normalized latency and the acceleration percentage are reported relative to the mean baseline latency of each model.}
    \label{fig:cuda-total-time-accel}
    \vspace{-2ex}
\end{figure}

At inference time, CBWC+RMS\-Norm provides a direct efficiency advantage over standard LN. Notably, CBWC only needs to be applied once to a pre-trained model before deployment, since the weight matrix remains fixed during inference. At the layer level, replacing LN with RMS\-Norm substantially reduces the cost of normalization, yielding a theoretical latency reduction of roughly 50\% to 80\% for the normalization step itself. At the whole-model level, this translates into an expected end-to-end runtime reduction of roughly 4\% to 7\%, depending on the proportion of LN in the model. More details are provided in Appendix~\ref{apx:inference-accele-theory}.

To verify the practical effectiveness of our method, we compare LN with RMS\-Norm on a single A100-40G GPU. We conduct experiments on GPT-2~\citep{Radford2019GPT2}, BERT~\citep{2019_ACL_Devlin}, BLOOM~\citep{2022_BLOOM_Scao}, OPT~\citep{zhang2022opt}, and Phi~\citep{2023_Phi_Textbooks_Gunasekar}. Our method reduces end-to-end inference time by approximately 2\% to 12\% at batch size 2 and sequence length 1024, while preserving identical outputs, as shown in Figure~\ref{fig:cuda-total-time-accel}.

Although the relative improvement may appear modest, it translates into substantial absolute savings in energy and operational cost when deployed at scale. Additional experiments on both V100 and A100 GPUs show consistent latency reductions of 2.5\% to 5.9\% together with throughput gains of 1,119 to 3,385 tokens/s across multiple architectures. Moreover, the acceleration becomes more pronounced at longer sequence lengths, with the relative speedup increasing substantially in long-context settings. These results confirm that CBWC+RMSNorm provides a reliable inference-time improvement while preserving equivalence across diverse architectures and deployment scenarios.

\subsection{Empirical Study of Training}

CBWC+RMS\-Norm can be more efficient than LN during training, especially in long-sequence settings. Let $b$ denote the batch size, $s$ the sequence length, and $d$ the hidden dimension. Let the weight matrix be $W \in \mathbb{R}^{d \times p}$. For one training epoch with $B$ samples, a centering operation incurs a computational cost of approximately $\mathcal{O}(Bsd)$, while CBWC introduces a cost of $\mathcal{O}(Bdp/b)$. Therefore, CBWC is more efficient when $s \times b > p$, which commonly holds in practice, particularly for long-context learning.

Although Prop.~\ref{prop:opt} establishes the theoretical equivalence between a foldable LN and CBWC+RMS\-Norm during both inference and training, practical Transformer architectures typically include dropout layers~\citep{2017_NIPS_Vaswani} between general linear layers and LN. In training mode, dropout disrupts the zero-mean property of activations, thereby breaking exact equivalence. We therefore conduct experiments to empirically evaluate the performance and efficiency of CBWC+RMS\-Norm on Transformer models. We compare three variants: the baseline with LN, the variant with vanilla RMS\-Norm, and the variant using our method, CBWC+RMS\-Norm.

\begin{figure}[t]
    \centering
    \begin{subfigure}[t]{0.23\textwidth}
        \centering
        \includegraphics[height=17ex]{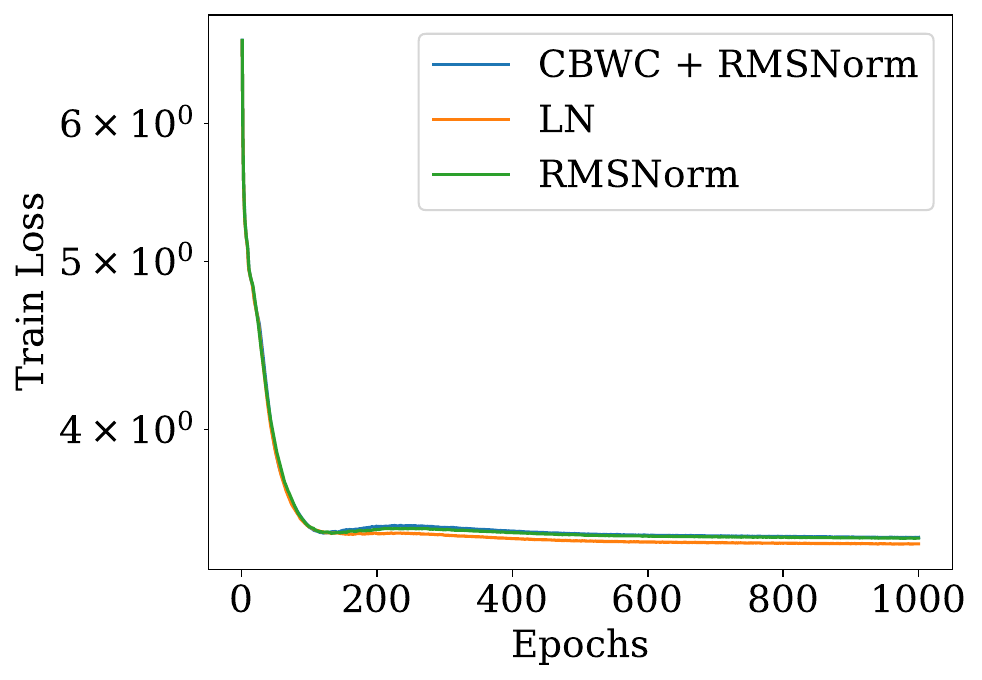}
        \caption{Training loss.}
    \end{subfigure}   
    \begin{subfigure}[t]{0.23\textwidth}
        \centering
        \includegraphics[height=17ex]{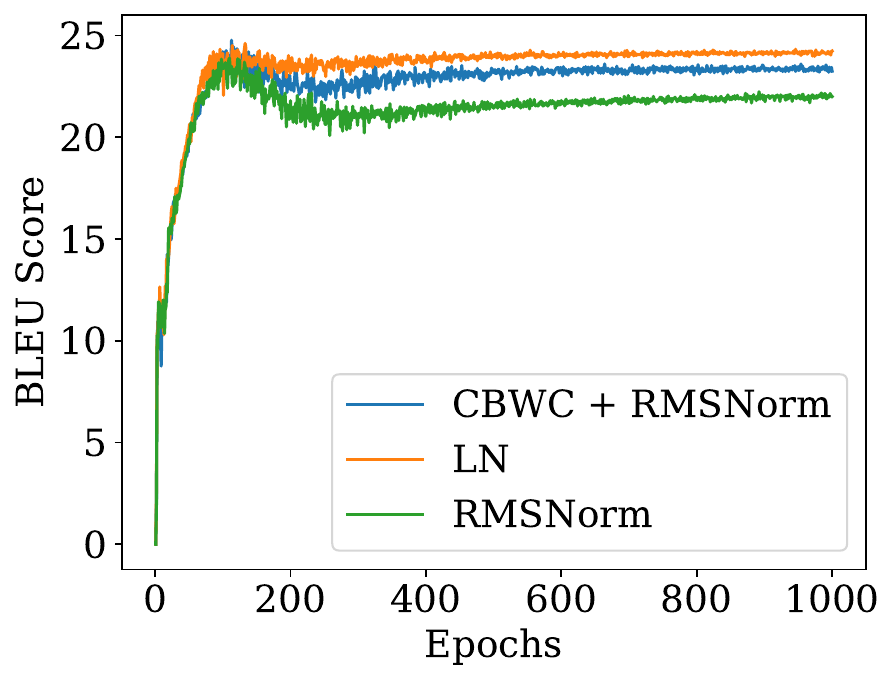}
        \caption{Evaluation BLEU.}
    \end{subfigure}   
    
    \caption{Training loss curves and validation BLEU scores of Transformer on the Multi30K translation task. Our proposed CBWC+RMS\-Norm achieves final performance between standard LN and vanilla RMS\-Norm.}
    \label{Results of Transformer Models for Text Translation Task}
    \vspace{-2ex}
\end{figure}

\paragraph{Text Translation.}

We train the models on the Multi30K~\citep{2016_Multi30k_ILLC} dataset following the experimental protocol of \citet{2017_NIPS_Vaswani}. We measure training performance using the loss value and evaluate inference performance using BLEU scores~\citep{2002_BLEU_ACL}. Higher BLEU and lower loss indicate better performance. All models are trained for 1000 epochs. As shown in Figure~\ref{Results of Transformer Models for Text Translation Task}, after the first 100 epochs, vanilla RMS\-Norm lags behind both the baseline and our method. Although CBWC+RMS\-Norm is slightly inferior to LN in terms of final training loss and BLEU, it consistently outperforms vanilla RMS\-Norm.

\begin{figure}[t]
    \centering
    \vspace{-1ex}
    \begin{subfigure}[t]{0.23\textwidth}
        \centering
        \includegraphics[height=17ex]{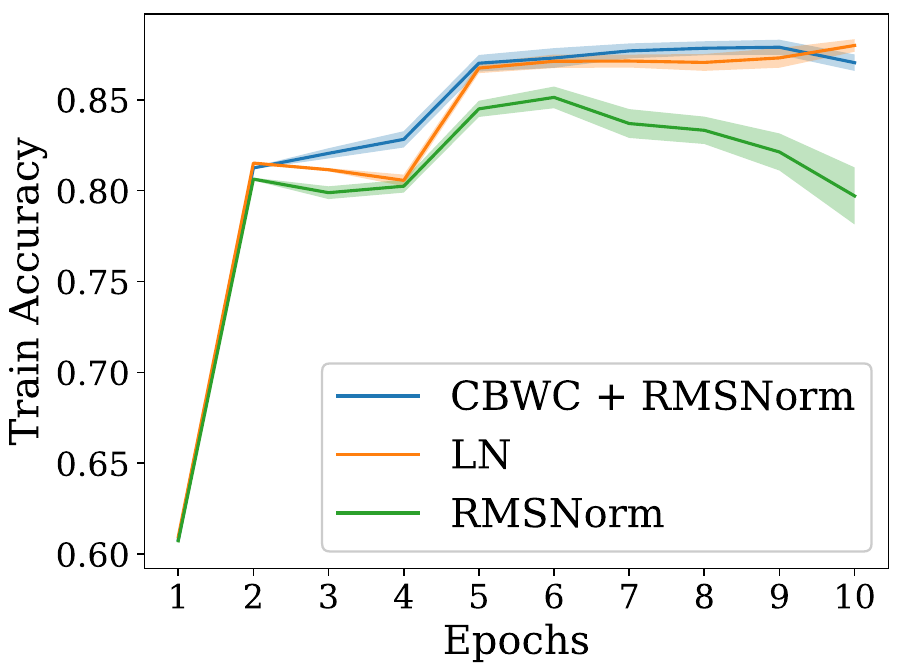}
        \caption{Training accuracy.}
    \end{subfigure}
    \hspace{0.05in}	
    \begin{subfigure}[t]{0.23\textwidth}
        \centering
        \includegraphics[height=17ex]{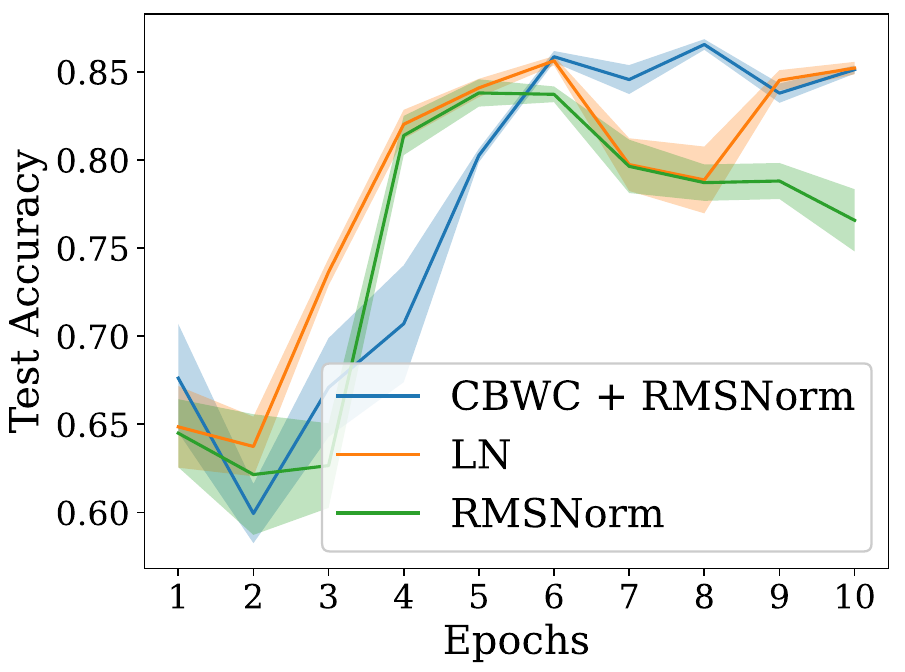}
        \caption{Test accuracy.}
    \end{subfigure}
        \begin{subfigure}[t]{0.23\textwidth}
        \centering
        \includegraphics[height=17ex]{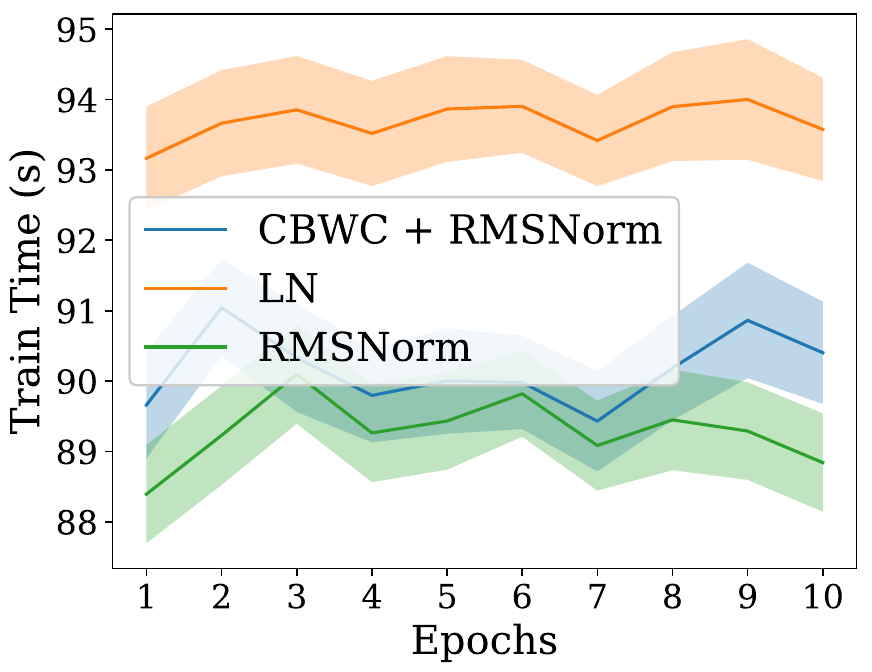}
        \caption{Training time.}
    \end{subfigure}
    \hspace{0.05in}	
    \begin{subfigure}[t]{0.23\textwidth}
        \centering
        \includegraphics[height=17ex]{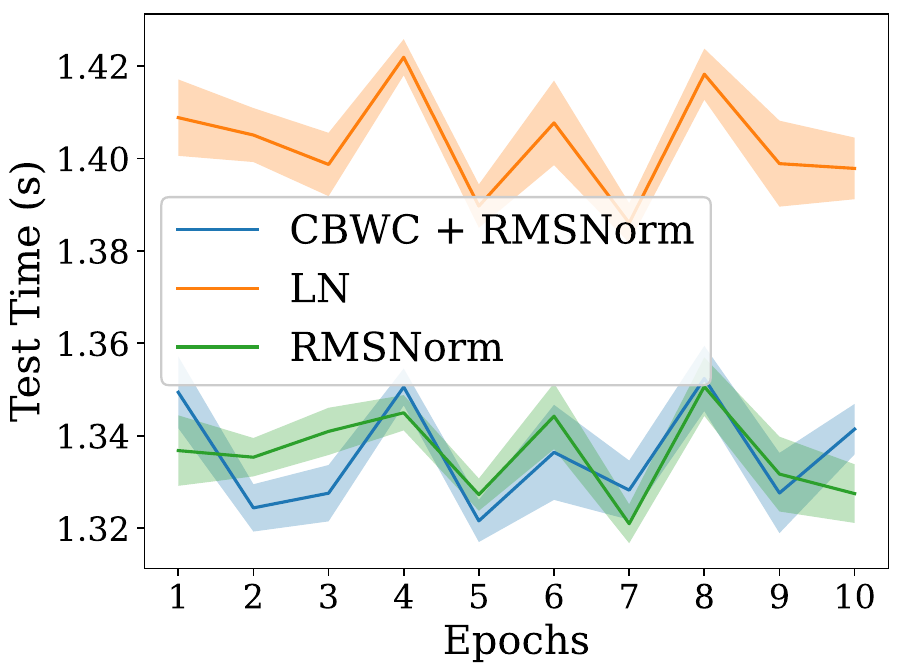}
        \caption{Test time.}
    \end{subfigure}
    
    \caption{Training and test performance of Transformer-based text classification on the AG News dataset. The results are averaged over 5 random seeds with shaded regions indicating standard deviation. Our CBWC+RMS\-Norm matches the convergence behavior and final accuracy of standard LN while achieving nearly the same training and inference throughput as vanilla RMS\-Norm.}
    \label{Results of Transformer Models for Text Classification Task When Training}
    \vspace{-1ex}
\end{figure}

\paragraph{Text Classification.}
We train the models on the AG News~\citep{2015_AGNEWS_NeurIPS} dataset using the same Transformer setting as in the translation experiment. Each model is trained for 10 epochs with 5 random seeds, and we evaluate both accuracy and time usage. As shown in Figure~\ref{Results of Transformer Models for Text Classification Task When Training}, the baseline and CBWC+RMS\-Norm both outperform vanilla RMS\-Norm during training, while RMS\-Norm exhibits less stable behavior. In terms of efficiency, CBWC+RMS\-Norm is faster than the LN baseline in both training and evaluation, while remaining close to vanilla RMS\-Norm.

\paragraph{Image Classification.}
We conduct image classification experiments on ImageNet-100~\citep{deng2009imagenet,imagenet100pytorch} using Swin-Tiny~\citep{liu2021Swin}. We evaluate both inference performance and runtime efficiency, including forward-pass, backward-pass, and validation time. All models are trained for 100 epochs, and results are averaged over 3 random seeds. Detailed settings are provided in Appendix~\ref{apx:swin}. As shown in Table~\ref{table:performace-swin} and Figure~\ref{table:time-usage}, our method achieves the best test performance among the three variants while also improving runtime over the LN baseline. Its latency remains close to that of vanilla RMS\-Norm in both the forward pass and evaluation.

\begin{table}[t]
  \centering
  \caption{Image classification performance on ImageNet-100 using Swin-Tiny (averaged over 3 random seeds). Our method achieves the best results on both accuracy and loss.}
  \label{table:performace-swin}
  \begin{tabular}{rcc}
    \toprule
    Method & Test Acc (\%) $\uparrow$ & Test Loss $\downarrow$ \\ \midrule
    LN       & $57.270 \pm 0.060$ & \textcolor[gray]{0.6}{$3.352 \pm 0.015$} \\
    RMS\-Norm & \textcolor[gray]{0.6}{$57.079 \pm 0.317$} & $3.350 \pm 0.023$ \\
    CBWC+RMS & $\mathbf{57.768 \pm 0.417}$ & $\mathbf{3.232 \pm 0.043}$ \\
    \bottomrule
  \end{tabular}
\end{table}

\begin{figure}[t]
\vspace{-1ex}
    \centering
    \begin{subfigure}[t]{0.23\textwidth}
        \centering
        \includegraphics[height=17ex]{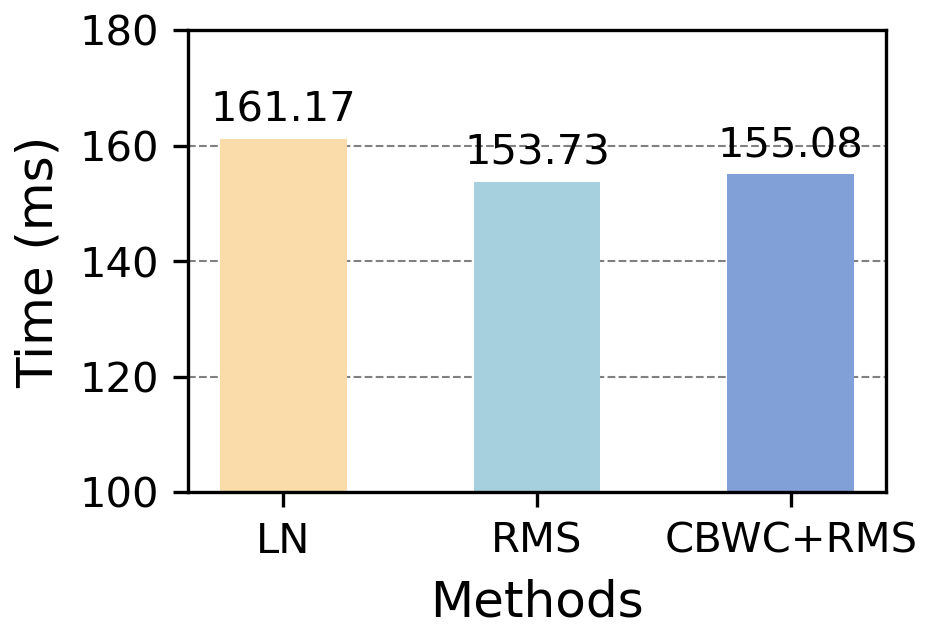}
        \caption{Forward pass.}
    \end{subfigure}   
    \begin{subfigure}[t]{0.23\textwidth}
        \centering
        \includegraphics[height=17ex]{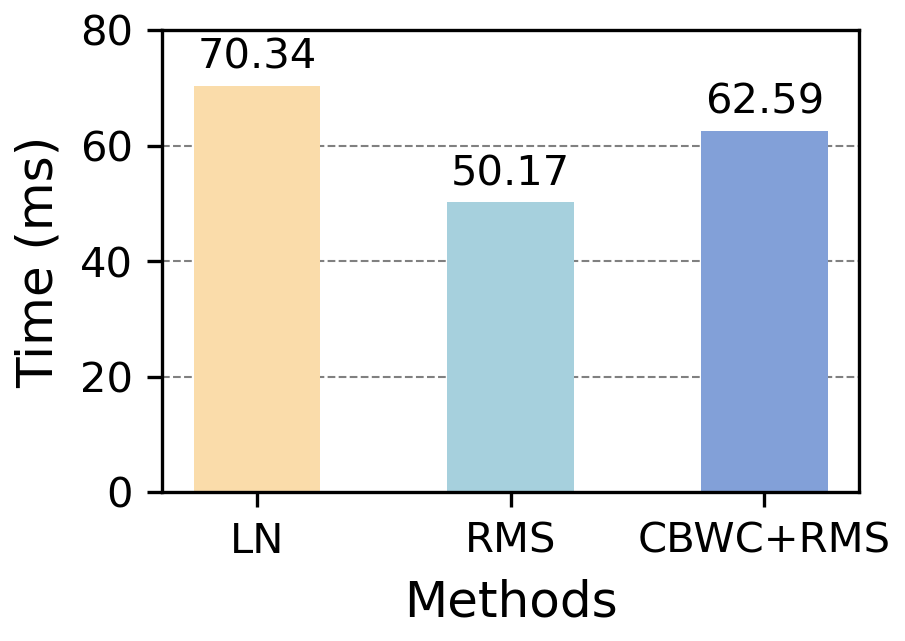}
        \caption{Back propagation.}
    \end{subfigure}   
    \caption{End-to-end latency comparison of Swin-Tiny variants for different processes on ImageNet-100. Our CBWC+RMS\-Norm delivers markedly lower latency than LN in both forward and backward passes, approaching the efficiency of vanilla RMS\-Norm.}
    \label{table:time-usage}
    \vspace{-2ex}
\end{figure}

\paragraph{Summary.}
When exact folding conditions do not fully hold during training, CBWC+RMS\-Norm still consistently performs between LN and vanilla RMS\-Norm, and often remains close to LN in predictive performance while providing noticeable efficiency gains, especially in long-sequence tasks.

\paragraph{Practical Trade-off.}
The core goal of our method is to reduce the computational cost of LN while preserving its behavior as much as possible. Therefore, when exact equivalence does not fully hold, whether to apply the method during training or only at inference time becomes a practical design choice. If strict equivalence and no accuracy degradation are required, it is preferable to apply the method only at inference time, for example by converting an LN-trained model after training to accelerate deployment. By contrast, if faster training is preferred and a small performance gap from LN is acceptable, applying the method during training provides a favorable compromise. In practice, this allows one to choose between better fidelity to LN and greater computational efficiency depending on the target use case.

\subsection{Discussion on Fine-tuning}
\label{sec:fine-tune}
In practical applications, models are often fine-tuned from pre-trained checkpoints rather than trained from scratch. Under the conditions of Prop.~\ref{prop:opt}, LN and CBWC+RMS\-Norm follow the same optimization process, and their parameters are mathematically equivalent under the same weight matrix. Therefore, a model pre-trained with LN can in principle be converted and continue training with our method equivalently. We provide a verification experiment for this equivalence in Appendix~\ref{sec:mlp-verify}.

We further study the feasibility of applying our method during fine-tuning on Transformer models. In practical settings, however, dropout layers break the exact equivalence between the LN model and the CBWC+RMS\-Norm model during training. We therefore conduct experiments under three settings: continuing fine-tuning with LN as the baseline, converting the pre-trained model to CBWC+RMS\-Norm before fine-tuning, and converting the pre-trained model to CBWC while fine-tuning with vanilla RMS\-Norm. We pre-train GPT-3 Medium (350M)~\citep{brown2020language} on WikiText-103~\citep{merity2016pointersentinelmixturemodels} for 3 epochs and then fine-tune the models on Alpaca~\citep{alpaca} for 1 epoch. More detailed settings are provided in Appendix~\ref{app:expcom}.

We evaluate the models using loss and perplexity (PPL), where lower values indicate better performance. The results are shown in Table~\ref{table:setting-verfication}. CBWC+RMS\-Norm achieves the best test loss and perplexity among the compared variants while also benefiting from improved efficiency. These results suggest that applying our method after LN pre-training remains a viable and effective strategy for fine-tuning.

\begin{table}[t]
\centering
\caption{Fine-tuning performance of GPT-3 Medium (350M) on the Alpaca instruction-following dataset after pre-training on WikiText-103. Our CBWC+RMS\-Norm achieves the best test loss and perplexity among all variants.}
\label{table:setting-verfication}
\begin{tabular}{l@{\hspace{4pt}}c@{\hspace{4pt}}c}
\toprule
Method & Test Loss $\downarrow$& Test PPL $\downarrow$\\ \midrule
LN        & \textcolor[gray]{0.6}{0.2553} & \textcolor[gray]{0.6}{1.29} \\
RMS\-Norm  & 0.2526 & 1.29 \\
CBWC+RMS  & $\mathbf{0.2430}$ & $\mathbf{1.28}$ \\
\bottomrule
\end{tabular}
\vspace{-4ex}
\end{table}

\section{Conclusion}
We propose a general framework for determining whether an LN in an arbitrary DNN can be replaced with RMS\-Norm while preserving mathematical equivalence under appropriate structural conditions. We show how the centering operation of LN can be removed through the proposed column-centered constraint (CCC) and column-based weight centering (CBWC), and further extend this analysis to arbitrary neural networks through the notions of foldable LN and zero-mean graph. This framework enables exact replacement of foldable LNs during inference with measured end-to-end inference speedups of 2\%–12\% on representative models and is especially relevant to inference acceleration in LN-based LLMs and VLMs, where exact forward equivalence is particularly desirable. In particular, it provides an effective practical strategy for training and fine-tuning even when the exact folding conditions do not fully hold. More broadly, it offers a principled way to reduce the computational cost of LN while preserving its benefits as much as possible.

\paragraph{Limitations and Future Work.}
Our analysis and empirical validation, while covering several representative architectures and tasks, do not yet exhaust the full range of modern large-scale models with LN. In future work, it would be valuable to extend the study to a broader set of LLMs and VLMs and to provide more systematic evaluations of the practical gains achieved by the proposed method in different deployment settings. In addition, exact equivalence may fail in architectures containing modules such as dropout, which break the zero-mean property during training. Developing more refined treatments of such modules and extending the framework to a wider class of practical architectures remain important directions for future research.

\section*{Acknowledgments}
This work was partially supported by Beijing Natural Science Foundation (Grant No. QY24137), National Natural Science Foundation of China (Grant No. 62476016 and 62441617) and the Fundamental Research Funds for the Central Universities.

\section*{Impact Statement}
This paper presents work whose goal is to advance the field of machine learning. We do not identify any specific negative societal impacts that require particular discussion beyond the general considerations associated with improving the efficiency of deep neural networks.

\bibliography{main}
\bibliographystyle{icml2026}
\newpage

\onecolumn
\appendix
\clearpage

\section{Bias in the Weight Matrix}
\label{apx:bias}
In the main paper, we omit the bias for simplicity. Here, we provide the methodology for merging the bias into the weight matrix. For a linear layer $\mathbf h= \mW \rvx\ +\ \rvb $, we denote the bias $\rvb = [b_1, b_2,\dots,b_m]^\top \in \mathbb R ^{m\times 1}$. For simplicity, we append an additional dimension to $\rvx$, which turns $\rvx$ into $\rvx'=[x_1,x_2,\dots,x_d,1]^\top\in \mathbb R^{(d+1)\times 1}$, and adds an additional column in $\mW'\in\mathbb R^{m \times (d+1)}$, where: 
\begin{equation}
     \label{eqn:48}
     \mW' = \left[
    \begin{matrix}
    {w}_{1,1} & {w}_{1,2} & \cdots & {w}_{1,d} & {b}_1\\ 
    {w}_{2,1} & {w}_{2,2} & \cdots & {w}_{2,d} & {b}_2\\
    \vdots & \vdots & \ddots & \vdots & \vdots \\
    {w}_{n,1} & {w}_{m,2} & \cdots & {w}_{m,d} & {b}_m
    \end{matrix}
    \right].
\end{equation}
Therefore, we have $\mathbf h = \mW \rvx\ +\ \rvb =\mW' \rvx'$

In more general cases, for a linear transformation$f(\cdot;\vtheta):\R^n\rightarrow\R^m$, we similarly add an additional dimension to $\rvx'=[x_1,x_2,\cdots,x_d,1]$, and a related weight vector $\rvw_{d+1}=\rvb = [b_1,b_2,\cdots,b_m]^\top\in\R^m$. Therefore, we have transformed expression:
\begin{equation}
   \rvh(\cdot;\vtheta)=\sum^{d}_{i=1}\rvw_i(\vtheta)x_i + \rvb =\sum^{d+1}_{i=1}\rvw_i(\vtheta)x_i,
\end{equation}
thus incorporating the bias into the weight matrix.

\section{Proof of Propositions}

\subsection{Proof of Prop.~\ref{prop:zero-mean-ccc}}
\label{apx:proof-ccc}

\begin{proof}
    For arbitrary input $\rvx$, the output is $\rvh = \sum_{i=1}^n \rvw_i(\vtheta_k) x_i$.
    Taking the mean of $\rvh$ over the $m$ output neurons,
    $$
    \mu_h = \frac{1}{m} \boldsymbol{1}_m^\top \rvh = \frac{1}{m} \sum_{i=1}^{n} x_i \underbrace{\boldsymbol{1}_m^\top \rvw_i(\vtheta_k)}_{=0} = 0,
    $$
    where the equality follows from CCC \eqref{eqn:ccc-general}.
\end{proof}

\subsection{Proof of Prop.~\ref{prop:opt}}
\label{apx:proof-equivalent-opt}

\begin{proof}[Sketch]
    Let model $A$ have a standard linear layer followed by LN, while model $B$ uses a linear layer with CBWC followed by RMS\-Norm.
    In model $A$, the centered output after LN can be written as 
    $\widetilde{\rvh}_A = \left(\mI - \frac{1}{m} \boldsymbol{1}_m \boldsymbol{1}_m^\top \right) \mW_A \rvx_A,$
    where $\rvx_A$ is the input.
    In model $B$, the output is 
    $\widetilde{\rvh}_B = \mV_B \rvx_B = \left(\mI - \frac{1}{m} \boldsymbol{1}_m \boldsymbol{1}_m^\top \right) \mW_B \rvx_B.$
    If $\rvx_A = \rvx_B$ and $\mW_A = \mW_B$, then forward outputs are identical.

    Since the parameters and outputs coincide, their gradients and losses coincide as well. Therefore, in model $A$, we have backpropagation process as:
    \begin{equation}
        \frac{\partial \mathcal{L}}{\partial \rvh_A}=\left(\mI-\frac{1}{m}\boldsymbol{1}_m\boldsymbol{1}_m^\top\right)^\top\frac{\partial \mathcal{L}}{\partial \widetilde\rvh_A},\qquad
        \frac{\partial \mathcal{L}}{\partial \rvx_A} = \mW_A^\top \frac{\partial \mathcal{L}}{\partial \rvh_A},\qquad
        \frac{\partial \mathcal{L}}{\partial \mW_A} =\frac{\partial \mathcal{L}}{\partial \rvh_A}\rvx_A^\top.
    \end{equation}
    When in model $B$, according to the definition of backward transformation $\pmb\psi$ in CBWC, similarly we have:
    \begin{equation}
        \frac{\partial \mathcal{L}}{\partial \rvx_B}=\mV_B\frac{\partial \mathcal{L}}{\partial\widetilde \rvh_B},\qquad
        \frac{\partial \mathcal{L}}{\partial \mV_B} = \frac{\partial \mathcal{L}}{\partial \widetilde\rvh_B}\rvx_B^\top,\qquad
         \frac{\partial \mathcal{L}}{\partial \mW_B} = \left(\mI-\frac{1}{m}\boldsymbol{1}_m\boldsymbol{1}_m^\top\right)^\top \frac{\partial \mathcal{L}}{\partial \mV_B}.
    \end{equation}
    It is straightforward to verify that the two backpropagation processes are identical:
    \begin{equation}
        \frac{\partial \mathcal{L}}{\partial \rvx} = \left(\mI-\frac{1}{m}\boldsymbol{1}_m\boldsymbol{1}_m^\top\right)^\top \mW^\top\frac{\partial \mathcal{L}}{\partial \widetilde\rvh},\qquad
        \frac{\partial \mathcal{L}}{\partial \mW} = \left(\mI-\frac{1}{m}\boldsymbol{1}_m\boldsymbol{1}_m^\top\right)^\top \frac{\partial \mathcal{L}}{\partial \widetilde \rvh}\ \rvx^\top.
    \end{equation}    
    Therefore, CBWC+RMSNorm yields an identical optimization process.
\end{proof}

\section{Column-Based Weight Centering of General Linear Layers}
\label{apx:proof-of-ccc}
In this section, we introduce the column-based weight transformation (CCWT) and give the explicit definition of general linear layers. We provide how typical general linear layers can be converted into a linear transformation, including recurrent layers and convolution layers, and derive their corresponding CCC and CBWC. We further provide the grouped-CCC and grouped CBWC for group normalization.

\subsection{Column-Centered Weight Transformation}
\label{apx:CBWT}

\begin{wrapfigure}{r}{0.25\linewidth}
\vspace{-3ex}
        \centering
        \includegraphics[height=17.5ex]{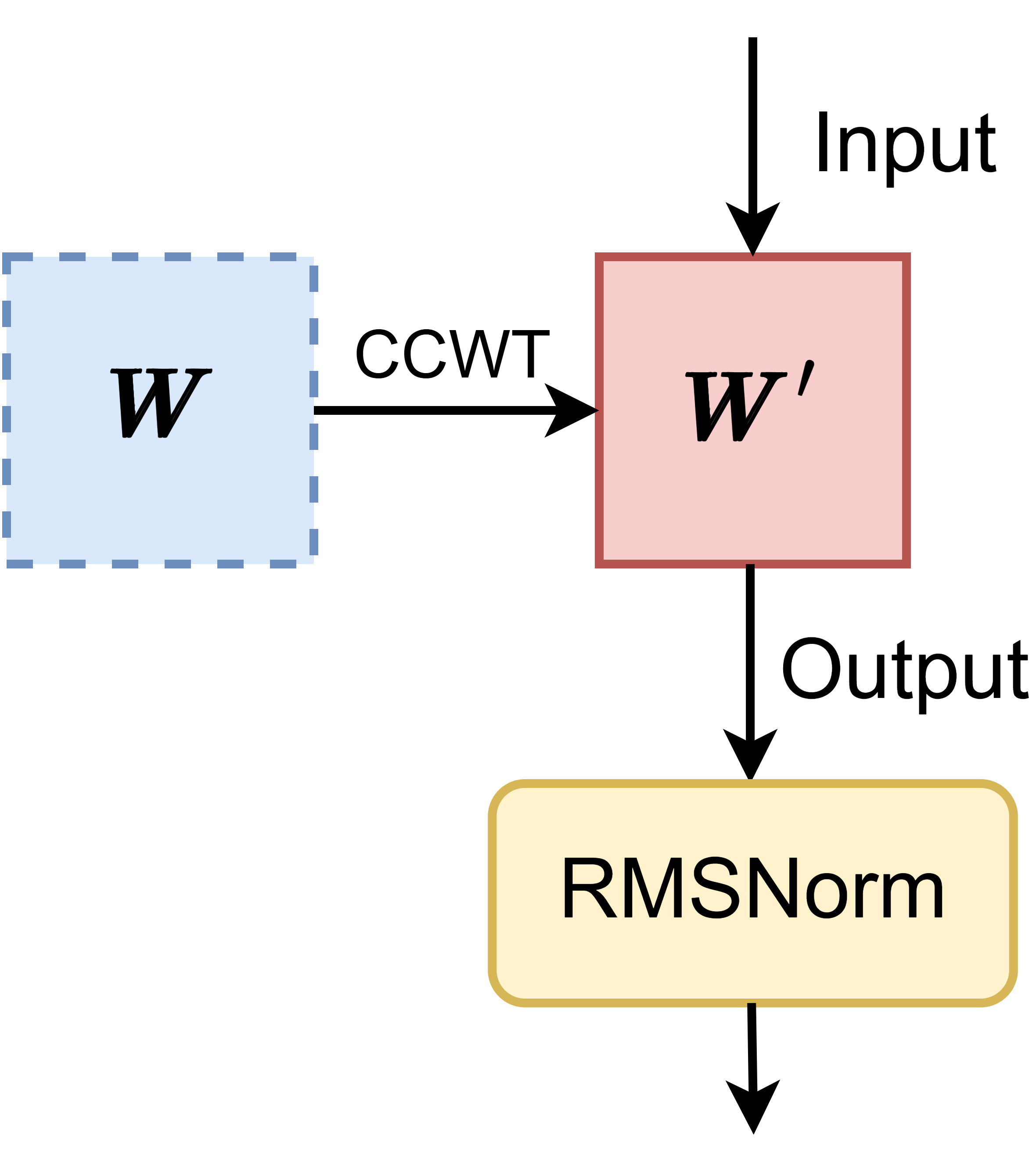}
    \caption{Sketch map of CCWT.}
    \vspace{-3ex}
\end{wrapfigure}

To achieve the CCC of a general linear layer in inference and ensure a subsequent LN foldable, we propose \textit{column-centered weight transformation}. The aim of CCC is to fold centering operation (in Eqn.\ref{eqn:centering}) into linear layer. Notice that given a layer input vector $\rvx = [x_1, x_2, \dots, x_n]^\top \in \mathbb{R}^n$, the centering operation can be written into the form of $\widetilde{\rvx} = \left(\mI - \frac{1}{m}\boldsymbol{1}^\top_m\boldsymbol{1}_m\right) \rvx$, where the matrix $\left(\mI - \frac{1}{m}\boldsymbol{1}^\top_m\boldsymbol{1}_m\right)$ can be moved into the linear layer.

We have the definition below.
\begin{definition}
    [Column-Centered Weight Transformation (CCWT)]
    \emph{Column-centered weight transformation} aims to apply transformation on the weight matrix to ensure column-centered constraint. We construct a specific transformation $\varphi$, changes $\mW$ into $\mW'$, as:
    \begin{equation}
    \label{eqn:ccwt-linear}
        \mW' = \varphi(\mW) = \left(\mI - \frac{1}{m}\boldsymbol{1}^\top_m\boldsymbol{1}_m\right)\mW,
    \end{equation}
    where $m$ is the output neuron number. 
\end{definition}
Apparently, CCWT ensures that the weight for each input in the transformed matrix $\mW'$ has zero mean, thus guarantees CCC and ensures a zero-mean output of the layer.
For different general linear layers, the transformation $\Psi$ takes different forms, but the essence of its construction based on column-centered constraint will not change.

CBWC is a special transformation on the parameter, folding the parameter space onto that of CCC while maintaining mathematical equivalence of the original model. With the construct of CCWT, it becomes easier to get the form of CBWC. For a given CCWT, we transfer it into CBWC by using a proxy weight in the place of $\mW'$. Instead of directly replacing the computational weight $\mV$, we ensure $\mV$ is under column-centered constraint while $\mW$ is calculated from $\mV$ by CCWT.

\subsection{General Linear Layer}
In the main paper, we introduce the concept of general linear layer, which apply linear transformations to their inputs. Recurrent layers with shared weights in RNNs and convolution layers in CNNs are all general linear layers as they satisfy additivity and homogeneity. We formalize the definition of general linear layer as follows.
\begin{definition}[General Linear Layers]
    A layer parameterized by $\vtheta$ with transformation $\rvh(\rvx; \vtheta)$, where $\rvx \in \R^n$, is a \emph{general linear layer} if the mapping 
    $\rvh(\cdot; \vtheta)$ satisfies additivity and homogeneity:
    \begin{itemize}
        \setlength{\itemsep}{0pt}
        \item $\rvh((\rvx_1+\rvx_2);\vtheta)=\rvh(\rvx_1;\vtheta) + \rvh(\rvx_2;\vtheta),\forall \rvx_1,\rvx_2\in\R^n.$
        \item $\rvh(c\rvx;\vtheta) = c\ \rvh(\rvx;\vtheta),\forall\ \rvx\in\R^n,\ c\in\R.$
    \end{itemize}
    Namely, the transformation $\rvh(\rvx;\vtheta)$ satisfies linearity, and is a linear transformation of $\rvx$. 
\end{definition}
Such layers can be explicitly expressed as the form of $\rvh(\rvx; \vtheta) = \mW(\vtheta) \rvx$, allowing for direct application of CCC and CBWC.

\subsubsection{Recurrent Layer}
\label{proof:rnn}
Despite linearity, the recurrent neural network is different from the original linear layer with its recurrent connection and shared weight matrix. 
Due to the fact that our constraints are independent of input and output, the parameter sharing is excluded from our consideration. As for the recurrent connection, the weights for ordinary input and recurrent input can be seen as two linear layers. 

For the $l$-th layer and $t$-th time step of a recurrent neural network, 
we define the input as $\rvx_t^{l-1}\in\R^{d_{l-1}}$, the recurrent input as $\rvh_{t-1}^{l}\in\R^{d_{l}}$ and the output of the hidden layer as $\rvc_{t}^{l}$. 
We have the weight matrix 
$\mW_v=[\rvw^v_1,\cdots,\rvw^{v}_{d_{l-1}}]\in\R^{d_l\times d_{l-1}}$ 
and 
$\mW_h=[\rvw^{h}_{1},\cdots,\rvw^{h}_{d_l}]\in\R^{d_l\times d_{l}}$, 
where $\rvw^v_i, \rvw^h_i\in\R^{d_{l-1}}$ and  
which is shared between all time steps. 
We define $\mW=[\mW_v,\mW_h]$ and input $\rvx = [(\rvx_{t}^{l-1})^\top,(\rvh_{t-1}^{l})^\top]^\top$. 
We have recurrent layer in form of a linear transformation as follows:
\begin{equation}
    \rvc_t = \rvh_{\mathrm{rnn}}(\rvx;\mW)=\mW_v\rvx_t^{l-1}+\mW_h\rvh_{t-1}^l,
\end{equation}
where we have $\rvw_i({\mW})=\rvw_i^v$ when $i=1,\cdots,d_{l-1}$ and $\rvw_i({\mW})=\rvw_{i-d_{l-1}}^h$ when $i=(d_{l-1}+1),\cdots,(d_{l-1}+d_l)$.
Therefore, we have the constraint:
\begin{equation}
\begin{aligned}
    \label{constrain:rnn}
    \mW_0\in\Theta_\mathrm{rnn}=\Big\{\ \mW: (\rvw^v_j)^\top\cdot\boldsymbol{1}_{d_l}=\sum^{d_{l}}_{i=1} w^v_{i,j}=0,\ (\rvw^h_j)^\top\cdot\boldsymbol{1}_{d_l}=\sum^{d_{l}}_{i=1}w^h_{i,k}= 0,\ \\j=1,2,\dots,d_{l-1},\ k=1,2,\dots,d_{l}\ \Big\}.
\end{aligned}
\end{equation}

Under the constraint of \Eqn\ref{constrain:rnn}, we have the mean of output with:
\begin{equation}
\begin{aligned}
    \mu_t^c &
    = \frac{1}{d_l}\sum_{i=1}^{d_l}\left( \sum_{j=1}^{d_{l-1}} w^v_{i,j} x_j + \sum_{k=1}^{d_{l}}w^h_{i,k} h_k \right)\\
    & =\frac{1}{d_l}\left( \sum_{j=1}^{d_{l-1}}\left(\sum_{i=1}^{d_l}w^v_{i,j}\right) x_j + \sum_{k=1}^{d_{l}}\left(\sum_{i=1}^{d_l}w^h_{i,k}\right) h_k \right)\\
    & = \frac{1}{d_l}\left( \sum_{j=1}^{d_{l-1}} 0\cdot x_j + \sum_{k=1}^{d_{l}}0\cdot h_k \right) = 0.
\end{aligned}
\end{equation}

Thus, for the shared weight matrix for both input from the last layer and from the last time step, applying constraints on them centralizes the output of the hidden layer.
We have the transformation $\varphi_{\mathrm{rnn},v},\varphi_{\mathrm{rnn},h}$ of the CCWT on the recurrent neural network, as follows:

\begin{equation}
    \begin{aligned}
        \mW'^v &= \varphi_{\mathrm{rnn},v}(\mW^v)=(I-\frac{1}{m_v}\boldsymbol{1}_{m_v}\boldsymbol{1}_{m_v}^\top)\mW^v \\
        \mW'^h &= \varphi_{\mathrm{rnn},h}(\mW^h)=(I-\frac{1}{m_h}\boldsymbol{1}_{m_h}\boldsymbol{1}_{m_h}^\top)\mW^h.
    \end{aligned}
\end{equation}

We have the corresponding CBWC for the recurrent layer as below:
\begin{equation}
    \begin{aligned}
        \mV^v &= \varphi_{\mathrm{rnn},v}(\mW^v)=(I-\frac{1}{m_v}\boldsymbol{1}_{m_v}\boldsymbol{1}_{m_v}^\top)W^v, \\
        \mV^h &= \varphi_{\mathrm{rnn},h}(\mW^h)=(I-\frac{1}{m_h}\boldsymbol{1}_{m_h}\boldsymbol{1}_{m_h}^\top)\mW^h,\\
        \frac{\partial\mathcal{L}}{\partial\mW^v}&= \phi_{\mathrm{rnn},v}\left(\frac{\partial\mathcal{L}}{\partial\mV}\right)=\left(\mI - \frac{1}{m_v}\boldsymbol{1}_{m_v}\boldsymbol{1}_{m_v}^\top\right)^\top \frac{\partial\mathcal{L}}{\partial\mV},\\
        \frac{\partial\mathcal{L}}{\partial\mW^h}&= \phi_{\mathrm{rnn},h}\left(\frac{\partial\mathcal{L}}{\partial\mV}\right)=\left(\mI - \frac{1}{m_h}\boldsymbol{1}_{m_h}\boldsymbol{1}_{m_h}^\top\right)^\top \frac{\partial\mathcal{L}}{\partial\mV}.
    \end{aligned}
\end{equation}

\subsubsection{Convolution Layer}
\label{proof:cnn}
Under the circumstances of the convolution layer, the convolution kernel can be regarded as a combination of a set of shared weights.
All of them should fulfill the constraint of the linear layer. We hence use a vector to denote the elements among different channels among the kernel.

We denote the input tensor $\rvx \in \mathbb R ^{d_{l-1}\times h\times w}$ and the output tensor $\mH \in \mathbb R ^{d_{l}\times h'\times w'}$. We have convolution kernels $\mW \in \mathbb R ^{d_l\times d_{l-1}\times F_h \times F_w}$.

For every channel of output tensor $\mH_i\in\mathbb R^{h'\times w'}\ (i=1,\dots,d_l)$, every channel of input tensor  $\rvx_j\in\mathbb R^{h\times w}\ (j=1,\dots,d_{l-1})$ and corresponding convolution kernel $\mathbf w_{i,j}\in \mathbb R ^{F_h \times F_w}\ (i=1,\dots,d_l,\ j=1,\dots,d_{l-1})$, we have:
\begin{equation}
    \mH_k =  \sum ^{d_{l-1}}_{t=1} \mathbf x_t*\mathbf w_{k,t}.
\end{equation}
For more specifics, we have:
\begin{equation}
    h_{k,i,j}=\sum^{d_{l-1}}_{t=1}\sum^{F_h}_{a=1}\sum^{F_w}_{b=1}x_{t, (is-s-p+a-1),(js-s-p+b-1)}\cdot w_{k,t,a,b},
\end{equation}
where the subscript sequence refers to the order number of the element in each dimension.
Obviously, this equation can be written into the form of a linear transformation:
\begin{equation}
    \mH = \rvh_{\mathrm{cnn}}(\rvx;\mW)= \sum^{d_{l-1}}_{t=1}\sum^{h}_{a=1}\sum^{w}_{b=1} \rvw_{t,a,b}(\mW)\cdot x_{t,a,b}.
\end{equation}
Therefore, we have the constraint:
\begin{equation}
\label{constrain:cnn}
    \mW_0\in\Theta_\mathrm{cnn} = \left\{\ \mW : \sum^{d_{l}}_{i=1}\rvw_{i,j}=\pmb 0,\ j=1,2,\dots,d_{l-1}\ \right\}.
\end{equation}

Due to the convolution operation, we have $a*(b+c)=a*b+a*c$. Thus, under the constraint of Eqn.\ref{constrain:cnn}, we have:
\begin{equation}
    \begin{aligned}
\mu_h = \frac{1}{d_l}\sum ^{d_l}_{i=1} \mH_i &= \frac{1}{d_{l}}\sum ^{d_l}_{i=1}\sum ^{d_{l-1}}_{j=1} \mathbf x_j* \rvw_{i,j}= \frac{1}{d_l}\sum ^{d_{l-1}}_{j=1} \mathbf x_j * \left(\sum ^{d_{l}}_{i=1} \rvw_{i,k}\right)\\
&= \frac{1}{d_l}\sum ^{d_{l-1}}_{j=1} \mathbf x_j *  \pmb 0 =  0.
\end{aligned}
\end{equation}

It thus can be seen that the column-centered constraint on the convolution kernels achieves the effect of the centering of the LN.
We have the transformation $\varphi_{cnn}$ of the CCWT on convolution layers, as follows:
\begin{equation}
 \mW = \varphi_\mathrm{cnn}(\mW)=(I-\frac{1}{h\times w}\boldsymbol{1}_{h\times w}^\top\boldsymbol{1}_{h\times w})\mW.
\end{equation}

To be noted, the tensor $\mW$ here is a four-dimension tensor. The transformation here is to do centering on its second dimension.

We have a corresponding CBWC for the convolution layer as below:
\begin{equation}
    \begin{aligned}
         \mV &= \varphi_\mathrm{cnn}(\mW)=(I-\frac{1}{h\times w}\boldsymbol{1}_{h\times w}^\top\boldsymbol{1}_{h\times w})\mW.\\
        \frac{\partial\mathcal{L}}{\partial \mW}&= \phi_\mathrm{cnn}\left(\frac{\partial\mathcal{L}}{\partial \mV}\right)=\left(\mI - \frac{1}{h\times w}\boldsymbol{1}_{h\times w}\boldsymbol{1}_{h\times w}^\top\right)^\top \frac{\partial\mathcal{L}}{\partial \mV}.
    \end{aligned}
\end{equation}

\subsection{Grouped Column-Centered Constraint For GN}
\label{apx:gccc}
We extend the conclusion to Group Normalization (GN) \citep{Wu_2018_ECCV}.
Group normalization is first defined on the channel dimension for convolution input $\mX \in \R^{d\times h \times w}$. So the Group Normalization here is more similar to grouped Layer Normalization, with the definition below:

\begin{definition}
    [Group Normalization (GN)]
    Suppose the number of groups is $g$, and $ d = g \times c $. Let $ \rvx = [ \vz_1^\top, \dots, \vz_g^\top]^\top $, where $ \vz_i = [ z_{i1}, \dots, z_{ic} ]^\top , (i=1,\dots,g) $. Assuming $ \rvx = [x_1, \dots, x_d]^\top $, we denote $ z_{ij} = x_{(i - 1) \times c + j} $. Let $ \hat\rvx = \mathit{GN}(\rvx) $, where $ \mathit{GN}(\cdot) $ denotes the \emph{group normalization}. GN can be calculated by $ \mu_i = (z_{i1} + \dots + z_{ic})/c $, $ \sigma_i^2 = [(z_{i1}-\mu_i)^2 + \dots + (z_{ic} - \mu_i)^2]/c $, and then $ \hat z_{ij} = (z_{ij} - \mu_i)/\sigma_i $. Thus, we have $ \hat\rvx = [ \hat\vz_1^\top, \dots, \hat\vz_g^\top]^\top $, where $ \hat\vz_i = \mathit{LN}(\vz_i), (i=1,\dots,g) $.
\end{definition}

For every input, we divide the neurons into groups and apply normalization in every group. Thus the centering step in this normalization is to ensure the output sum of all neurons in each group is zero. For a sampled input $\mathbf h = [h_1,h_2,\dots,h_m]^\top$, for $g$ groups and $c$ channels in every group ($g\times c = m$), we have:
\begin{equation}
    \mu _{hj} =\frac{1}{c}\sum^{c}_{i=1} h_{ji} = 0\quad(j=1,\dots,g).
\end{equation}

For a given general linear layer The parameter $\vtheta_0$ is under column-centered constraint of GN if $\vtheta_0$ satisfies:
\begin{equation}
    \vtheta_0 \in \Sigma = \left\{\vtheta : \rvw_i^\top(\vtheta)\cdot\boldsymbol{1}_{(c,k\times c)}=0,\ k=1,2,\dots,n\ \right\}.
\end{equation}

Take the linear layer $\rvh=\mW\rvx$ as an example, we have the weight matrix $\mW=[\rvw_1,\rvw_2,\cdots,\rvw_{d_{l-1}}]\in \R^{d_{l} \times d_{l-1}}$, where $\rvw_i\in\R^{d_{l}},i=1,\cdots,d_{l-1}$. Similarly, we have $\rvw_i(\mW)=\rvw_i$.
Therefore, the column-centered constraint for GN can be expressed as:
\begin{equation}
\label{eq:57}
    \mW_0 \in \Sigma_\mathrm{GN}= \left\{\ \mW :\sum^{c}_{k=1}w_{j,(k + c \times i)}=0,\ i=1,2,\dots,g,\  j=1,2,\dots,d\ \right\}.
\end{equation}

Given $\mathbf h = \mathbf W  \rvx $, for the $i$-th neuron output $h_i$ in the $j$-th group of $\mathbf h$, we have:
\begin{equation}
    h_{i}  = \sum ^{d}_{k=1} w_{i,j} \cdot x_j.
\end{equation}

Under the constraint of \Eqn\ref{eq:57}, we have:
\begin{equation}
    \begin{aligned}
\mu _{hj} =\frac{1}{c}\sum^{c}_{i=1} h_{ji} 
=  \frac{1}{c}\sum ^c_{i=1} \sum ^d_{j=1} w_{i,j}\cdot x_j = \frac{1}{c}\sum ^d_{j=1} \left(\sum ^c_{i=1} w_{i,j}\right)x_j  
= \frac{1}{c}\sum ^d_{j=1} 0 \cdot x_j  = 0.
\end{aligned}
\end{equation}

Thus, we replace the centering step of GN with a grouped column-centered constraint.
To be mentioned, the core idea of designing a constraint is to ensure every group of input weight has zero mean.

For the transformation $\varphi_{\mathrm{GN}}$ of the CCWT on a normal linear layer under GroupNorm, we have:
\begin{equation}
    \mW = \varphi_{\mathrm{GN}}(\mW)=(\mI - \mA)W.
\end{equation}
$A$ is a matrix that we construct with the equation below:
\begin{equation}
    \mA = \mI - \frac{1}{c}\sum_{k=0}^{d-1}\boldsymbol{1}_{(c,k\times c)}^\top \boldsymbol{1}_{(c,k\times c)},
\end{equation}
where $\boldsymbol{1}_{(c,k\times c)}$ refers to a vector whose elements are all zero except that the $(k\times c)$-th element to $(k\times c +c)$-th element are ones. Specifically, $A$ is a matrix with its diagonal arrayed with $c\times c$ matrices of ones.

We have a corresponding grouped CBWC for the linear layer as below:
\begin{equation}
    \begin{aligned}
        \mV &= \varphi_{\mathrm{GN}}(\mW) = (\mI - \mA)\mW\\
        \frac{\partial\mathcal{L}}{\partial\mW}&= \phi_{\mathrm{GN}}\left(\frac{\partial\mathcal{L}}{\partial\mV}\right)=(\mI - \mA)^\top \frac{\partial\mathcal{L}}{\partial\mV}.
    \end{aligned}
\end{equation}

It is worth mentioning that InstanceNorm~\citep{ulyanov2016instance} is simply the special case where the number of groups $g$ equals the number of channels.

In addition, folding fails when the number of parameters influencing a single output element is smaller than the group size $g$ (which can occur in certain shallow or highly grouped settings). In such cases, the grouped column-centered constraint becomes over-constrained and cannot be satisfied.

Beyond GroupNorm, any normalization (or post-processing) that consists solely of additive (among elements of the sample) and scalar multiplication operations applied per sample can be folded into the preceding general linear layer using the same principle.

\subsection{Discussion of Column-Based Weight Centering and Weight Decay Technology}
According to the definition of weight decay, this technology involves adding a penalty term based on the L2 norm of the model weights to the original loss function, thereby encouraging the optimizer to favor smaller weight values and improving the model's generalization performance.

In our method, the model contains two weight matrices. This is because CBWC is implemented by re parameterization: the matrix ($\mV$) used for forward/backward calculation is different from the updated parameter matrix ($\mW$), but the two matrices are connected by deterministic and differentiable transformations (as described in Definition~\ref{def:cb}). The L2 norm of the calculation matrix $\mV$ and the storage parameter matrix $\mW$ are slightly different. Therefore, two implementation methods of weight decay are derived from this transformation.

Standard implementations (including PyTorch) apply weight decay to the trainable parameters ($\mW$ in our case). However, whether weight decay should technically be applied to $\mV$ or $\mW$ is an interesting and subtle question. Applying it to $\mW$ preserves the classic regularization interpretation (penalizing the magnitude of stored parameters), while applying it to $\mV$ would more directly penalize the magnitude of the pre-activation weights.

Both choices are defensible, and the community has not reached a consensus of a better choice, which is also orthogonal to the core contribution of our paper. Our discussed implementation (decay on $\mW$) follows PyTorch’s default behavior and does not alter the intended regularization effect in practice.

\section{Algorithm for Detecting Foldable LN}

\subsection{Parallel Connection}
\label{proof:res}
In this section, we prove how to fold an LN while the sample meets residual structure during backtracking and provide the reason why other parallel connection fails to fold a downstream LN.
\subsubsection{Residual Structure}
Here we consider a two branch residual structure as an example. The conclusion remains unchanged when the number of branches increases.
For a residual structure, we define the input $\rvx\in \R^n$ and the output $\rvy\R^m$, as shown below:
\begin{equation}
    \rvy = \mathcal{F}(\rvx;\vtheta_F) + \mathcal{G}(\rvx;\vtheta_G),
\end{equation}
where both $\mathcal{F}(\cdot;\vtheta_F)$ and $\mathcal{G}(\cdot;\vtheta_G)$ are subnetworks, and $\vtheta_F$ and $\vtheta_G$ are learnable parameters.
Due to the complexity of $\mathcal{F}(\cdot)$, it is intuitively difficult to construct a constraint on this function to eliminate the mean of $\mathcal{G}(\rvx)$. Therefore, we treat the two terms separately and apply constraint based on their content each.

To eliminate the mean of $\rvy$, we have $\rvy'=(I-\frac{1}{m}\boldsymbol{1}_m^\top\boldsymbol{1}_m)\rvy$.
According to the distributive law of multiplication, we have:
\begin{equation}
    \rvy' = (I-\frac{1}{m}\boldsymbol{1}_m^\top\boldsymbol{1}_m)\rvy = (I-\frac{1}{m}\boldsymbol{1}_m^\top\boldsymbol{1}_m)\mathcal{F}(\rvx;\vtheta_F) + (I-\frac{1}{m}\boldsymbol{1}_m^\top\boldsymbol{1}_m)\mathcal{G}(\rvx;\vtheta_G),
\end{equation}
Such that, the residual structure has zero-mean output if the output of each branch has zero mean.
\subsubsection{Other Parallel Connection}
Other parallel connections do not satisfy linearity, such that we cannot transfer the zero-mean property to each branch. 

Take concatenation as an example. When all composite vectors for concatenation have zero mean, stacking vectors along the axis produces a zero-mean tensor. But this zero-mean property of composite vectors and output vectors are not mathematically equivalent in back propagation, thus preventing the downstream LN from folding.

To put this another way, for a concatenation operation, we define the input $\rvx_i\in \R^{n_i},i=1,\cdots,n$ and the output $\rvy\R^m$, where $m=\sum_{i=1}^nn_i$, as shown below:
\begin{equation}
    \rvy=f(\rvx_1,\cdots,\rvx_n)=[\rvx_1,\cdots,\rvx_n].
\end{equation}
To eliminate the mean of $\rvy$, we have $\rvy'=(I-\frac{1}{m}\boldsymbol{1}_m^\top\boldsymbol{1}_m)\rvy$. It is obvious that:
\begin{equation}
    \rvy' = (I-\frac{1}{m}\boldsymbol{1}_m^\top\boldsymbol{1}_m)\rvy \ne [(I-\frac{1}{m}\boldsymbol{1}_m^\top\boldsymbol{1}_m)\rvx_1,\cdots,(I-\frac{1}{m}\boldsymbol{1}_m^\top\boldsymbol{1}_m)\rvx_n]. 
\end{equation}
Such that zero mean of each composite vector does not equal to the zero mean of the output vector. Therefore, we cannot fold the downstream LN into the concatenation.

\subsection{A Stricter Criteria of Foldable LN}
\label{sec:stricter-critieria}
As proposed in the previous section, we can replace the LN with RMS\-Norm if layers corresponding to the upstream vertices in zero-mean graph are all general linear layers. However, the zero-mean output will be propagated indiscriminately to all connected layers of the upstream layers, which may lead to nonequivalent output and unexpected results. 

To ensure mathematical equivalence, we expect all the layers requiring CBWC to affect only LNs. Here we define the \textit{affected layer}.

\begin{definition}
    [Affected layer]
    Given a neural network $G=(V,E)$ and the zero-mean graph $G_z=(V_z,E_z)$ for $v_{\mathrm{LN}}$. We initialize $V_0=\{u\notin V_z\mid \forall v\in (V_z\setminus V_{\mathrm{LN}}),\langle v,u\rangle\in E\}$ and $A_0=\emptyset$. For each iteration $k=0,1,2,\cdots$:
    \begin{enumerate}
        \item Let $A_{k+1}=(V_k\cap( V_c\cup V_l\cup V_-))$, update affected layers.
        \item Let $V_{k+1}=\{u\mid v\in (V_k\setminus ( V_c\cup V_l\cup V_-)),\langle u,v\rangle\in E\}$, trace the next layer through forward pass.
    \end{enumerate}
    We define $A=\bigcup^\infty_{k=0}A_k $ as affected layer.
\end{definition}

If the affected layers only include LN, then the zero-mean activation in the zero-mean graph safely ensures a foldable LN.
We note that the commonly used models nowadays, such as transformers, all meet this requirement.

\subsection{Self-Attention Module and Transformer Structure}

\label{sec:gpt}

\subsubsection{Self-Attention Module}

To be mentioned, self-attention module can be seen as a pile of layers. We make use of its posterior linear component and thus construct the constraint on it. 

For a sampled input $\rvx\in \mathbb R^{n\times d}$, we apply three different learnable weight matrices $\mQ,\mK \in \mathbb R^{d\times d_k},\ \mV \in \mathbb R^{d \times d_v}$ and have three input matrices $\mH_Q,\mH_K \in \mathbb R^{n\times d_k},\ \mH_V \in \mathbb R^{n \times d_v}$ with:
\begin{equation}
    \mH_Q = \rvx\cdot\mQ,\quad
    \mH_K = \rvx\cdot\mK,\quad
    \mH_V = \rvx\cdot\mV.
\end{equation}

According to the definition of scaled dot-product attention, we have:
\begin{equation}
    \operatorname{Attention}(\mH_Q,\mH_K,\mH_V) = \operatorname{softmax}\left(\frac{\mH_Q\mH_K^\top}{\sqrt{d_k}}\right)\mH_V.
\end{equation}
When this expression is expanded, it is written as:
\begin{equation}
    \operatorname{Attention}(\rvx;\mQ,\mK,\mV) = \operatorname{softmax}\left(\frac{\rvx\cdot\mQ\cdot\mK^\top\cdot\rvx^\top}{\sqrt{d_k}}\right)\rvx\cdot\mV.
\end{equation}
To simplify, we denote:
\begin{equation}
    \mB = \operatorname{softmax}\left(\frac{\rvx\cdot\mQ\cdot\mK^\top\cdot\rvx^\top}{\sqrt{d_k}}\right)\rvx\in\R^{n\times d}.
\end{equation}
We can see this module as a linear transformation $\rvh=\rvh_{\mathrm{trans}}(\mB,\mV) = \mB \mV$, where $\mB$ is the input. We denote $\mB=[\rvb_1^\top,\cdots,\rvb_d^\top]^\top$, where $\rvb_i^\top\in\R^n,i=1,\cdots,d$ and $\mV= [\rvv_1,\cdots,\rvv_{d}]^\top\in\R^{d\times{d_v}}$, where $\rvv_i\in\R^{d_v},i=1,\cdots,d$. 

We have $\rvh=\rvh_{\mathrm{trans}}(\mB,\mV)=\sum_{i=1}^d \rvv_i\cdot\rvb_i$ and $\rvw_i(\mV)=\rvv_i$. Therefore, we have the constraint:
\begin{equation}
\label{constrain:attention}
    \mW_0\in\Gamma_\mathit{trans} = \left\{\ \mW : \rvv_i^\top\cdot \pmb 1_{d_v}=\sum^{d_{v}}_{k=1}v_{j,k}= 0,\ i=1,2,\dots,d_{l-1}\ \right\}.
\end{equation}

By \Eqn\ref{constrain:attention}, we have
\begin{equation}
    \begin{aligned}
\mu _a &=\sum^{d_v}_{b=1} (\mB\mH_V)_{(a,b)}
= \sum^{d_v}_{b=1} \sum^{n}_{j=1} b_{a,j} \cdot {\mH_V}_{(j,b)}= \sum^{d_v}_{b=1} \sum^{n}_{j=1} b_{a,j} \left(\sum^{d}_{k=1}x_{j,k} \cdot v_{k,b}\right)\\
&= \sum^{d_v}_{b=1} \sum^{n}_{j=1} \sum^{d}_{k=1}b_{a,j} \cdot x_{j,k} \cdot v_{k,b}=  \sum^{n}_{j=1} \sum^{d}_{k=1}b_{a,j} \cdot x_{j,k}\left(\sum^{d_v}_{b=1}v_{k,b}\right)\\
&=  \sum^{n}_{j=1} \sum^{d}_{k=1}b_{a,j} \cdot x_{j,k} \cdot 0 =  0.
\end{aligned}
\end{equation}

Accordingly, we have the transformation $\varphi_{\mathrm{trans}}$ of the CCWT on self-attention modules, as follows:

\begin{equation}
\mW^v = \varphi_{\mathrm{trans}}(\mW^v)=(I-\frac{1}{m_v}\pmb 1_{m_v}^\top\pmb 1_{m_v})\mW^v.
\end{equation}

We have a corresponding CBWC for the attention module as below:
\begin{equation}
    \begin{aligned}
        \mW^v &= \varphi_{\mathrm{trans}}(\mW^v)=(I-\frac{1}{m_v}\pmb 1_{m_v}^\top\pmb 1_{m_v})\mW^v.\\
        \frac{\partial\mathcal{L}}{\partial\mW}&= \phi_{\mathrm{trans}}\left(\frac{\partial\mathcal{L}}{\partial\mV}\right)=\left(\mI - \frac{1}{m_v}\pmb 1_{m_v}\pmb 1_{m_v}^\top\right)^\top \frac{\partial\mathcal{L}}{\partial\mV}.
    \end{aligned}
\end{equation}

Moreover, multi-head attention modules contain an extra linear layer at the last. Simply applying CCC and CBWC of the linear layer onto it ensures zero-mean output.

\subsubsection{Post-LN Transformer}

\begin{figure}[h]
    \vspace{-1ex}
    \centering
    \begin{subfigure}[t]{0.30\textwidth}
        \centering
        \includegraphics[height=6.5ex]{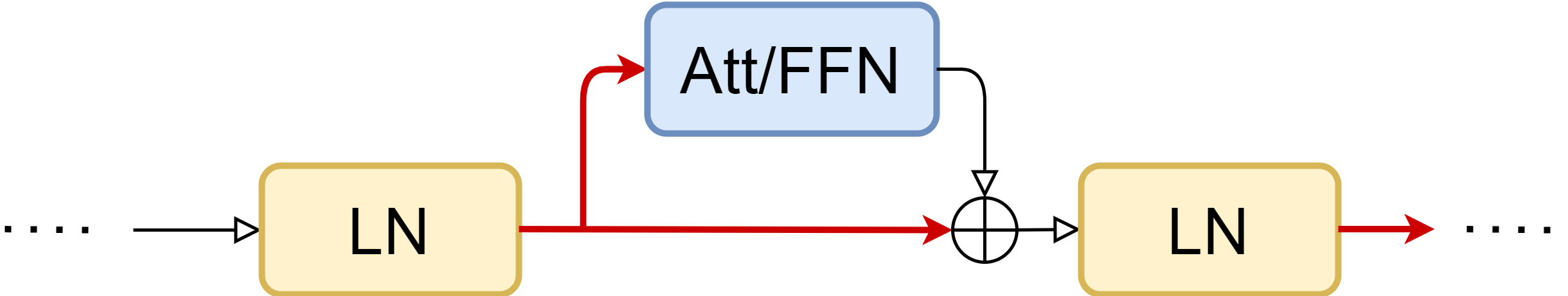}
        \caption{Post-LN transformer structure.}
    \end{subfigure}   
    \begin{subfigure}[t]{0.30\textwidth}
        \centering
    \includegraphics[height=6.5ex]{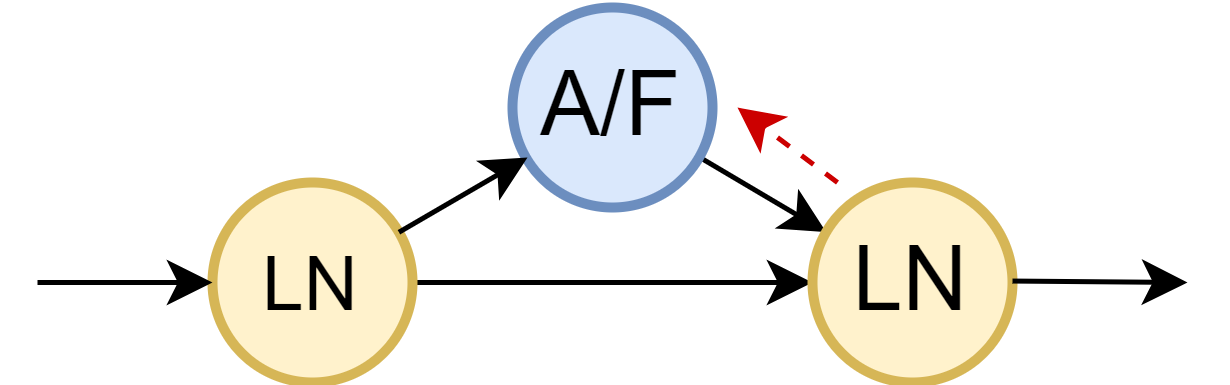}
        \caption{Zero-mean graph.}
    \end{subfigure}   
    \begin{subfigure}[t]{0.30\textwidth}
        \centering
    \includegraphics[height=6.5ex]{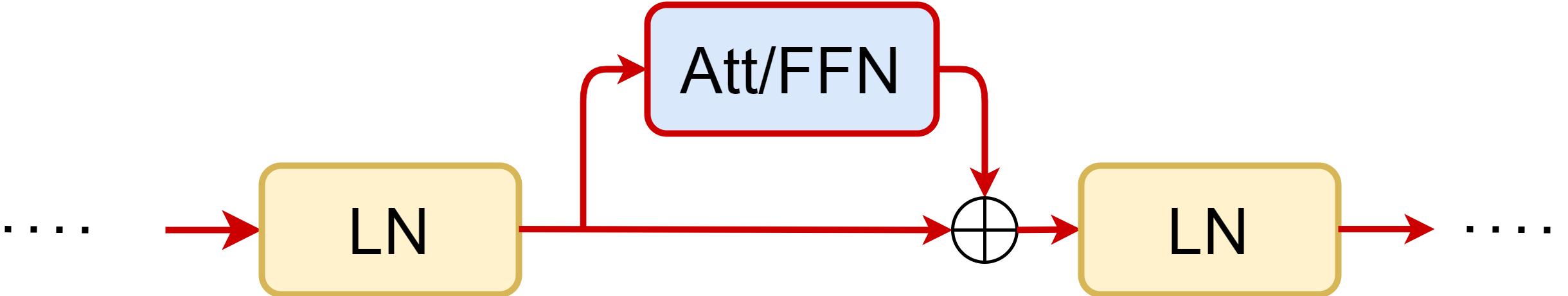}
        \caption{Our method on Post-LN.}
    \end{subfigure}   
    \vspace{-1ex}
\caption{The proof and application of our method on Post-LN. `Att' and `FFN' refer to the Attention layer and the Feed-Forward Network, which are both general linear layers.}
    \vspace{-2ex}
\end{figure}

For the post-LN transformer, the residual structure connects an LN and an self-attention module or a feed-forward network layer. Obviously, all the LNs are foldable.

\subsubsection{Pre-LN Transformer}

\begin{figure}[h]
    \vspace{-1ex}
    \centering
    \begin{subfigure}[t]{0.30\textwidth}
        \centering
        \includegraphics[height=5.5ex]{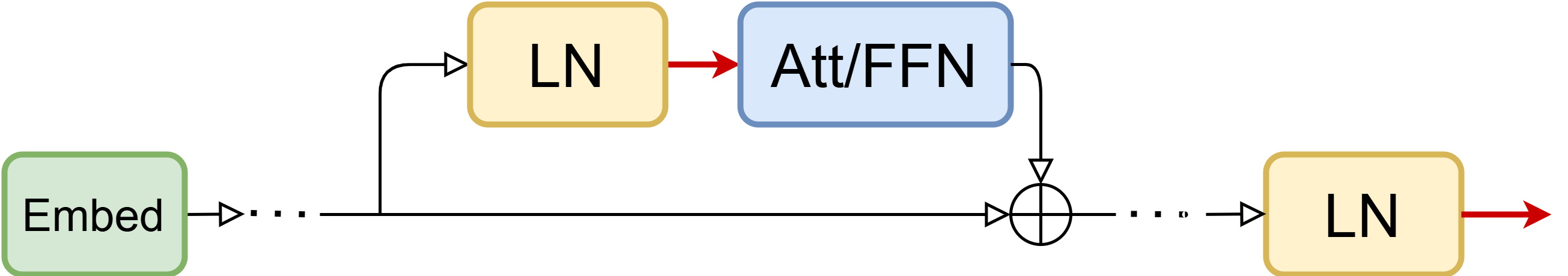}
        \caption{Pre-LN transformer structure.}
    \end{subfigure}   
    \begin{subfigure}[t]{0.30\textwidth}
        \centering
        \includegraphics[height=8ex]{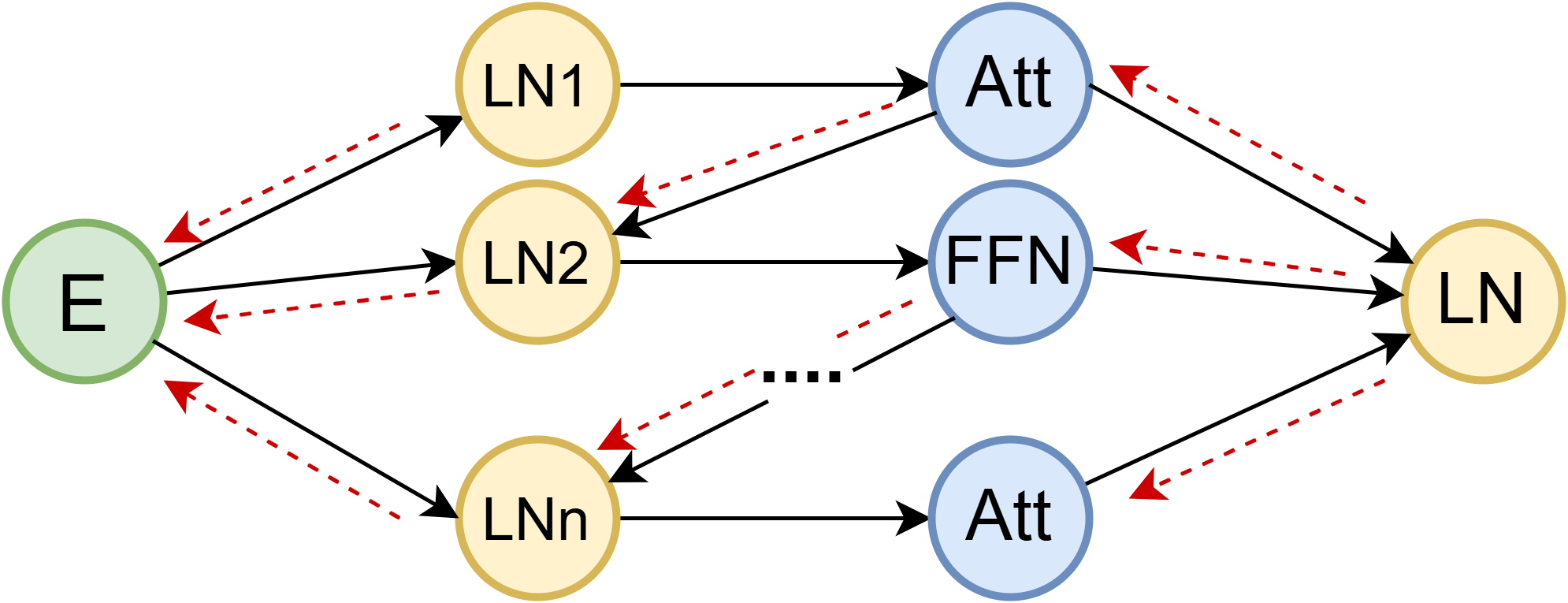}
        \caption{Zero-mean graph.}
    \end{subfigure}   
    \begin{subfigure}[t]{0.30\textwidth}
        \centering
        \includegraphics[height=5.5ex]{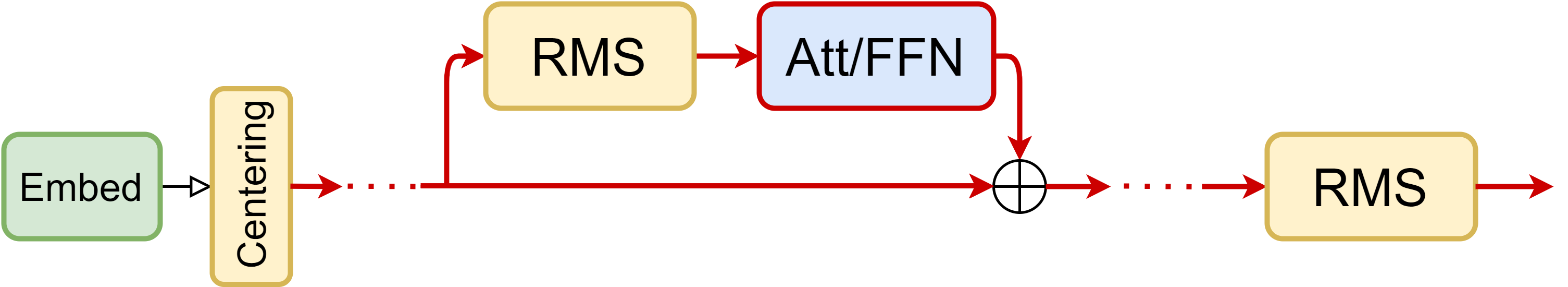}
        \caption{Our method on Pre-LN.}
    \end{subfigure}   
    \vspace{-1ex}
\caption{The proof and application of our method on Pre-LN. `Att' and `FFN' refers to the Attention layer and Feed-Forward Network, which are both general linear layers. `Embed' refers to the Embedding layer, which it is not.}
    \vspace{-2ex}
\end{figure}

Although the embedding block does not provide zero-mean output, an auxiliary centering operation can be inserted after it. For most models like GPT2, there is an LN after the last transformer block. In this case, all downstream LNs can be folded by applying CBWC to the attention and feed-forward layers. Therefore, all LNs in the pre-LN Transformer can be folded under the practical extension. This is also the main idea of the previous paper \cite{jiang2023pre}.

\section{Implementation Details of Algorithm 1}
\label{apx:f-ln-table}
\subsection{Foldable LN in Common Models}

Here, we list 11 common models and the number of LN and foldable LN in Table~\ref{table:fln-num}. Notice that some models follow the pre-norm Transformer architecture and some LNs can be folded under the practical extension. The models we mentioned here are 
GPT-2~\citep{Radford2019GPT2}, BERT~\citep{2019_ACL_Devlin}, ViT~\citep{2021_ICLR_Dosovitskiy}, Phi~\citep{2023_Phi_Textbooks_Gunasekar}, OPT~\citep{zhang2022opt}, BLOOM~\citep{2022_BLOOM_Scao} and VMamba~\citep{liu2024vmamba}.

\begin{table}[h]
\caption{Number of LN and foldable LN in 11 common models.}
\label{table:fln-num}
    \vspace{1ex}
\centering
\begin{tabular}{@{}lcccccc@{}}
\toprule
Model  & Total & Foldable & Percentage & Foldable (Practical) & Percentage \\ \midrule
GPT-2  & 25    & 0        & 0          & 25                          & 100.00\% \\
BERT   & 25    & 24       & 96.00\%    & 25                          & 100.00\% \\
ViT    & 25    & 0        & 0          & 25                          & 100.00\% \\
Phi    & 25    & 0        & 0          & 25                          & 100.00\% \\
OPT    & 25    & 0        & 0          & 25                          & 100.00\% \\
BLOOM  & 6     & 5        & 83.33\%    & 6                           & 100.00\% \\
VMamba  & 51     & 0        & 0          & 36                           & 70.59\%        \\ \bottomrule
\end{tabular}
\end{table}

To be mentioned, Phi3~\citep{abdin2024phi3technicalreporthighly}, Qwen2~\citep{yang2024qwen2technicalreport}, T5~\citep{raffel2023exploringlimitstransferlearning}, Mamba2~\citep{dao2024transformers} and LLaMA~\citep{touvron2023llama} originally use RMS\-Norm, instead of LN.

\subsection{Runtime and Memory Overhead}
The runtime and memory overhead of our automatic folding algorithm is minimal. The detection phase requires only one forward pass to determine which LN layers are foldable, after which we store a small list mapping foldable layers to their corresponding modules.

Here, we conduct extensive experiments on different models, hardware, batch size and sequence lengths. All inference comparisons use PyTorch’s official fused CUDA LayerNorm as the baseline, and our optimized CUDA RMS\-Norm kernel derived from it, which we have open-sourced.

We first conduct a confirmatory experiment on GPT-2 using a single RTX 3050 Laptop GPU. The folding algorithm takes 0.0347 seconds. For inference with batch size 32 and sequence length 512, the inference time is reduced from 0.2189 s to 0.2023 s (averaged over 100 runs), yielding a 7.6\% speedup. Notably, the memory overhead introduced by the folding algorithm itself is only 19.00 MB (which is released after folding), and the total tensor memory of the model remains unchanged (486.70 MB both before and after folding). These results demonstrate that our method achieves meaningful inference acceleration without any increase in model memory footprint.

To provide a broader empirical picture, we conduct additional experiments. We record time usage and memory usage for Bert, GPT2, OPT and Phi, on A100-40GB and V100-32GB. The batch size and sequence length for inference are 1 and 256 respectively. Memory footprint remains identical before and after folding. We list the results in the tables below:

\begin{table}[htbp]
\caption{Overhead and inference time benefits of folding algorithm on A100-32G GPU.}

\centering
\begin{tabular}{@{}rccccccc@{}}
\toprule
Model & Fold (s) & Init (s) & Ratio (\%) & Before (s) & After (s) & Speedup (\%) & Break-even\\ \midrule
Bert & 0.0452 & 2.4056  & 1.84 & 0.0077 & 0.0073 & 5.19  & 112.9 \\
GPT2 & 0.0651 & 3.0651  & 2.08 & 0.0119 & 0.0113 & 5.22  & 105.0 \\
OPT  & 0.0484 & 2.6283  & 1.81 & 0.0092 & 0.0078 & 14.60 & 36.1  \\
Phi  & 0.0928 & 22.2692 & 0.42 & 0.0531 & 0.0507 & 4.52  & 38.7  \\ \bottomrule
\end{tabular}
\end{table}

\begin{table}[htbp]
\caption{Overhead and benefits of folding algorithm on V100-32G GPU.}
\centering
\begin{tabular}{@{}rccccccc@{}}
\toprule
Model & Fold (s) & Init (s) & Ratio (\%) & Before (s) & After (s) & Speedup (\%) & Break-even\\ \midrule
Bert & 0.1263 & 2.0772  & 5.73 & 0.0095 & 0.0086 & 9.47  & 140.4 \\
GPT2 & 0.1639 & 2.6343  & 5.86 & 0.0115 & 0.0110 & 4.34  & 327.8 \\
OPT  & 0.1183 & 2.2988  & 4.90 & 0.0106 & 0.0091 & 14.73 & 75.6  \\
Phi  & 0.1779 & 21.8865 & 0.81 & 0.0693 & 0.0668 & 3.54  & 72.6  \\ \bottomrule
\end{tabular}
\end{table}
{\small
(a) Fold = Folding time.
(b) Init = Model initialize time.
(c) Ratio = Fold / (Fold + Init) × 100\%.
(d) Before = Inference latency without folding. 
(e) After = Inference latency with algorithm applied.  
(f) Speedup = (Before - After) / Before × 100\%.  
(g) Break-even = Fold / (Before - After).
}

The one-time folding algorithm ($\sim$60–100 ms on A100, $\sim$100-200 ms on V100) is fully amortized after only a few inference steps in any real deployment. Importantly, larger batch sizes and longer sequences achieve greater acceleration with our algorithm in inference, thus requiring fewer runs to break even on the algorithm cost.

Moreover, time usage of folding algorithm is theoretically independent of batch size and sequence length. We conduct experiments on Bert on V100, evaluated the algorithm time usage across six batch sizes (4 to 128) and four sequence lengths (64 to 4096). The result is listed below.

\begin{table}[htbp]
\caption{Folding algorithm time usage (ms) across different batch sizes (BS) and sequence lengths (SL).}
\label{table:bs-sl}
\centering
\begin{tabular}{@{}rccccccc@{}}
\toprule
BS\textbackslash{}SL & 64    & 256   & 1024  & 4096  & Average          & Std              & CV (\%)          \\ \midrule
4                    & 132.3 & 132.8 & 133.7 & 138.4 & 134.3            & 2.77             & 2.06             \\
8                    & 137.1 & 132.9 & 133.9 & 136.9 & 135.2            & 2.11             & 1.56             \\
16                   & 139.3 & 134.8 & 134.9 & 136.0 & 136.3            & 2.11             & 1.55             \\
32                   & 134.6 & 135.5 & 136.1 & 137.6 & 135.9            & 1.26             & 0.92             \\
64                   & 142.7 & 136.7 & 138.3 & 142.0 & 139.9            & 2.86             & 2.04             \\
128                  & 135.9 & 132.1 & 136.0 & 145.1 & 137.3            & 5.51             & 4.01             \\
Average              & 137.0 & 134.2 & 135.5 & 139.3 & 136.5            & \textbackslash{} & \textbackslash{} \\
Std                  & 3.65  & 1.80  & 1.73  & 3.50  & \textbackslash{} & 3.28             & \textbackslash{} \\
CV (\%)              & 2.66  & 1.34  & 1.28  & 2.51  & \textbackslash{} & \textbackslash{} & 2.40             \\ \bottomrule
\end{tabular}
\end{table}

As shown in Table~\ref{table:bs-sl}, despite a 2048-fold increase in the total number of processed tokens, the standard deviation of algorithm time usage across all 24 configurations is only 3.3 ms, with a coefficient of variation of merely 2.40\%. Both marginal averages (per row and per column) and their corresponding CV values remain below 4\%, confirming that algorithm time usage is essentially independent of both batch size and sequence length in practical deployment scenarios.

\section{Ablation Experiment of Centering Operation}
\label{apx:ablation}
For the ablation experiments on the centering operation in LN in Section~\ref{exp:ln}, we build up simple MLPs by stacking linear layers, LNs and ReLUs.
The depth of MLP (i.e., the number of linear layers) varies among 6, 15, and 35 on CIFAR-10 \cite{Krizhevsky2009CIFAR10}, with a width of 256 and 512. We introduce the residual structure to help converge for deeper MLP with a depth of 65 and 100 on MNIST \cite{lecun1998gradient}, with a width of 512. We train the model with a learning rate of 0.01 and a batch size of 256. We train all the models for 175 epochs. The training accuracy of all the models in this experiment is $100\%$. 

\begin{figure}[hbp]
    \vspace{-1ex}
    \centering
    \begin{subfigure}[t]{0.30\textwidth}
        \centering
        \includegraphics[height=20ex]{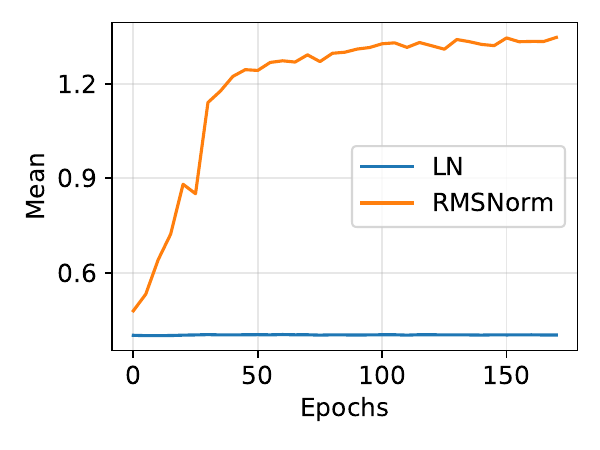}
        \caption{d=6 on CIFAR-10.}
    \end{subfigure}   
    \begin{subfigure}[t]{0.30\textwidth}
        \centering
      \includegraphics[height=20ex]{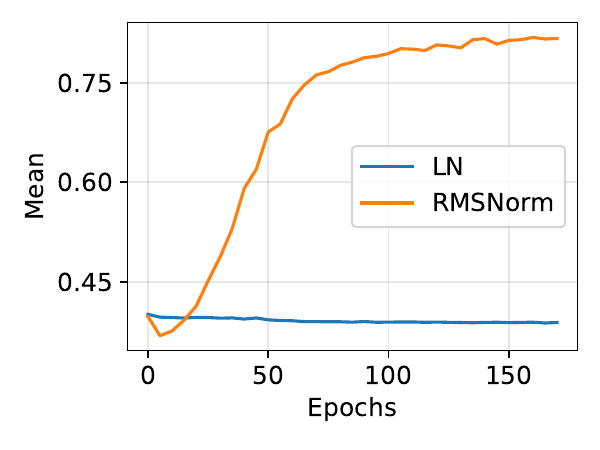}
        \caption{d=15 on CIFAR-10.}
    \end{subfigure}   
    \begin{subfigure}[t]{0.30\textwidth}
        \centering
      \includegraphics[height=20ex]{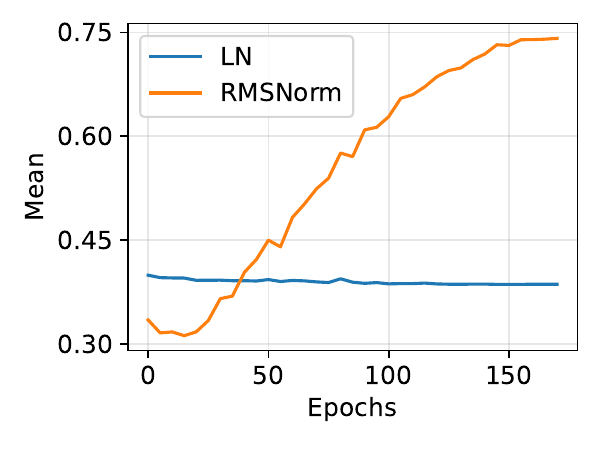}
        \caption{d=35 on CIFAR-10.}
    \end{subfigure}   
        \begin{subfigure}[t]{0.30\textwidth}
        \centering
      \includegraphics[height=20ex]{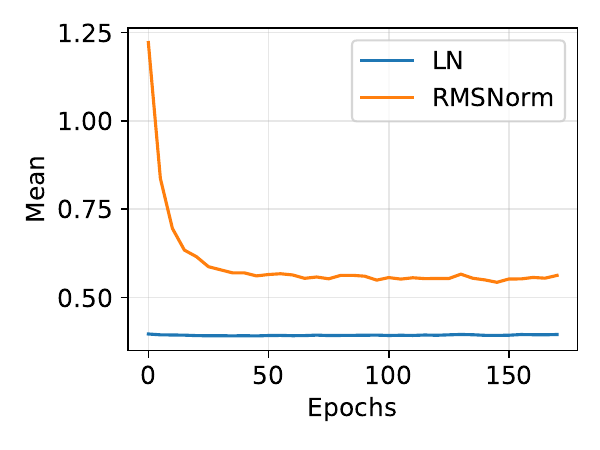}
        \caption{d=65 on MNIST.}
    \end{subfigure}   
    \begin{subfigure}[t]{0.30\textwidth}
        \centering
      \includegraphics[height=20ex]{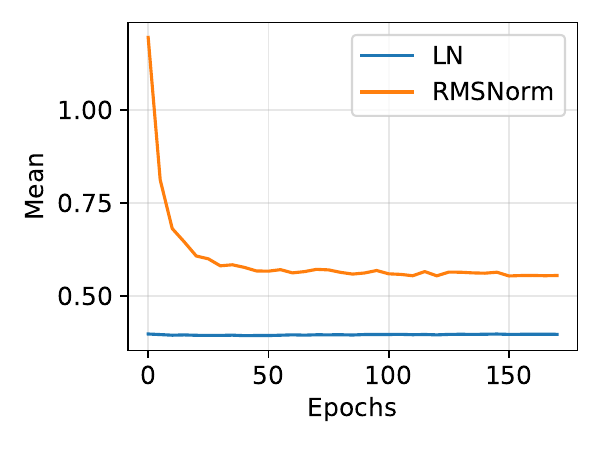}
        \caption{d=100 on MNIST.}
    \end{subfigure}   
    \vspace{-1ex}
  \caption{Mean of the final layer's input for MLPs of different depth ($d$). The change is similar to the change of norm.}
  \label{fig:mean-last-layer}
    \vspace{-2ex}
\end{figure}

We show the plot of the input norm of last layer during the training process in Figure~\ref{fig:exp-centering}.
Here, we also provide the input mean value of the last layer in Figure~\ref{fig:mean-last-layer}. Since it is quite close to the change of the input norm, we will not go into detail.

\begin{figure}[hbp]
    \vspace{-1ex}
    \centering
    \begin{subfigure}[t]{0.30\textwidth}
        \centering
      \includegraphics[height=20ex]{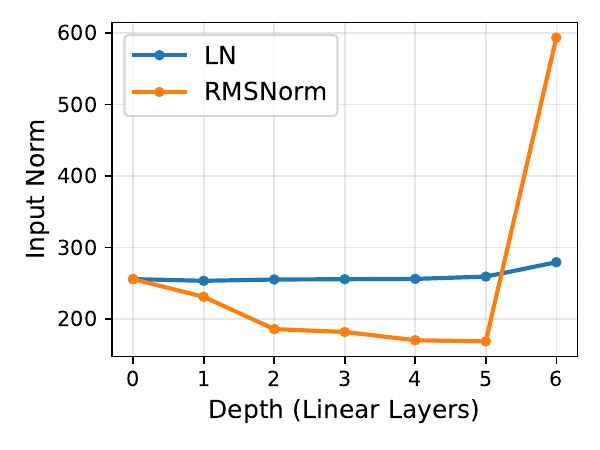}
        \caption{d=6 on CIFAR-10.}
    \end{subfigure}   
    \begin{subfigure}[t]{0.30\textwidth}
        \centering
      \includegraphics[height=20ex]{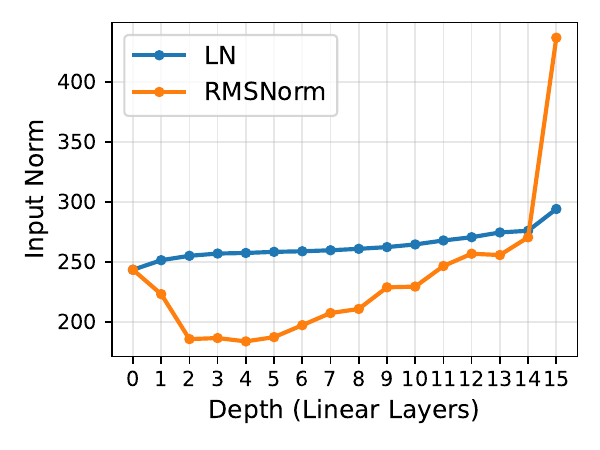}
        \caption{d=15 on CIFAR-10.}
    \end{subfigure}   
    \begin{subfigure}[t]{0.30\textwidth}
        \centering
      \includegraphics[height=20ex]{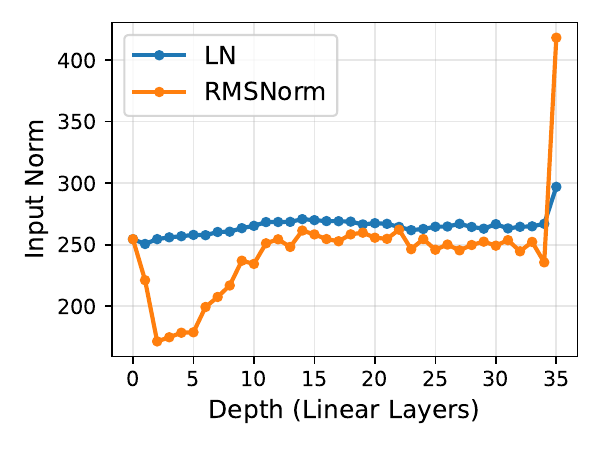}
        \caption{d=35 on CIFAR-10.}
    \end{subfigure}   
        \begin{subfigure}[t]{0.30\textwidth}
        \centering
      \includegraphics[height=20ex]{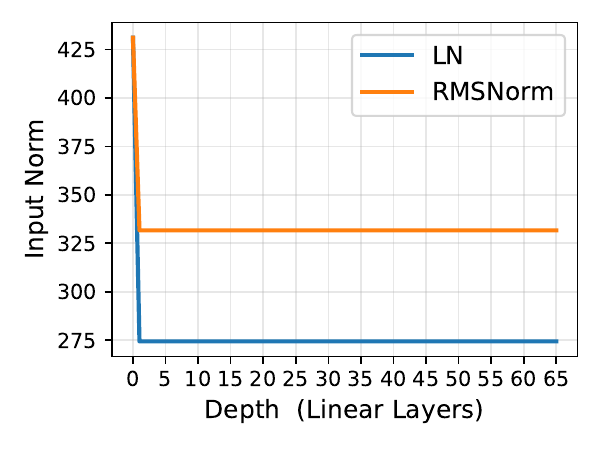}
        \caption{ d=65 on MNIST.}
    \end{subfigure}   
    \begin{subfigure}[t]{0.30\textwidth}
        \centering
      \includegraphics[height=20ex]{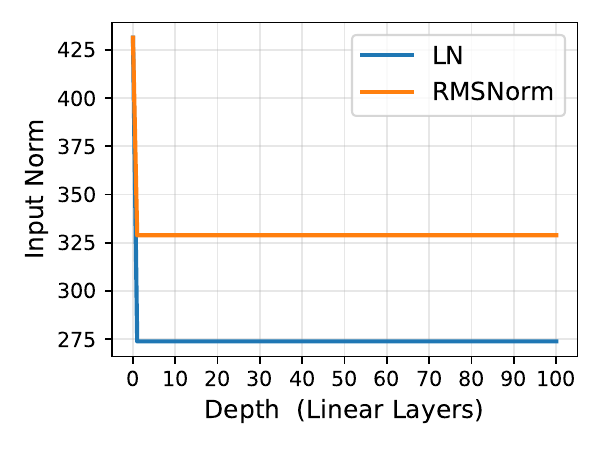}
        \caption{d=100 on MNIST.}
    \end{subfigure}   
    \vspace{-1ex}
  \caption{Norm of the input across linear layers for MLPs of different depth ($d$). LN better controls the norm of samples throughout the model.}
  \label{fig:norm-linear-layer}
\end{figure}

Additionally, we recorded the input norms of every linear layer across all layers for the four models in the final training epoch in Figure~\ref{fig:norm-linear-layer}. The results are summarized as follows:

\begin{itemize}
    \item In plain MLP networks (without residual connections), the input norms of the LN model are not strictly smaller than those of RMS\-Norm at every layer. However, the RMS\-Norm-based models exhibit a sharp spike in input norm at the final layer, accompanied by clear oscillations in input norms between layers.
    \item In residual-connected MLP networks, the LN model consistently shows significantly smaller input norms than RMS\-Norm at every single layer. Moreover, these input norms remain remarkably stable and nearly constant across layers—an effect directly induced by the residual connections.
\end{itemize}

\begin{figure}[hbp]
    \vspace{-1ex}
    \centering
    \begin{subfigure}[t]{0.30\textwidth}
        \centering
      \includegraphics[height=20ex]{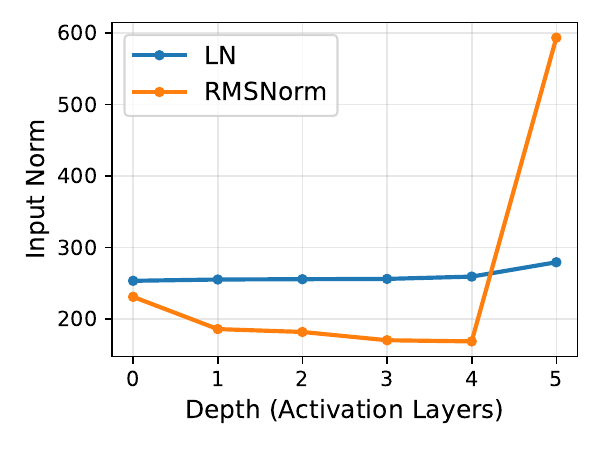}
        \caption{ d=6 on CIFAR-10.}
    \end{subfigure}   
    \begin{subfigure}[t]{0.30\textwidth}
        \centering
      \includegraphics[height=20ex]{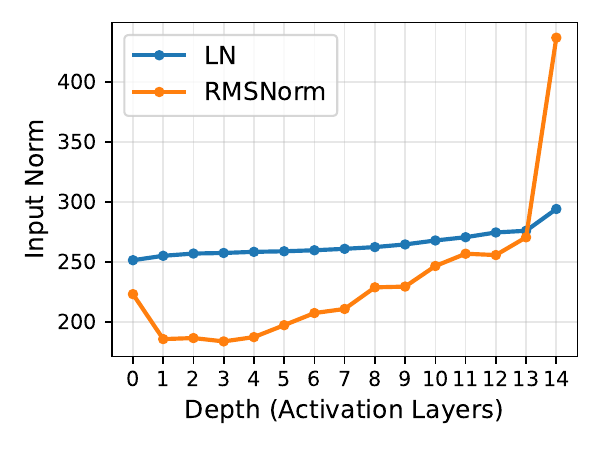}
        \caption{ d=15 on CIFAR-10.}
    \end{subfigure}   
    \begin{subfigure}[t]{0.30\textwidth}
        \centering
      \includegraphics[height=20ex]{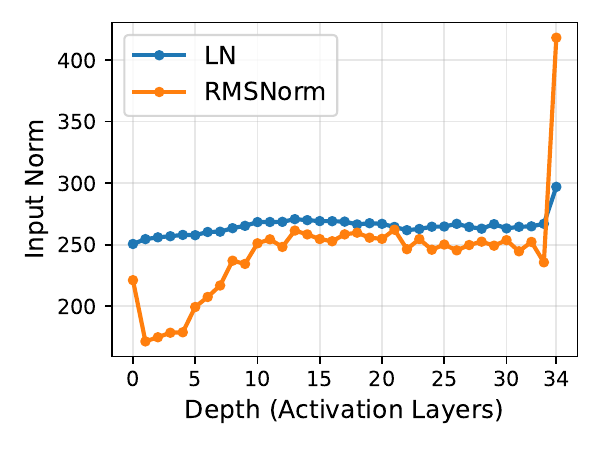}
        \caption{ d=35 on CIFAR-10.}
    \end{subfigure}   
        \begin{subfigure}[t]{0.30\textwidth}
        \centering
      \includegraphics[height=20ex]{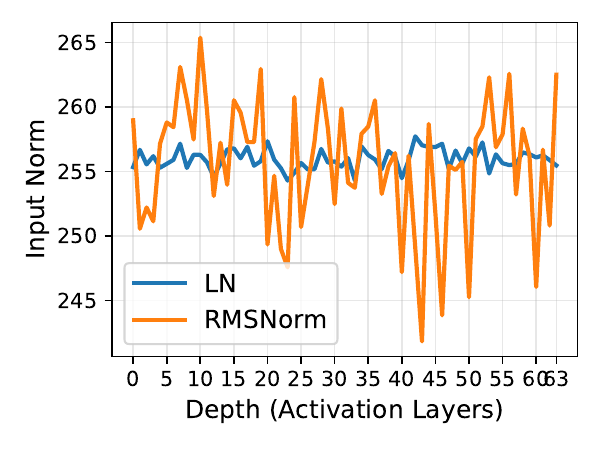}
        \caption{ d=65 on MNIST.}
    \end{subfigure}   
    \begin{subfigure}[t]{0.30\textwidth}
        \centering
      \includegraphics[height=20ex]{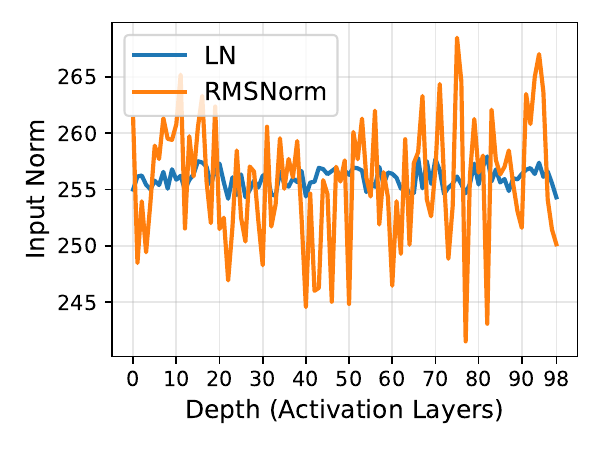}
        \caption{d=100 on MNIST.}
    \end{subfigure}   
    \vspace{-1ex}
  \caption{Norm of the input across activation layers for MLPs of different depth $d$. The change is similar to the change of norm.}
  \label{fig:norm-activation-layer}
  \vspace{-2ex}
\end{figure}

\begin{figure}[hbp]
    \vspace{-1ex}
    \centering
    \begin{subfigure}[t]{0.30\textwidth}
        \centering
      \includegraphics[height=20ex]{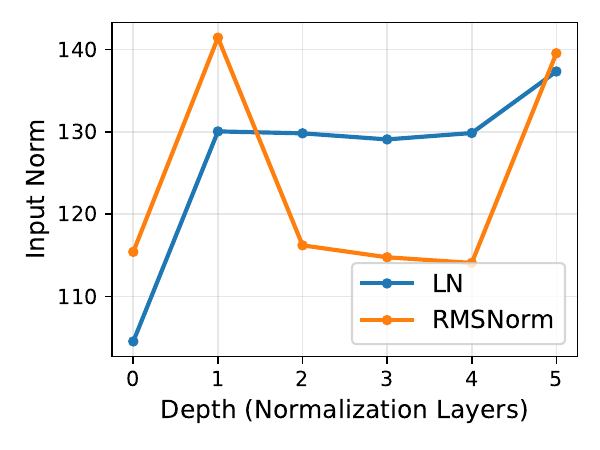}
        \caption{d=6 on CIFAR-10.}
    \end{subfigure}   
    \begin{subfigure}[t]{0.30\textwidth}
        \centering
      \includegraphics[height=20ex]{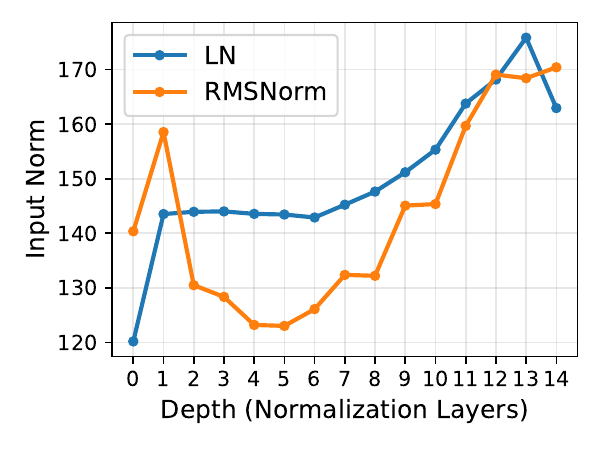}
        \caption{d=15 on CIFAR-10.}
    \end{subfigure}   
    \begin{subfigure}[t]{0.30\textwidth}
        \centering
      \includegraphics[height=20ex]{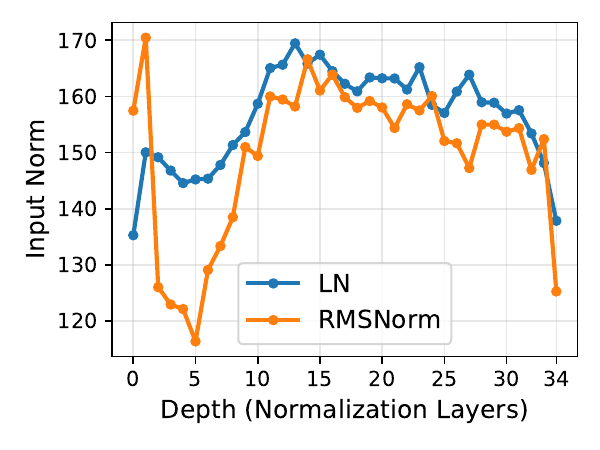}
        \caption{d=35 on CIFAR-10.}
    \end{subfigure}   
        \begin{subfigure}[t]{0.30\textwidth}
        \centering
      \includegraphics[height=20ex]{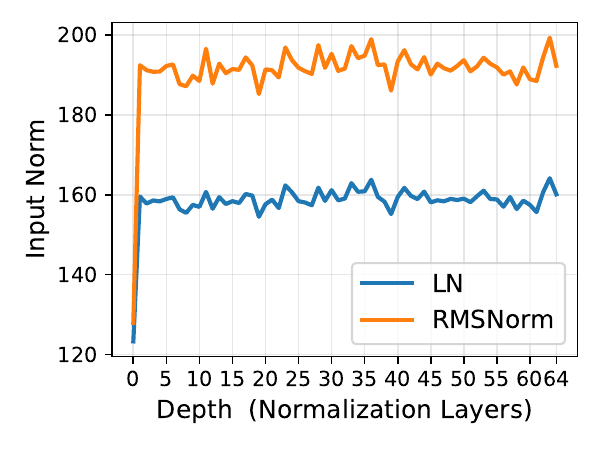}
        \caption{d=65 on MNIST.}
    \end{subfigure}   
    \begin{subfigure}[t]{0.30\textwidth}
        \centering
      \includegraphics[height=20ex]{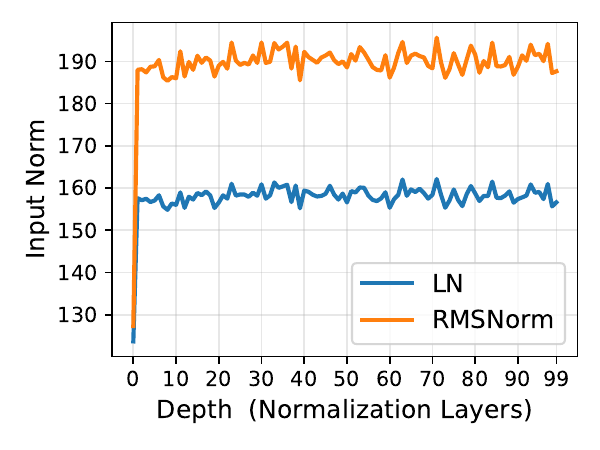}
        \caption{d=100 on MNIST.}
    \end{subfigure}   
    \vspace{-1ex}
  \caption{Norm of the input across normalization layers for MLPs of different depth $d$.}
  \label{fig:norm-norm-layer}
  \vspace{-2ex}
\end{figure}

We also observed the variations in input norms across other layers. As depicted in Figure~\ref{fig:norm-activation-layer}, the trend observed in activation function layers is similar to that in linear layers, which we will not elaborate on further. To be noted, the range of input norm with residual connection is actually relatively small, between 240 and 270.
As for the normalization layers shown in Figure~\ref{fig:norm-norm-layer}, without residual connections, the range of input norms under RMS\-Norm exhibits more significant fluctuations, and the inter layer oscillations are greater; in the presence of residual connections, the input norms are effectively constrained by LN. 

Therefore, we believe that the centering operation of LN controls the range of activations both across layers and throughout the training process.

\section{Inference Acceleration}
\label{apx:inference-accele}

In this section, we analyze the inference-time acceleration brought by CBWC+RMS\-Norm, both theoretically and empirically. Since CBWC is applied only once before deployment and the weight matrix remains fixed during inference, the runtime gain mainly comes from replacing LN with RMS\-Norm.

\subsection{Theoretical Analysis}
\label{apx:inference-accele-theory}
Theoretically, since calculating RMS\-Norm has fewer computational steps compared to LN, replacing LN with RMS\-Norm has acceleration in FLOPs, thus reflecting on throughput.

\subsubsection{Layer-level FLOPs Analysis}
FLOPs (Floating-Point Operations) measure the total number of floating-point operations required for one forward pass and are a standard metric for evaluating the computational cost of a model. Although FLOPs are decoupled from the number of parameters, they directly affect inference latency and energy consumption, and are therefore widely used in efficiency-oriented research.

We assume that addition, subtraction, and multiplication each cost one clock tick, while division costs three clock ticks.

Consider a sample $\rvx$ of dimension $d$. LN and RMS\-Norm are defined as follows:

\begin{equation}
\label{eqn:ln}
    \mathrm{LN}(\rvx)=\frac{\rvx-\mu}{\sqrt{\sigma^2+\epsilon}},\quad
    \text{where}\quad
    \mu = \frac{1}{d}\sum_{i=1}^d x_i,\quad
    \sigma^2 = \frac{1}{d}\sum_{i=1}^d(x_i-\mu)^2.
\end{equation}

\begin{equation}
\label{eqn:rms}
    \mathrm{RMS}(\rvx)=\frac{\rvx}{\sqrt{\sigma_{rms}^2+\epsilon}},\quad
    \text{where}\quad
    \sigma_{rms}^2 = \frac{1}{d}\sum_{i=1}^d x_i^2.
\end{equation}

According to the formulas above, we compute the operation counts in Table~\ref{table:flops}. For a $d$-dimensional sample, LN requires $5d$ additions, $2d$ multiplications, and $d$ divisions, whereas RMS\-Norm requires only $d$ additions, $2d$ multiplications, and $d$ divisions. This indicates a clear layer-level computational advantage for RMS\-Norm.

\begin{table}[htbp]
\caption{FLOPs calculation for LN and RMS\-Norm.}
\label{table:flops}
\centering
\begin{tabular}{@{}lcccccccc@{}}
\toprule
\multirow{2}{*}{Steps} &  & \multicolumn{3}{c}{LN} &  & \multicolumn{3}{c}{RMS} \\ \cmidrule(l){3-9}
 &  & $+/-$ & $\times$ & $\div$ &  & $+/-$ & $\times$ & $\div$ \\ \midrule
Calculating average ($\mu$) &  & $d-1$ & $0$ & $1$ &  & --- & --- & --- \\
Calculating variance ($\sigma^2$) &  & $2d-1$ & $d$ & $1$ &  & $d-1$ & $d$ & $1$ \\ \midrule
Inverse square root for variance &  & \multicolumn{7}{c}{One \texttt{rsqrt}, ignored} \\
Centering \& scaling for each dimension &  & $d$ & $0$ & $d$ &  & $0$ & $0$ & $d$ \\
Affine &  & $d$ & $d$ & $0$ &  & $0$ & $d$ & $0$ \\ \midrule
Total &  & $5d$ & $2d$ & $d$ &  & $d$ & $2d$ & $d$ \\ \bottomrule
\end{tabular}
\end{table}

\subsubsection{Welford-based FLOPs Analysis}

In practice, PyTorch implements fused LN using the Welford algorithm, whose operation count differs from the naive formulation. We therefore also compare LN and RMS\-Norm under a Welford-style implementation, which better reflects practical GPU execution.

Instead of traversing the data twice, this method requires only one traversal, with the following updates:
\begin{equation}
\begin{aligned}
    \overline{x}_{n+1} & = \overline{x}_{n} + \frac{x_{n+1} - \overline{x}_{n}}{n+1}, \\
    \widetilde{\sigma^2}_{n+1} & = \widetilde{\sigma^2}_{n} + (x_{n+1}-\overline{x}_n)(x_{n+1}-\overline{x}_{n+1}),
\end{aligned}
\end{equation}
where $\overline{x}_{n}$ and $\widetilde{\sigma^2}_{n}=n\sigma_{n}^2$ denote the mean and the scaled variance of the first $n$ elements.

For parallel computation, we combine two groups $M$ and $N$ with $m$ and $n$ elements through
\begin{equation}
\begin{aligned}
    \overline{x}^{(M\cup N)} & = \frac{m}{m+n}\overline{x}^{(M)} + \frac{n}{m+n}\overline{x}^{(N)}, \\
    \widetilde{\sigma^2}^{(M\cup N)} & = \widetilde{\sigma^2}^{(M)}+\widetilde{\sigma^2}^{(N)}+\frac{mn}{m+n}\left(\overline{x}^{(M)}-\overline{x}^{(N)}\right)^2.
\end{aligned}
\end{equation}

Similarly, RMS\-Norm under the same computation pattern can be written as
\begin{equation}
    \widetilde{\sigma^2}_{n+1} = \widetilde{\sigma^2}_{n} + x_{n+1}^2,
\end{equation}
and
\begin{equation}
    \widetilde{\sigma^2}^{(M\cup N)}=\widetilde{\sigma^2}^{(M)}+\widetilde{\sigma^2}^{(N)},
\end{equation}
which are substantially simpler than the corresponding updates for LN.

According to the formulas above, we compute the operation counts in Table~\ref{tab:flops-welford}. For a $d$-dimensional sample divided into $g$ groups for parallel computation, LN requires $7d$ additions, $3d+7g$ multiplications, and $d$ divisions, whereas RMS\-Norm requires only $d$ additions and $3d$ multiplications.

\begin{table}[htpb]
\centering
\caption{FLOPs calculation for LN and RMS\-Norm under the Welford algorithm.}
\label{tab:flops-welford}
\begin{tabular}{@{}lccccccccc@{}}
\toprule
\multirow{2}{*}{Steps} & \multirow{2}{*}{Times} &  & \multicolumn{3}{c}{LN} &  & \multicolumn{3}{c}{RMS} \\ \cmidrule(l){3-10}
 &  &  & $+/-$ & $\times$ & $\div$ &  & $+/-$ & $\times$ & $\div$ \\ \midrule
Group initialization ($x_1^2$) & $g$ &  & $0$ & $1$ & $0$ &  & $0$ & $1$ & $0$ \\
Summing within groups & $d-g$ &  & $5$ & $1$ & $1$ &  & $1$ & $1$ & $0$ \\
Combining groups & $g-1$ &  & $5$ & $7$ & $1$ &  & $1$ & $0$ & $0$ \\ \midrule
Inverse square root for variance & $1$ &  & \multicolumn{7}{c}{One \texttt{rsqrt}, ignored} \\
Centering \& scaling for each dimension & $d$ &  & $1$ & $1$ & $0$ &  & $0$ & $1$ & $0$ \\
Affine & $d$ &  & $1$ & $1$ & $0$ &  & $0$ & $1$ & $0$ \\ \midrule
Total & --- &  & $7d$ & $3d+7g$ & $d$ &  & $d$ & $3d$ & $0$ \\ \bottomrule
\end{tabular}
\end{table}

Although the Welford algorithm increases the theoretical FLOPs compared with the naive formulation, it reduces traversal overhead and enables efficient fused GPU execution in practice. Under this implementation, the estimated normalization-layer latency is reduced from approximately $13d+7g$ clock ticks for LN to $4d$ for RMS\-Norm, corresponding to a theoretical reduction of roughly 60\% to 80\% for the normalization step itself.

A similar improvement is expected for throughput. Since our method mainly removes the centering operation and does not significantly change model size or memory footprint, the throughput gain is primarily determined by the reduction in computational latency. Therefore, the throughput trend is expected to be consistent with the latency analysis above.

\subsubsection{From Layer-level Savings to Model-level Speedup}

Although the normalization layer itself can be substantially accelerated, the end-to-end model speedup depends on the fraction of inference time originally spent in LN. We therefore estimate the average proportion of LN latency in several representative models and combine it with the layer-level savings above to obtain an expected whole-model speedup.

We estimate the average inference-time proportion of LN by running each model 1000 times on a single RTX 3090 GPU. Based on the Welford-style analysis above, we expect RMS\-Norm to provide a 50\% to 80\% reduction for the normalization layer itself. The resulting estimated end-to-end speedups are shown in Table~\ref{table:timeusage-of-LN}.

\begin{table}[htpb]
\centering
\caption{Estimated end-to-end inference speedup from LN time proportion in representative models.}
\label{table:timeusage-of-LN}
\begin{tabular}{@{}lcccc@{}}
\toprule
Model & Total Time Usage & LN Time Usage & Proportion & Expected Acceleration \\ \midrule
GPT-2 & 7.125218 & 0.763676 & 10.72\% & \textasciitilde\ 6.5\% \\
BERT  & 7.462299 & 0.713215 & 9.56\%  & \textasciitilde\ 6\%   \\
BLOOM & 2.321148 & 0.191480 & 8.25\%  & \textasciitilde\ 5\%   \\
OPT   & 10.14867 & 0.743909 & 7.33\%  & \textasciitilde\ 4.5\% \\
ViT   & 6.552383 & 0.766613 & 11.70\% & \textasciitilde\ 7\%   \\ \bottomrule
\end{tabular}
\vspace{-2ex}
\end{table}

\subsection{Reference Experiment with Custom LN and RMSNorm}
\label{apx:inference-accele-ref}

Previous work~\citep{jiang2023pre} compared custom implementations of LN and RMS\-Norm that directly follow Eqns.~\ref{eqn:ln} and~\ref{eqn:rms}. Such comparisons are useful as a reference, but they do not reflect real deployment settings, because PyTorch already provides a highly optimized fused LN kernel, whereas RMS\-Norm requires a dedicated optimized implementation to be competitive.

In this reference experiment, both LN and RMS\-Norm are implemented by our team without vendor-level acceleration. As a result, neither implementation reflects the optimized inference kernels used in practice. In particular, a custom LN implementation is substantially slower than PyTorch’s fused LN, which causes LN to occupy an unrealistically large proportion of total model runtime. Moreover, the gap between LN and RMS\-Norm under this setting differs from the Welford-based comparison in Table~\ref{tab:flops-welford}. We therefore report this experiment only as a reference point, including both runtime and CUDA memory statistics.

\begin{table}[htpb]
\centering
\caption{Average total and CUDA runtime (ms) and acceleration percentage for 16 runs across 6 models using custom LN and custom RMS\-Norm implementations.}
\label{tab:custom-ln-rms-time}
\vskip 0.1in
\begin{tabular}{@{}llccclccc@{}}
\toprule
\multirow{2}{*}{Model} &  & \multicolumn{3}{c}{Total Runtime} &  & \multicolumn{3}{c}{CUDA Runtime} \\ \cmidrule(lr){3-5} \cmidrule(l){7-9}
 &  & Folded & Original & Acceleration &  & Folded & Original & Acceleration \\ \midrule
GPT-2 &  & 10.929 & 11.923 & 8.34\%  &  & 2.784 & 2.933 & 5.07\%  \\
BERT  &  & 11.949 & 12.916 & 7.49\%  &  & 2.904 & 3.146 & 7.70\%  \\
BLOOM &  & 3.322  & 3.548  & 6.38\%  &  & 0.246 & 0.285 & 13.59\% \\
OPT   &  & 12.363 & 12.692 & 2.59\%  &  & 2.932 & 2.994 & 2.08\%  \\
Phi   &  & 32.005 & 32.556 & 1.69\%  &  & 26.290 & 26.691 & 1.50\% \\
ViT   &  & 10.709 & 11.683 & 8.34\%  &  & 4.457 & 4.723 & 5.64\%  \\ \bottomrule
\end{tabular}
\end{table}

Table~\ref{tab:custom-ln-rms-time} shows speedups ranging from roughly 2\% to 12\%, depending on the LN proportion and model structure. We also report CUDA memory usage in Table~\ref{tab:cuda-mem-custom}, where the measured memory reduction ranges from roughly 0.5\% to 8.5\%.

\begin{table}[htpb]
\centering
\caption{Average CUDA memory usage (Bytes) and reduction percentage for 16 runs across 6 models using custom LN and custom RMS\-Norm implementations.}
\label{tab:cuda-mem-custom}
\vskip 0.1in
\begin{tabular}{@{}lccc@{}}
\toprule
Model & Folded & Original & Reduction \\ \midrule
GPT-2 & 303602176  & 313445376  & 3.140\% \\
BERT  & 119192576  & 129035776  & 7.628\% \\
BLOOM & 7358976    & 7558656    & 2.642\% \\
OPT   & 161150976  & 161939968  & 0.487\% \\
Phi   & 1350695936 & 1376923136 & 1.905\% \\
ViT   & 163479552  & 178634752  & 8.484\% \\ \bottomrule
\end{tabular}
\end{table}

\subsection{Main CUDA Verification Experiments}
\label{apx:acc-exp}

\subsubsection{Main Setting and Primary Results}

We compare our custom CUDA-accelerated RMS\-Norm kernel against PyTorch’s official fused LN implementation, which serves as the strongest practical baseline. PyTorch’s LN already incorporates a highly optimized fused implementation, whereas its RMS\-Norm support is not a comparably strong baseline for this purpose. Therefore, we implement our own CUDA RMS\-Norm kernel using the same Welford algorithm as fused LN, yielding a comparison that is much closer to the practical upper bound of achievable inference speedup.

We use \texttt{torch.profiler} to trace both total runtime and CUDA runtime. For each model, we conduct 16 runs, each averaged over 100 independent inference passes. The measured runtime of each run is normalized by the mean baseline runtime of the corresponding model. Due to limited hardware resources, the start times of different comparison groups are not perfectly aligned, which introduces some variability in measured runtime.

\begin{figure}[hbp]
    \begin{subfigure}[t]{0.48\textwidth}
        \centering
        \includegraphics[height=15ex]{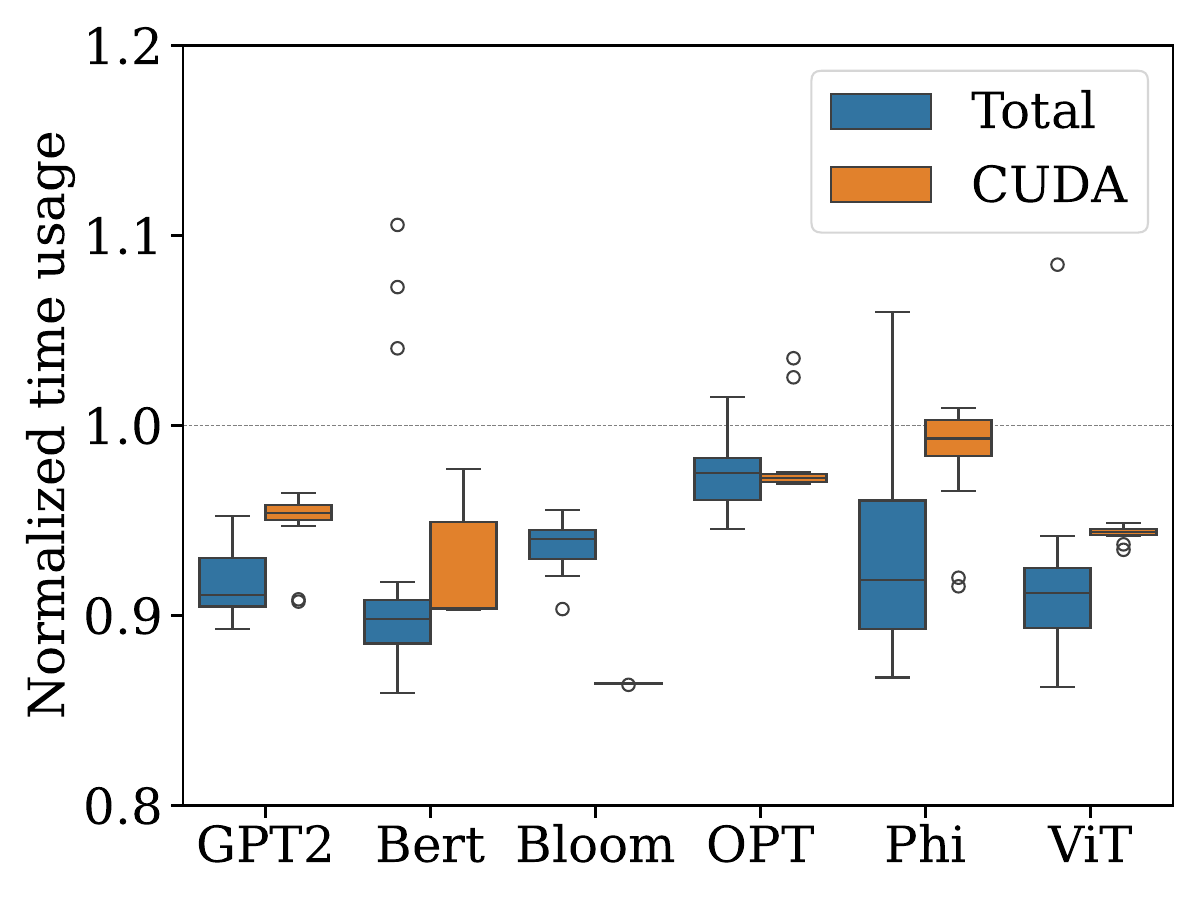}
        \caption{Normalized latency.}
    \end{subfigure}
    \begin{subfigure}[t]{0.48\textwidth}
        \centering
        \includegraphics[height=15ex]{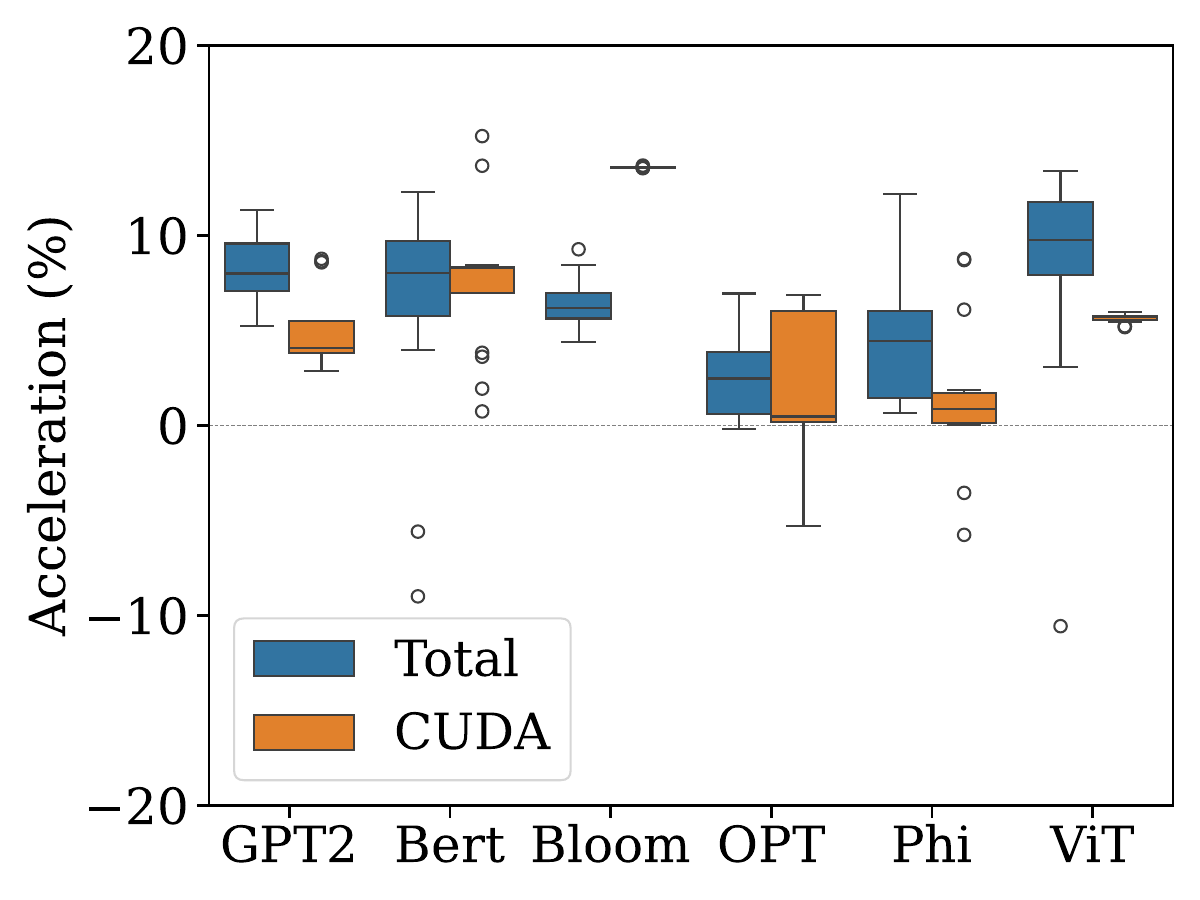}
        \caption{Acceleration ratio.}
    \end{subfigure}
    \caption{Inference latency comparison across six representative models (GPT-2, BERT, BLOOM, OPT, Phi-3, and ViT). Our CBWC+RMS\-Norm achieves a consistent end-to-end runtime reduction of 2\%--12\% with no accuracy degradation.}
\end{figure}

We also measure the proportion of inference runtime originally spent in LN for these models, as reported in Table~\ref{table:timeusage-of-LN-main}.

\begin{table}[htpb]
\centering
\caption{Average inference runtime (ms) and LN proportion in representative models.}
\label{table:timeusage-of-LN-main}
\vspace{1ex}
\begin{tabular}{@{}lccc@{}}
\toprule
Model & Total Time Usage & LN Time Usage & Proportion \\ \midrule
BERT  & 7.462299 & 0.713215 & 9.56\%  \\
BLOOM & 2.321148 & 0.191480 & 8.25\%  \\
GPT-2 & 7.125218 & 0.763676 & 10.72\% \\
OPT   & 10.14867 & 0.743909 & 7.33\%  \\
ViT   & 6.552383 & 0.766613 & 11.70\% \\ \bottomrule
\end{tabular}
\end{table}

The inference performance of BERT, GPT-2, Phi, and OPT in Section~\ref{sec:acceleration} was evaluated on a single NVIDIA A100-40GB GPU. We first performed 50 warm-up inferences, followed by 15 runs of 50 measured inferences each. All measurements used a fixed batch size of 2 and an input/output sequence length of 1024 tokens.

For analysis, we first computed the mean baseline runtime for each run, and then normalized all measurements within that run against this baseline. The reported acceleration percentages were computed accordingly.

\begin{figure}[hbp]
    \centering
    \begin{subfigure}[t]{0.48\textwidth}
        \centering
        \includegraphics[height=18ex]{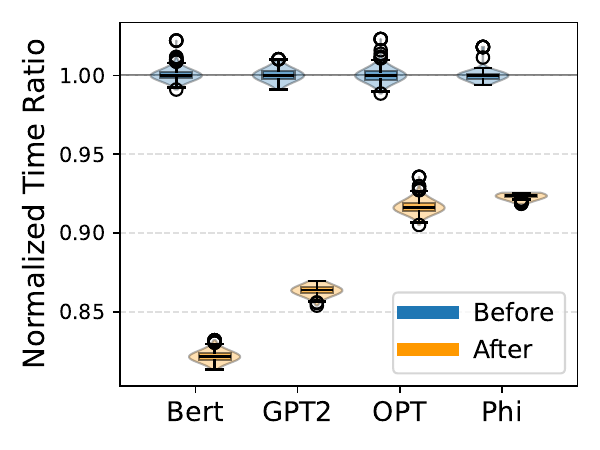}
        \caption{Normalized latency (seq=4096).}
    \end{subfigure}
    \begin{subfigure}[t]{0.48\textwidth}
        \centering
        \includegraphics[height=18ex]{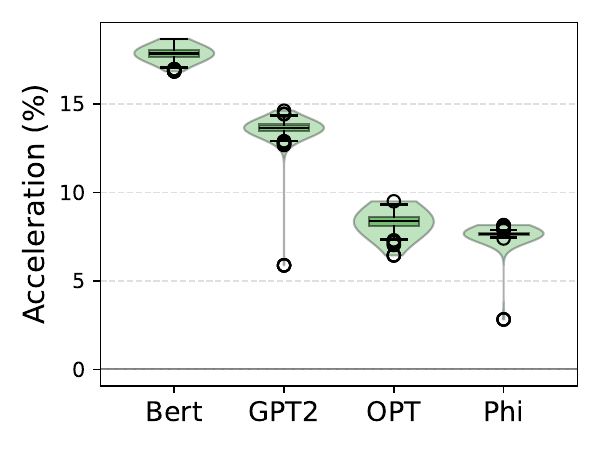}
        \caption{Acceleration ratio (seq=4096).}
    \end{subfigure}
    \begin{subfigure}[t]{0.48\textwidth}
        \centering
        \includegraphics[height=18ex]{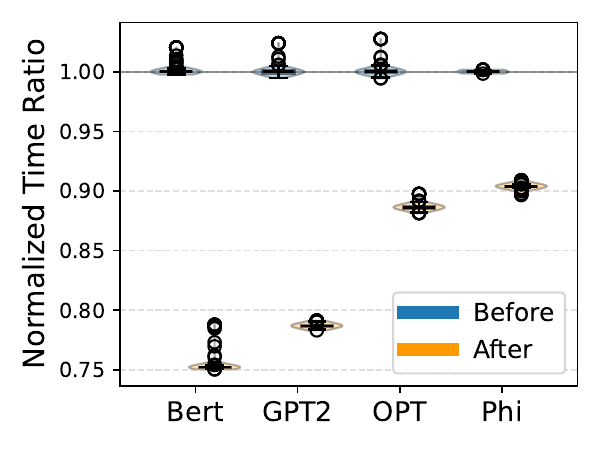}
        \caption{Normalized latency (seq=16384).}
    \end{subfigure}
    \begin{subfigure}[t]{0.48\textwidth}
        \centering
        \includegraphics[height=18ex]{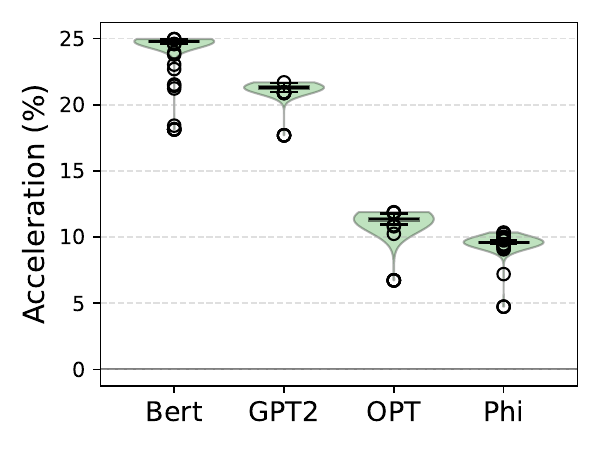}
        \caption{Acceleration ratio (seq=16384).}
    \end{subfigure}
    \caption{Inference latency comparison under longer sequence lengths. Our CBWC+RMS\-Norm achieves larger end-to-end speedups as sequence length increases.}
    \label{fig:cuda-total-time-accel-4096}
\end{figure}

We further investigate performance at longer sequence lengths of 4096 and 16384 tokens with a fixed batch size of 2. As shown in Figure~\ref{fig:cuda-total-time-accel-4096}, the acceleration becomes more pronounced under these conditions, which is consistent with the theoretical analysis.

\subsubsection{Latency and Throughput on A100 and V100}
We evaluate the inference latency and throughput of BERT, GPT-2, and OPT on single NVIDIA V100-32GB and A100-40GB GPUs.

\begin{table}[htbp]
\caption{Average inference latency (seconds) on a single GPU with batch size 2 and sequence length 256. Results are averaged over 5 runs, each with 50 measured inferences after 50 warm-up steps.}
\label{table:avg-time-hardware}
\vspace{1ex}
\centering
\begin{tabular}{@{}llccc@{}}
\toprule
Model & Hardware & Original Inference Time (s) & New Inference Time (s) & Acceleration (\%) \\ \midrule
BERT  & V100 & 0.0116 & 0.0111 & 3.90 \\
GPT-2 & V100 & 0.0121 & 0.0117 & 3.57 \\
OPT   & V100 & 0.0119 & 0.0116 & 2.53 \\
BERT  & A100 & 0.0090 & 0.0087 & 2.79 \\
GPT-2 & A100 & 0.0093 & 0.0089 & 4.22 \\
OPT   & A100 & 0.0094 & 0.0089 & 5.85 \\ \bottomrule
\end{tabular}
\end{table}

\begin{table}[htbp]
\caption{Average inference throughput (tokens/second) on a single GPU with batch size 2 and sequence length 256. Results are averaged over 5 runs, each with 50 measured inferences after 50 warm-up steps.}
\label{table:avg-throughput-hardware}
\vspace{1ex}
\centering
\begin{tabular}{@{}llccc@{}}
\toprule
Model & Hardware & Original Throughput (tokens/s) & New Throughput (tokens/s) & $\Delta$ (tokens/s) \\ \midrule
BERT  & V100 & 44329.0 & 46126.1 & +1797.1 \\
GPT-2 & V100 & 42199.0 & 43760.7 & +1561.7 \\
OPT   & V100 & 43144.9 & 44263.9 & +1119.0 \\
BERT  & A100 & 57117.4 & 58756.0 & +1638.7 \\
GPT-2 & A100 & 55351.4 & 57787.8 & +2436.5 \\
OPT   & A100 & 54468.1 & 57853.1 & +3385.0 \\ \bottomrule
\end{tabular}
\end{table}

All measurements were performed with a fixed batch size of 2 and input/output sequence length of 256 tokens. To ensure stable timing, we first performed 50 warm-up inferences, followed by 50 measured inferences, and repeated the full process 5 times for each configuration. Total inference time was measured using Python’s \texttt{time.time()}.

As shown in Table~\ref{table:avg-time-hardware} and Table~\ref{table:avg-throughput-hardware}, the proposed method consistently reduces inference latency by 2.5\%--5.9\% and increases throughput by 1,119--3,385 tokens/s across all three architectures and both GPU types. These gains are robust and reproducible, demonstrating that the acceleration is not specific to a particular model family or hardware platform.

\subsubsection{Scaling Ability of Folding Technology}

\paragraph{Batch Size}
To assess scalability with respect to batch size, we conduct additional experiments on BERT using a single NVIDIA V100-32GB GPU with a fixed sequence length of 256. Batch sizes are varied from 4 to 128 (powers of two). The same measurement protocol is used: 50 warm-up inferences, followed by 50 measured inferences, repeated 5 times, with results averaged.

\begin{table}[htbp]
\caption{Average inference latency (seconds) of BERT-base on a single V100 GPU (sequence length 256) across varying batch sizes. Results are averaged over 5 runs $\times$ 50 measured inferences after 50 warm-up steps.}
\label{table:scale-bs-time}
\vspace{1ex}
\centering
\begin{tabular}{@{}rccc@{}}
\toprule
Batch Size & Original Inference Time (s) & New Inference Time (s) & Acceleration (\%) \\ \midrule
4   & 0.0201 & 0.0193 & 3.99 \\
8   & 0.0360 & 0.0347 & 3.64 \\
16  & 0.0693 & 0.0675 & 2.58 \\
32  & 0.1339 & 0.1269 & 5.25 \\
64  & 0.2557 & 0.2418 & 5.45 \\
128 & 0.5092 & 0.4763 & 6.45 \\ \bottomrule
\end{tabular}
\end{table}

\begin{table}[htbp]
\caption{Average inference throughput (tokens/second) of BERT-base on a single V100 GPU (sequence length 256) across varying batch sizes. Results are averaged over 5 runs, each with 50 measured inferences after 50 warm-up steps.}
\label{table:scale-bs-throughput}
\vspace{1ex}
\centering
\begin{tabular}{@{}rccc@{}}
\toprule
Batch Size & Original Throughput (tokens/s) & New Throughput (tokens/s) & $\Delta$ (tokens/s) \\ \midrule
4   & 51029.1 & 53147.9 & +2118.8 \\
8   & 56920.5 & 59071.2 & +2150.7 \\
16  & 59148.0 & 60715.7 & +1567.6 \\
32  & 61185.0 & 64577.2 & +3392.1 \\
64  & 64072.1 & 67761.8 & +3689.8 \\
128 & 64357.4 & 68792.8 & +4435.4 \\ \bottomrule
\end{tabular}
\end{table}

As shown in Table~\ref{table:scale-bs-time} and Table~\ref{table:scale-bs-throughput}, the proposed method maintains stable and consistent performance gains across the full batch-size range. Latency reduction ranges from 2.58\% to 6.45\%, while the absolute throughput gain increases progressively with batch size, reaching +4,435 tokens/s at batch size 128. These results confirm that the acceleration mechanism scales effectively with batch size and does not degrade under higher memory or compute intensity.

\subsubsection{Scaling with Sequence Length}

\paragraph{Sequence Length}
To evaluate how the gains evolve with sequence length, we conduct experiments on BERT, GPT-2, and OPT using a single NVIDIA A100-40GB GPU with a fixed batch size of 4. The input/output sequence length is varied from 64 to 16,384 tokens. All measurements follow the same protocol: 50 warm-up inferences, 50 measured inferences, repeated 5 times, with results averaged across runs.

\begin{table}[htbp]
\caption{Average inference throughput (tokens/second) of BERT-base on a single A100 GPU (batch size 4) across increasing sequence lengths.}
\label{table:scale-sl-throughput}
  \vspace{1ex}
\centering
\begin{tabular}{@{}rccc@{}}
\toprule
Seq Length & Origin Throughput (tokens/s) & New Throughput (tokens/s) & $\Delta$ (tokens/s) \\ \midrule
64    & 33464.1 & 34133.3 & +669.3   \\
256   & 67590.8 & 72572.6 & +4981.9  \\
1024  & 69072.5 & 77283.0 & +8210.5  \\
4096  & 46579.5 & 57115.6 & +10536.1 \\
16384 & 19349.4 & 25782.3 & +6432.9  \\ \bottomrule
\end{tabular}
\end{table}

\begin{table}[htbp]
\caption{Absolute throughput improvement (tokens/second) achieved by the proposed method across sequence lengths (batch size 4, single A100 GPU).}
\label{table:scale-sl-throughput-model}
\vspace{1ex}
\centering
\begin{tabular}{@{}lccccc@{}}
\toprule
Model & 64 & 256 & 1024 & 4096 & 16384 \\ \midrule
BERT  & +669.3  & +4981.9 & +8210.5 & +10536.1 & +6432.9 \\
GPT-2 & +1193.0 & +4890.3 & +6235.8 & +8806.8  & +8182.2 \\
OPT   & +4697.6 & +3639.8 & +4145.0 & +5558.0  & +4630.1 \\ \bottomrule
\end{tabular}
\end{table}

As shown in Table~\ref{table:scale-sl-throughput}, Table~\ref{table:scale-sl-throughput-model}, and Table~\ref{table:scale-sl-time}, the proposed folding technique exhibits strong positive scaling with sequence length. On BERT, the absolute throughput improvement grows from +669 tokens/s at sequence length 64 to +10,536 tokens/s at 4096, and remains substantial (+6,433 tokens/s) even at 16,384 tokens. Correspondingly, the relative speedup also increases with sequence length, rising from 1.96\% at 64 tokens to 24.95\% at 16,384 tokens.

\begin{table}[hbp]
\caption{Relative inference speedup achieved by the proposed method across sequence lengths (batch size 4, single A100 GPU).}
\label{table:scale-sl-time}
\vspace{1ex}
\centering
\begin{tabular}{@{}lccccc@{}}
\toprule
Model & 64 & 256 & 1024 & 4096 & 16384 \\ \midrule
BERT  & 1.96\%  & 6.86\% & 10.62\% & 18.45\% & 24.95\% \\
GPT-2 & 4.23\%  & 6.65\% & 8.50\%  & 13.88\% & 21.47\% \\
OPT   & 14.13\% & 5.09\% & 5.17\%  & 8.04\%  & 11.51\% \\ \bottomrule
\end{tabular}
\end{table}

These results show that our method becomes increasingly effective as sequences grow longer, making it particularly valuable for modern long-context applications such as document-level reasoning, code generation, and retrieval-augmented systems. In contrast, as shown in Table~\ref{table:scale-bs-time} and Table~\ref{table:scale-bs-throughput}, the acceleration remains relatively stable across batch sizes at fixed sequence length, confirming complementary scaling behavior along the batch and sequence dimensions.

\paragraph{Model Size}
To examine robustness across substantially different model scales, we further evaluate GPT-J-6B~\citep{wang2021gptj}, a 6-billion-parameter decoder-only Transformer that is approximately $48\times$ larger than GPT-2 (124M parameters) and $17\times$ larger than BERT (340M parameters) in non-embedding parameter count. All experiments are conducted on a single NVIDIA A100-40GB GPU following the same measurement protocol as above.

\begin{table}[htbp]
\caption{Average inference latency of GPT-J-6B on a single A100-40GB GPU. The proposed method achieves 4.2\%--7.7\% latency reduction, with larger gains at longer sequences. OOM denotes out-of-memory.}
\label{tab:gptj-latency}
\vspace{1ex}
\centering
\begin{tabular}{@{}rrccc@{}}
\toprule
Batch Size & Seq Length & Original Inference Time (s) & New Inference Time (s) & Acceleration (\%) \\ \midrule
4  & 64   & 0.2030 & 0.1931 & 4.89 \\
4  & 256  & 0.7840 & 0.7453 & 4.94 \\
4  & 1024 & 3.0976 & 2.8637 & 7.55 \\
8  & 64   & 0.4187 & 0.4010 & 4.22 \\
8  & 256  & 1.5709 & 1.4666 & 6.64 \\
8  & 1024 & 6.2038 & 5.7415 & 7.45 \\
16 & 64   & 0.7793 & 0.7409 & 4.92 \\
16 & 256  & 2.9893 & 2.7601 & 7.67 \\
16 & 1024 & OOM    & OOM    & \textbackslash{} \\ \bottomrule
\end{tabular}
\end{table}

\begin{table}[htbp]
\caption{Inference throughput (tokens/second) of GPT-J-6B under the same settings as Table~\ref{tab:gptj-latency}. The proposed method consistently improves throughput by 53--114 tokens/s, with the largest absolute gains appearing at longer sequences.}
\label{tab:gptj-throughput}
\vspace{1ex}
\centering
\begin{tabular}{@{}rrccc@{}}
\toprule
Batch Size & Seq Length & Original Throughput (tokens/s) & New Throughput (tokens/s) & $\Delta$ \\ \midrule
4  & 64   & 1260.9 & 1325.7 & +64.8  \\
4  & 256  & 1306.1 & 1374.0 & +67.9  \\
4  & 1024 & 1322.3 & 1430.3 & +108.0 \\
8  & 64   & 1223.0 & 1276.8 & +53.8  \\
8  & 256  & 1303.8 & 1396.4 & +92.7  \\
8  & 1024 & 1320.5 & 1426.8 & +106.3 \\
16 & 64   & 1314.1 & 1382.0 & +68.0  \\
16 & 256  & 1370.2 & 1484.0 & +113.8 \\
16 & 1024 & OOM    & OOM    & \textbackslash{} \\ \bottomrule
\end{tabular}
\end{table}

Table~\ref{tab:gptj-latency} and Table~\ref{tab:gptj-throughput} report inference latency and throughput for representative $(\text{batch size}, \text{sequence length})$ configurations. Even on this substantially larger model, our method continues to deliver consistent and meaningful speedup. Relative latency reduction ranges from 4.2\% to 7.7\% across all in-memory configurations. The gains increase with sequence length, following the same ``longer-is-better'' trend observed on smaller models. Absolute throughput improvement reaches +108--114 tokens/s at sequence length 1024, which becomes substantial when amortized over large-scale generation workloads.

These results confirm that the acceleration mechanism remains effective when moving from hundred-million-scale models (GPT-2, BERT, OPT) to multi-billion-parameter models, with no clear sign of diminishing returns.

\begin{table}[htbp]
\caption{Relative speedup comparison between GPT-2 (124M) and GPT-J-6B (6B) on identical hardware and settings (single A100-40GB GPU, batch size 4).}
\label{tab:gptj-comparison}
\vspace{1ex}
\centering
\begin{tabular}{@{}lccc@{}}
\toprule
Model & Parameters & SeqLen=256 Speedup & SeqLen=1024 Speedup \\ \midrule
GPT-2 & 124M & 6.65\% & 8.50\% \\
GPT-J & 6B   & 4.94\% & 7.55\% \\ \bottomrule
\end{tabular}
\end{table}

Moreover, comparing GPT-2 and GPT-J shows that even when the parameter count increases by nearly $50\times$, the relative speedup remains comparable (roughly 5\%--8\%), demonstrating strong scalability with model size.

\section{Empirical Experiments}

Consider a sample $\rvx$, with dimension $d$. We have the equation of LN and RMS\-Norm as follows:
\begin{equation}
\label{eqn:ln-sim}
    \mathrm{LN}(\rvx)=\frac{\rvx-\mu}{\sqrt{\sigma^2+\epsilon}},\ \mathrm{where}\ \mu = \frac{1}{d}\sum_{i=1}^dx_i\ \mathrm{and}\ \frac{1}{d}\sum_{i=1}^d(x_i - \mu)^2.
\end{equation}

\begin{equation}
    \label{eqn:rms-sim}
    \mathrm{RMS}(\rvx)=\frac{\rvx}{\sqrt{\sigma_{rms}^2+\epsilon}},\ \mathrm{where}\ \sigma_{rms}^2 = \frac{1}{d}\sum_{i=1}^d x_i ^2.
\end{equation}

It is worth mentioning that we compare custom-PyTorch module implementations of RMS\-Norm and LayerNorm (implemented according to Eqn.\ref{eqn:ln-sim} and Eqn.\ref{eqn:rms-sim}), both without CUDA kernels to isolate algorithmic speedup.

\subsection{Text Classification Task}

\begin{figure}[hbp]
    \centering
    \begin{subfigure}[t]{0.48\textwidth}
        \centering
        \includegraphics[height=25ex]{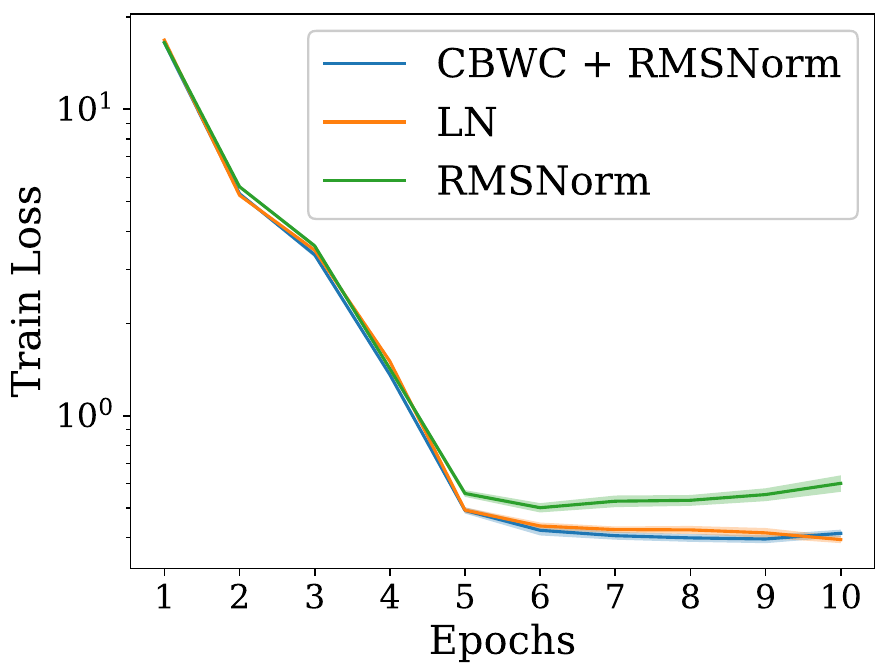}
        \caption{Training loss.}
    \end{subfigure}   
    \begin{subfigure}[t]{0.30\textwidth}
        \centering
        \includegraphics[height=25ex]{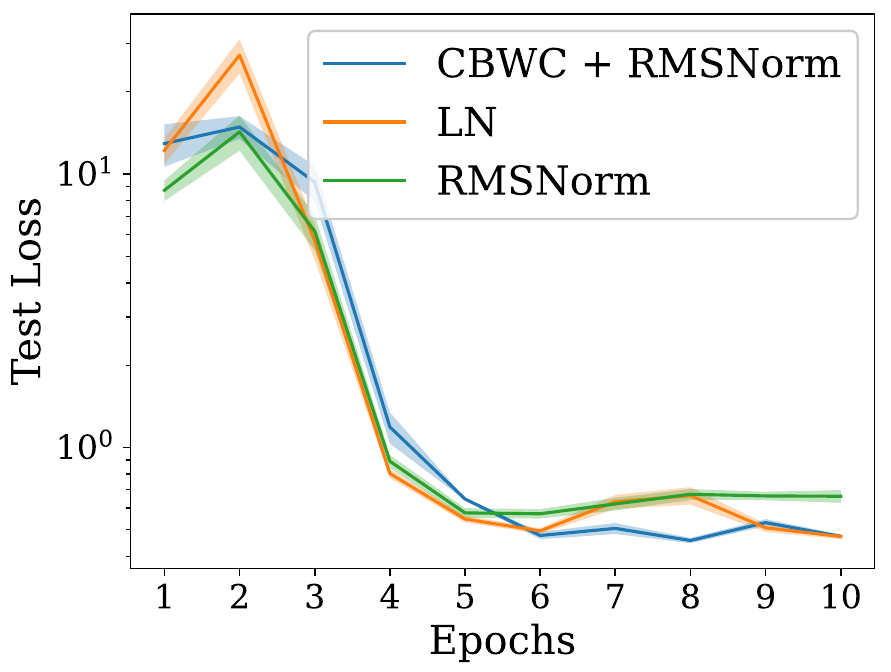}
        \caption{Test loss.}
    \end{subfigure}   
    \vspace{-1ex}
    \caption{Performance of transformer models for text classification task of Transformer on the AG News dataset. The results are averaged over 5 random seeds with shaded regions indicating standard deviation}
    \label{fig:acc-text-classification}
    \vspace{-2ex}
\end{figure}

The experiment is conducted on a 3090Ti. The average results are reported in Table~\ref{tab:text-classification-performace} and Figure~\ref{fig:acc-text-classification}. The best results are highlighted in bold, and the worst are shown in gray.

\begin{table}[htp]
  \centering
  \caption{Performance of Transformer on text classification.}
  \label{tab:text-classification-performace}
  \begin{tabular}{@{}lcc@{}}
  \toprule
  Method   & Test Loss                     & Test Acc (\%)                      \\ \midrule
  LN       & $\mathbf{0.472 \pm 0.010}$   & $\mathbf{85.21 \pm 0.35}$  \\
  RMS      & \textcolor[gray]{0.6}{$0.663 \pm 0.037$} & \textcolor[gray]{0.6}{$76.57 \pm 1.77$} \\
  CBWC+RMS & $0.473 \pm 0.007$            & $85.12 \pm 0.23$           \\ \bottomrule
  \end{tabular}
  \vspace{-2ex}
\end{table}

\subsection{Image Classification}
\label{apx:swin}

For the Imagenet100, we select 100 classes from Imagenet1k~\citep{deng2009imagenet} according to the given classes in~\citep{tian2019contrastive}. 
We chose SWIN-T for this experiment and trained on a single 3090. 
We apply the AdamW optimizer with a learning rate of $10^{-4}$ and a batch size of 128.
Here we list the top 1 and top 5 accuracy and loss for both test and training in Table~\ref{swin}. We have the best results in bold and the worst results in gray.

\begin{table}[thpb]
\vspace{-3ex}
\caption{Average training results (mean $\pm$ std) under 3 random seeds for SWIN on ImageNet100.}
\label{swin}
\centering
\begin{tabular}{lccc}
\toprule
Model & TrainAcc@1 (\%) & TrainAcc@5 (\%) & TrainLoss \\
\midrule
LN       & \(\mathbf{99.397 \pm 0.003}\) & \(99.919 \pm 0.010\) & \(\mathbf{0.0254 \pm 0.0002}\) \\
RMS      & \(99.376 \pm 0.004\) & \(99.913 \pm 0.001\) & \(0.0258 \pm 0.0004\) \\
CBWC+RMS & \(99.382 \pm 0.032\) & \(\mathbf{99.922 \pm 0.005}\) & \(0.0257 \pm 0.0009\) \\
\bottomrule
\end{tabular}
\end{table}

We use \texttt{time.time()} to trace the time usage. This strategy is consistent with practices in prior efficiency-focused works \cite{jiang2023pre}, where authors also compared unoptimized module versions when no fast official RMS\-Norm existed.

Moreover, to verify the training stability of our method, we applied different learning rates on the models. We trained 70 epochs for each group.

\begin{figure}[hbp]
    \vspace{-1ex}
    \centering
    \begin{subfigure}[t]{0.48\textwidth}
        \centering
        \includegraphics[height=25ex]{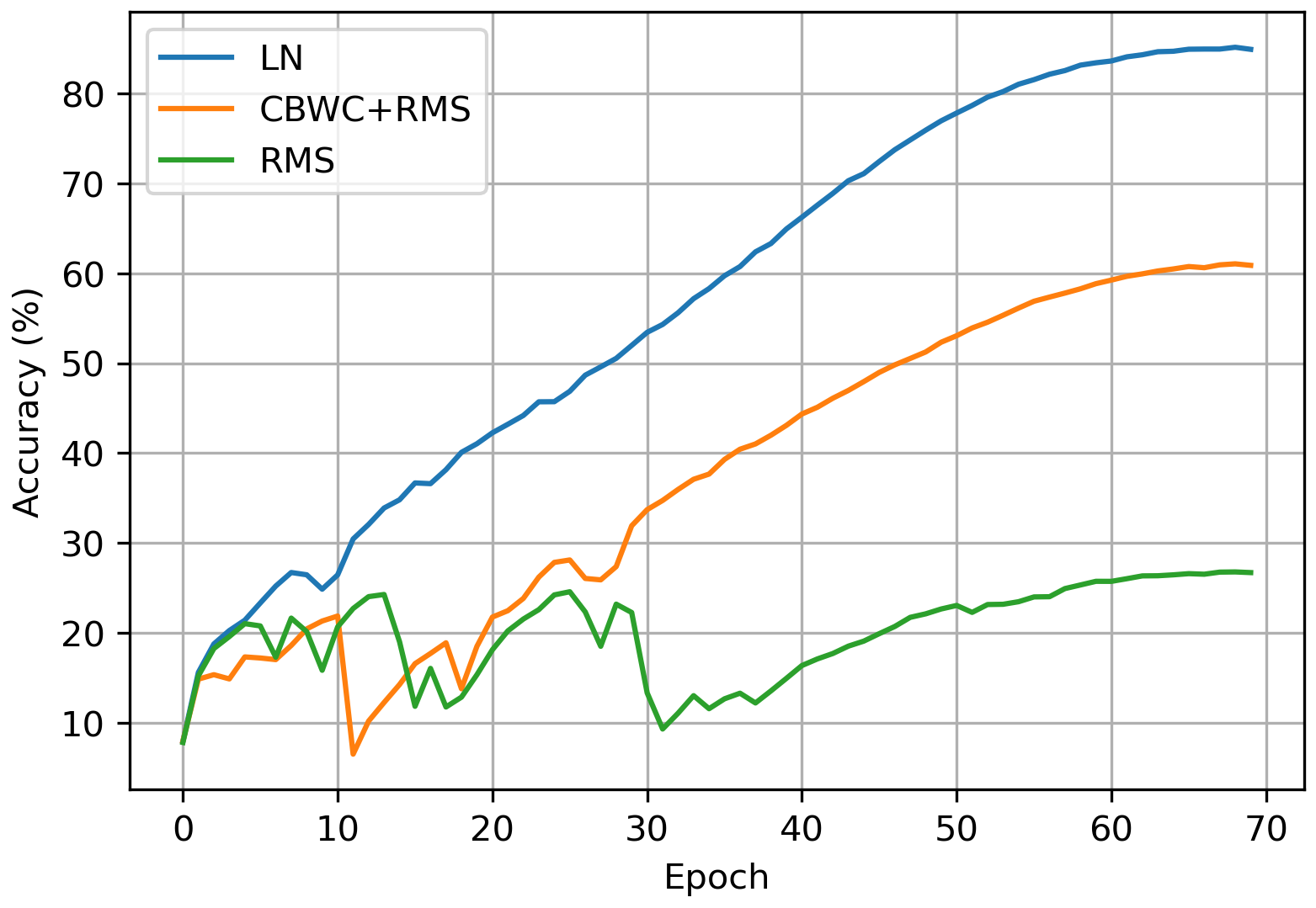}
        \caption{Train Accuracy.}
    \end{subfigure}   
    \begin{subfigure}[t]{0.30\textwidth}
        \centering
        \includegraphics[height=25ex]{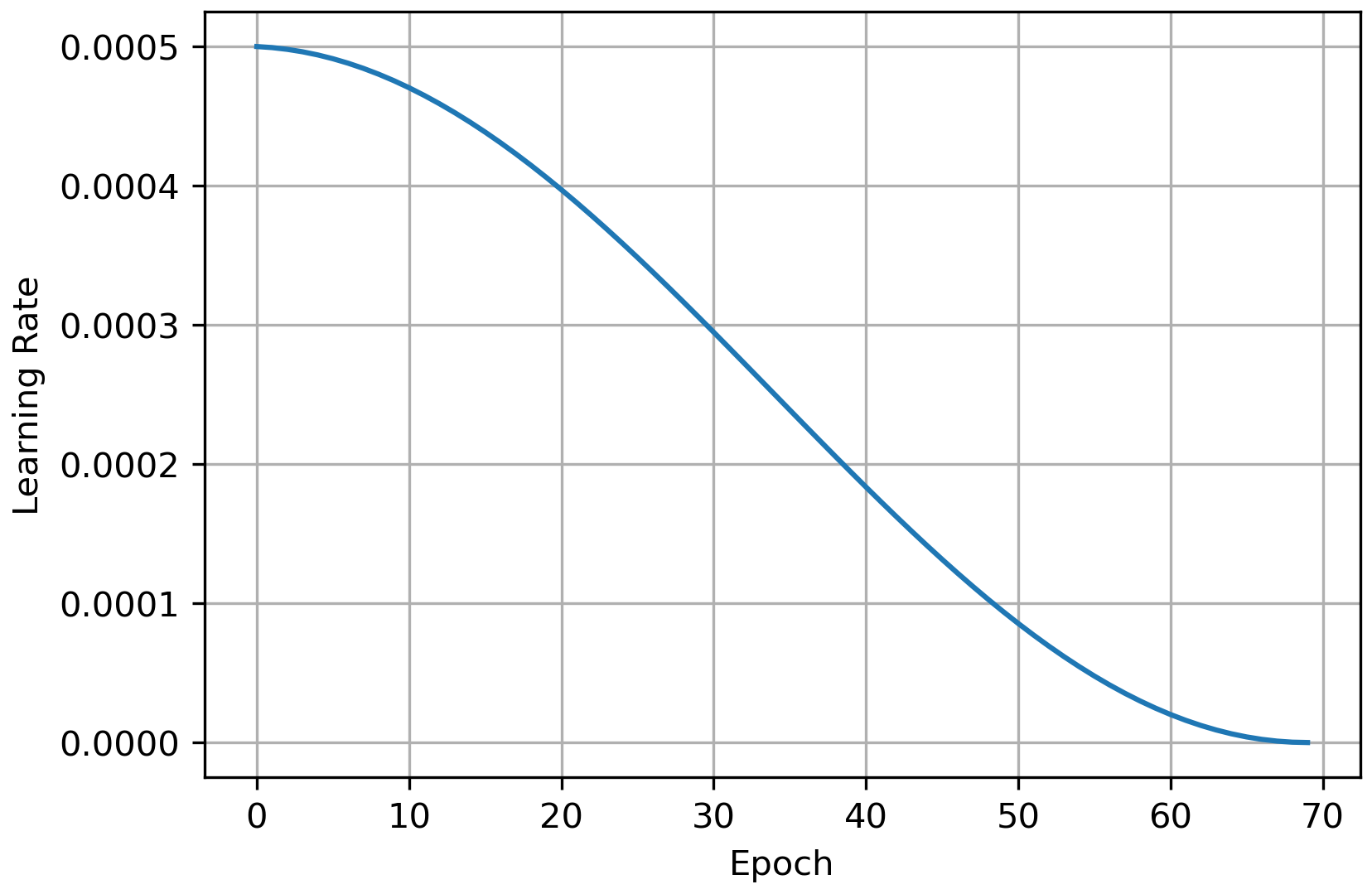}
        \caption{Learning Rate.}
    \end{subfigure}   
\vspace{-1ex}
\caption{Train accuracy of three models and learning rate setting of SWIN transformer on Imagenet100 under high learning rate setting.}
\label{fig:stability-high}
\end{figure}

Under a higher learning rate ($10^{-3}$), CBWC+RMS performs between the LN and RMSNorm variants. As shown in Figure~\ref{fig:stability-high}, with the decline of the learning rate, our method shows better stability as it can adapt to a higher learning rate. We also notice a more inclined accuracy curve, which is the evidence of faster convergence.

\begin{figure}[hbp]
    \centering   
    \begin{subfigure}[t]{0.48\textwidth}
        \centering
        \includegraphics[height=25ex]{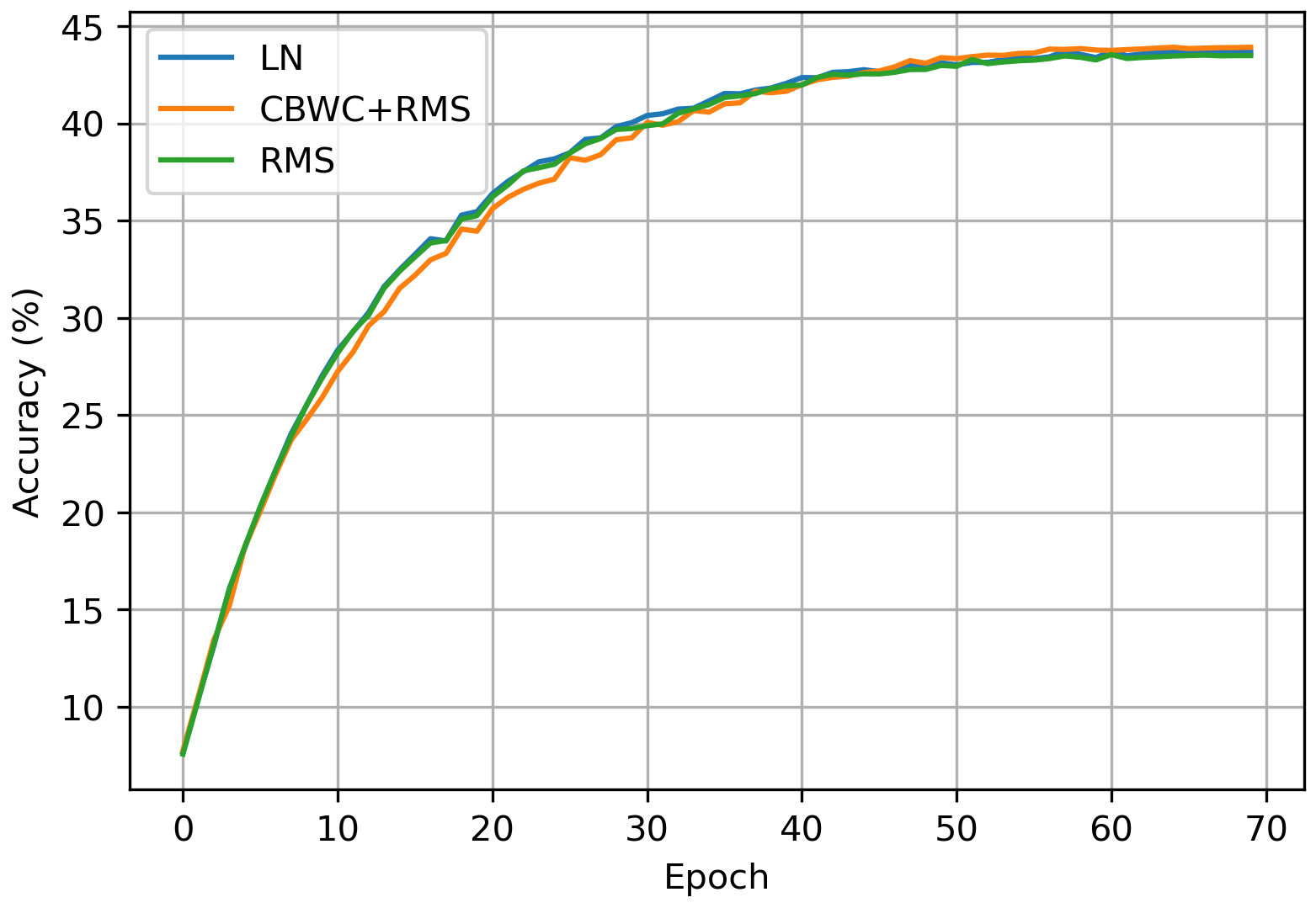}
        \caption{Test Accuracy.}
            \label{fig:stable-small-test-acc}
    \end{subfigure}   
    \begin{subfigure}[t]{0.30\textwidth}
        \centering
        \includegraphics[height=25ex]{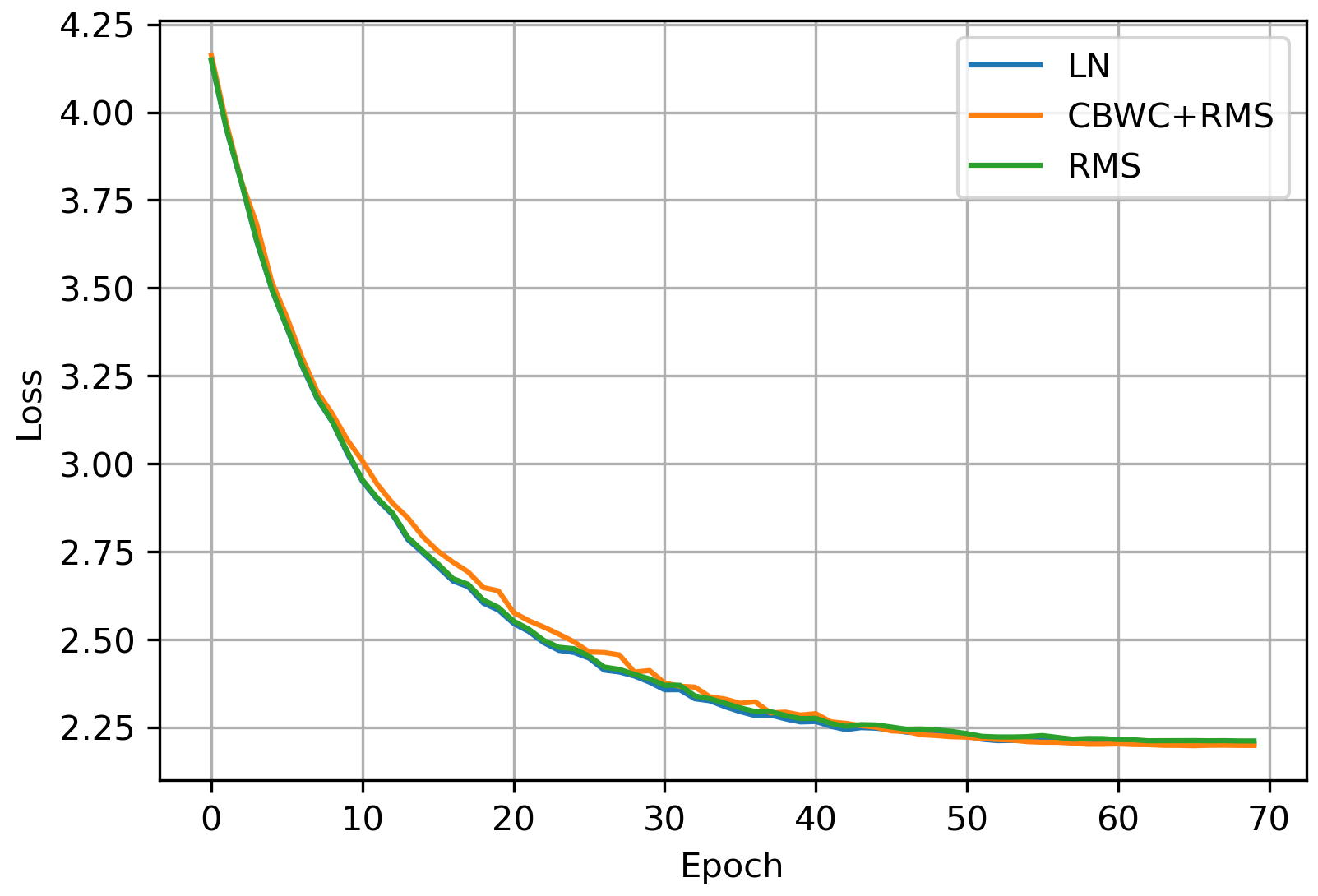}
        \caption{Test Loss.}
                    \label{fig:stable-small-test-loss}
    \end{subfigure}
    \vspace{-1ex}
    \caption{Performance of the three models in SWIN transformer on Imagenet100 under small learning rate.}
\end{figure}

Under a smaller learning rate ($10^{-5}$), the accuracy and loss of LN variant and RMS\-Norm  are almost the same, while our method has different performance. Our method shows slightly better generalization performance than the variants using only RMSNorm or LN, with slightly higher test accuracy and lower test loss (as shown in Figure~\ref{fig:stable-small-test-acc} and Figure~\ref{fig:stable-small-test-loss}).
\subsection{Text Generation}
We also conduct a text generation experiment using GPT-3, and report the results in Table~\ref{table:performace-gpt2}.

\begin{table}[htbp]
\centering
\caption{\centering Performance of GPT-3 on WikiText-103.}
\label{table:performace-gpt2}
\begin{tabular}{lcccc}
\toprule
Method   & Train Loss    & Train PPL  & Test Loss & Test PPL   \\ \hline
LN       & 6.24          & 513        & 6.0598    & 428.30   \\
RMS      & \textcolor[gray]{0.6}{6.32}          & \textcolor[gray]{0.6}{557} & \textcolor[gray]{0.6}{6.1311}    & \textcolor[gray]{0.6}{459.94}          \\
CBWC+RMS & \textbf{6.18} & \textbf{484}  & \textbf{6.0144}    & \textbf{409.28} \\ \bottomrule
\end{tabular}
\end{table}

\subsection{Verification Experiment for Fine-tuning on Pre-trained Model}

\subsubsection{Experiment Setting for MLP}
\label{sec:mlp-verify}
We verify that the CBWC+RMS\-Norm and original LN training scheme are identical in engineering by the following experiment.

We perform a classification task on CIFAR-10 \cite{Krizhevsky2009CIFAR10}. The MLPs have a depth of 6 and a width of 256. We train the models for 40 epochs with a learning rate of 0.01 and a batch size of 256, using a constant seed. The proxy parameter $W_A$ and the original weight matrix $W_B$ differ by less than $10^{-5}$. As shown in Figure~\ref{fig:comparison-of-w}, the difference between the two parameter sets is negligible and can be attributed to numerical error. We therefore conclude that the two models follow indistinguishable optimization processes and can be inter-converted at any point during training without affecting the outcome.

\begin{figure}[h]
    \vspace{-1ex}
    \centering
    \begin{subfigure}[t]{0.30\textwidth}
        \centering
         \includegraphics[height=25.25ex]{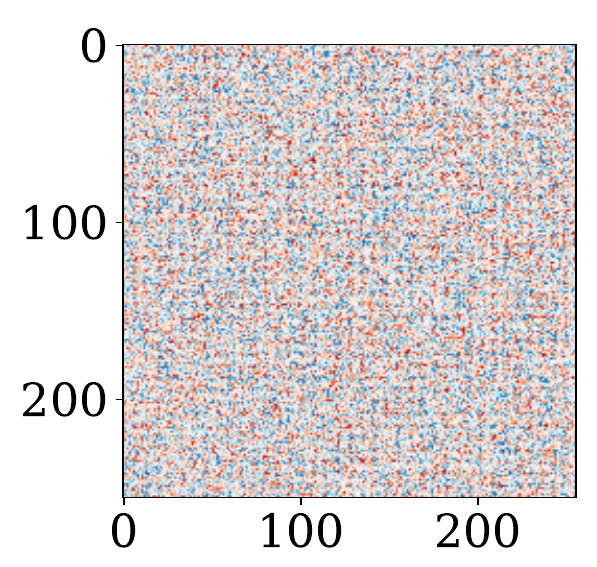}
        \caption{$\mW_A$}
    \end{subfigure}   
    \begin{subfigure}[t]{0.30\textwidth}
        \centering
        \includegraphics[height=25ex]{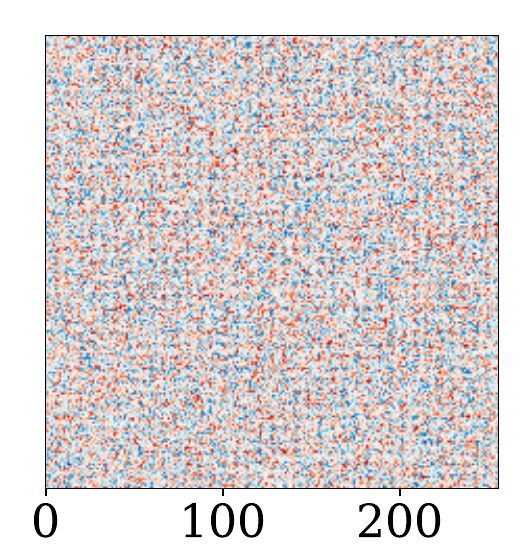}
        \caption{$\mW_B$}
    \end{subfigure}   
    \begin{subfigure}[t]{0.30\textwidth}
        \centering
        \includegraphics[height=25ex]{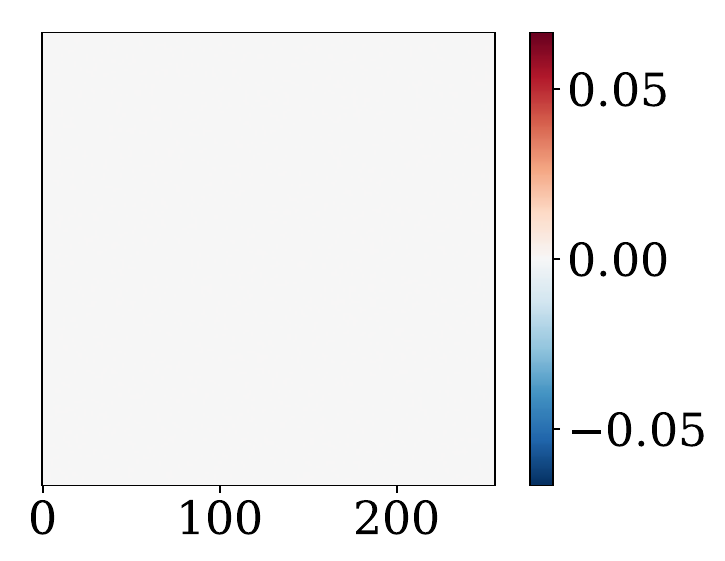}
        \caption{$\mW_A-\mW_B$}
    \end{subfigure}   
    \vspace{-1ex}
    \caption{Comparison of $\mW_A$ and $\mW_B$.}
    \label{fig:comparison-of-w}
    \vspace{-2ex}
\end{figure}

\subsubsection{Experiment Setting for Transformer}
\label{app:expcom}
We pre-train the model with a learning rate of $5\times10^{-4}$ and an effective batch size of 128 (batch size is 2 and gradient accumulation is 64). 
For fine-tuning, we have a learning rate of $5\times10^{-5}$ and an effective batch size of 16 (batch size is 2 and gradient accumulation is 8).

\section{Open-source Code}
To facilitate verification and further experimentation, we release the complete open-source implementation, including the fully automatic folding tool and optimized RMS\-Norm kernels: \url{https://github.com/BobYue-01/Enjoy-LN}.

\end{document}